\newtheorem{theorem}{Theorem}
\newtheorem{lemma}{Lemma}
\newtheorem{fact}{Fact}
\newtheorem{remark}{Remark}
\newtheorem{assumption}{Assumption}
\newcommand{\abs}[1]{\left\lvert #1 \right\rvert}
\newcommand{\norm}[1]{\left\lVert #1 \right\rVert}
\newcommand{\brk}[1]{\left[ #1 \right]}
\newcommand{\cbrk}[1]{\left\{ #1 \right\}}
\newcommand{\prt}[1]{\left( #1 \right)}
\newcommand{\tr}[1]{\textcolor{red}{#1}}
\newcommand{\cF}{\mathcal{F}}
\newcommand{\cM}{\mathcal{M}}
\newcommand{\cS}{\mathcal{S}}
\newcommand{\cA}{\mathcal{A}}
\newcommand{\cO}{\mathcal{O}}
\newcommand{\sA}{\mathscr{A}}
\newcommand{\sD}{\mathscr{D}}
\newcommand{\bP}{\mathbb{P}}
\newcommand{\bE}{\mathbb{E}}
\newcommand{\bR}{\mathbb{R}}
\newcommand{\bI}{\mathbb{I}}
\newcommand{\re}[1]{\text{Regret}}
\title{Regret Bounds for Risk-sensitive Reinforcement Learning with Lipschitz Dynamic Risk Measures}
\author{%
  Hao Liang
    \\
  School of Science and Engineering\\
  The Chinese University of Hong Kong, Shenzhen\\
  \texttt{217019008@link.cuhk.edu.cn} \\
\And
  Zhi-quan Luo
    \\
  School of Science and Engineering\\
  The Chinese University of Hong Kong, Shenzhen\\
  \texttt{luozq@cuhk.edu.cn} \\
}
\begin{document}

\maketitle

\begin{abstract}
We study finite episodic Markov decision processes incorporating dynamic risk measures  to capture risk sensitivity. To this end, we present two model-based algorithms applied to  \emph{Lipschitz} dynamic risk measures, a wide range of risk measures that subsumes spectral risk measure, optimized certainty equivalent, distortion risk measures among others. We establish both regret upper bounds and lower bounds. Notably, our upper bounds demonstrate optimal dependencies on the number of actions and episodes, while  reflecting the inherent trade-off between risk sensitivity and sample complexity. Additionally, we substantiate our theoretical results through numerical experiments.
\end{abstract}

\section{Introduction}
Standard reinforcement learning (RL) aims to identify an optimal policy that maximizes the expected return \citep{sutton2018reinforcement}. This approach is commonly known as risk-neutral RL since it prioritizes the mean value of the uncertain return. However, in domains characterized by high-stakes scenarios, such as finance \citep{davis2008risk,bielecki2000risk}, medical treatment \citep{ernst2006clinical}, and operations research \citep{delage2010percentile}, decision-makers exhibit risk-sensitive behavior and strive to optimize a risk measure associated with the return.

One classical framework that addresses risk sensitivity in Markov decision processes (MDPs) is the \emph{static risk measure}. In this framework, the value of a policy is defined as the static risk measure applied to the cumulative reward across all stages.  Among the commonly used static risk measures are the entropic risk measure (ERM)\cite{howard1972risk,follmer2011entropic} and the conditional value at risk (CVaR) \cite{rockafellar2000optimization}, along with several others. However,  except for the ERM, the static risk measure generally does not satisfy the Bellman equation. Consequently, obtaining the optimal policy becomes computationally challenging, even when the MDP model is known.

As an extension of the static risk measure, the \emph{dynamic risk measure} (DRM) \cite{ruszczynski2010risk} is constructed by recursively applying the risk measure to the reward at each stage. This recursive formulation naturally allows for the derivation of a dynamic programming equation and thus circumvents the computational burden. Furthermore, DRMs have the advantage of yielding time-consistent optimal policies, a property that is particularly justified in financial applications \cite{osogami2012iterated}. By ensuring time consistency, DRMs provide a more robust framework for decision-making in safety-critical applications, such as clinical treatment, where risk sensitivity at all stages is of paramount importance \cite{duprovably}.

Our work focuses on studying risk-sensitive reinforcement learning (RSRL) with a general DRM in the tabular and episodic MDP setting, in which the agent interacts with an unknown MDP with finite states and actions in an episodic manner. We make the mildest assumption that the risk measure used is Lipschitz continuous with respect to certain metric, which we refer to as the Lipschitz DRM. The Lipschitz risk measure encompasses a wide range of classes of  risk measures in practical applications, including spectral risk measure (SRM), distortion risk measure, and optimized certainty equivalent (OCE), among others. Additionally, the Lipschitz DRM is also a broader class of risk measures compared to convex and coherent measures since any finite convex risk measure satisfies the Lipschitz property \cite{follmer2013convex}. As a result, our framework encompasses various RL settings, such as risk-neutral RL, RSRL with ERM, RSRL with dynamic CVaR, and RL with dynamic OCE \footnote{Further details on these RL settings can be found in Section \ref{sec:lit}}.

The use of Lipschitz risk measures introduces additional technical challenges that need to be addressed. One such challenge arises in the algorithmic design phase when designing exploration bonuses for generic nonlinear risk measures. The standard techniques commonly used in risk-neutral settings, such as Hoeffding inequality or Bernstein-type concentration bounds, are not directly applicable as they only deal with the concentration of mean values. To overcome this issue, previous works, such as  \cite{duprovably} and  \cite{xu2023regret}, design the exploration bonus based on specific properties of the risk measures they consider. For instance, \cite{duprovably} chooses the exploration bonus for dynamic CVaR based on a classical concentration bound specific to CVaR, while \cite{xu2023regret} exploits the optimization representation of OCE and uses the concavity of the utility function to construct the bonus for OCE. We exploit the Lipschitz property of the risk measure to relate the value difference to the supremum distance and then apply DKW inequality to bound the deviation between one distribution and its empirical version.

Another challenge arises in deriving a recursion of the suboptimality gap across stages in the proof of regret upper bounds. The standard analysis in the risk-neutral setting relies on the linearity of the mean to obtain the recursion, which cannot be directly adapted to the risk-sensitive setting. To tackle this challenge, we leverage the Lipschitz property  to establish a relationship between the value difference and the Wasserstein distance between two probability distributions. Specifically, we bound the Wasserstein distance between two probability mass functions (PMFs) with identical probability mass but different support by the expected difference between their supports. By incorporating the Lipschitz property, we obtain a recursion of the suboptimality gap, where the Lipschitz constant appears as a multiplicative factor. By addressing these challenges, we are able to design efficient algorithms and provide regret upper bounds.

We summarize our main contributions in this paper as follows:

\textbf{1.} We propose two model-based algorithms for RSRL with Lipschitz DRM. These algorithms incorporate the principle of optimism in the face of uncertainty (OFU) in different ways to facilitate efficient learning. To the best of our knowledge, this is the first work that investigates  RSRL using general DRM without making the simulator assumption.

\textbf{2.} We provide  worst-case and gap-dependent  regret upper bounds for the proposed algorithms. 
Notably, the regret bounds are optimal in terms of the number of actions ($A$), and the number of episodes ($K$). They are dependent on the product of the Lipschitz constants of the risk measures at all stages, capturing the inherent trade-off between risk sensitivity and sample complexity. 

\textbf{3.} We establish the minimax and gap-dependent lower bounds for episodic MDPs with general DRM. These lower bounds are tight  in terms of $A$, $K$, and the number of states ($S$). Moreover, they reveal a constant factor that depends on the specific risk measure employed.
\subsection{Related Work}
\label{sec:lit}
\paragraph{RSRL without regret bounds.} 
General DRM applied to MDP  is presented in  \cite{ruszczynski2010risk,shen2013risk,chu2014markov,asienkiewicz2017note,bauerle2022markov}.  However, these works typically assume that the model of the MDP is known, whereas our paper focuses on studying regret guarantees for RSRL in the presence of an unknown MDP. While there are studies such as \cite{coache2021reinforcement,coache2022conditionally} that explore RSRL with dynamic convex risk measures and dynamic spectral risk measures, respectively, their work does not consider regret guarantees.

\paragraph{Regret bounds for RSRL with static risk measures.} \cite{fei2020risk} provide the first regret bound for risk-sensitive tabular MDPs using the ERM. This result is further improved upon by  \cite{fei2021exponential}, where they remove the exponential factor dependence on the episode length.  \cite{fei2022cascaded}  present the first gap-dependent regret bounds  under this framework.  \cite{liang2022bridging} propose distributional reinforcement learning  algorithms for RSRL with ERM, matching the results obtained in \cite{fei2021exponential}. \cite{bastani2022regret} consider RSRL with the objective of the spectral risk measure, where conditional value at risk (CVaR) is a special case. 
Furthermore, \cite{wang2023near} improve upon the regret bound obtained in \cite{bastani2022regret} in terms of the number of states and episode length.

\paragraph{Regret bounds for RSRL with DRMs.}  \cite{duprovably} provides the first regret bound for RSRL using DRMs, specifically focusing on dynamic conditional value at risk (CVaR). A very recent  work  \cite{xu2023regret} that investigates RSRL using dynamic OCE. OCE is a class of risk measures that encompasses several well-known measures, including ERM, CVaR, and mean-variance. They propose a value iteration algorithm based on the idea of upper confidence bound and derive regret upper bounds for their algorithm, as well as a minimax lower bound.

\cite{lamrisk} focuses on dynamic coherent risk measures in the context of non-linear function approximation. They propose an algorithm that leverages UCB-based value functions with non-linear function approximation and prove a sub-linear regret upper bound. However, their work relies on the assumption of a weak simulator, which allows for generating an arbitrary number of next states from any given state. It remains unclear whether such assumptions can be removed in the tabular setting. In contrast, our work does not rely on specific assumptions about the risk measure estimator or concentration bounds. Additionally, our approach considers a broader class of DRMs by focusing on Lipschitz DRMs, which encompasses a wider range of risk measures compared to coherent ones.

Our paper is organized as follows. We first introduce some backgrounds and problem formulations in Section \ref{sec:pre}. We then propose our algorithms in Section \ref{sec:alg}, which is followed by our main results in Section \ref{sec:main}. The proof sketch of our main theorem in given in Section \ref{sec:pf}. Finally, we conclude our paper in Section \ref{sec:con}. The numerical experiments are shown in Appendix \ref{app:exp}.

\section{Preliminaries}
\label{sec:pre}
\paragraph{Notations.} We write $[N]:=\{1,2,...,N\}$ for any positive integers $N$.  
We use $\mathbb{I}\{\cdot\}$  to denote the indicator function.  We denote by $a\vee b:=\max\{a,b\}$.  We use the notation $\tilde{\mathcal{O}}(\cdot)$ to represent $\mathcal{O}(\cdot)$ with logarithmic factors omitted. For two real numbers $a<b$, the notation $\mathscr{D}([a,b])$ refers to the space of all probability distributions that are bounded over the interval $[a,b]$. For a discrete set $x=\cbrk{x_1,\cdots,x_n}$ and a probability vector $P=(P_1,\cdots,P_n)$, the notation $(x,P)$ represents the discrete distribution where $\bP(X=x_i)=P_i$.
\paragraph{Static risk measure.}
A (static) risk measure is a mapping $\rho:\mathscr{X}\rightarrow\mathbb{R}$ that assigns a real number to each random variable in the set $\mathscr{X}$, which satisfies certain properties of the following. It quantifies the risk  associated with a random outcome. 
\begin{itemize}
    \item  monotonicity:  $X \preceq Y \Rightarrow \rho(X) \leq \rho(Y)$,
    \item  translation-invariance: $\rho(X+c)=\rho(X)+c,\  c\in\mathbb{R}$,
    \item super-additivity: $\rho(X+Y)\ge \rho(X)+\rho(Y)$,
    \item positive homogeneity: $\rho(\alpha X)=\alpha \rho(X)$ for $\alpha\ge0$
    \item concavity: $\rho(\alpha X+(1-\alpha) Y)\ge \alpha \rho(X)+(1-\alpha)\rho(Y)$
    \item  law-invariance: $F_{X}=F_{Y} \Rightarrow \rho(X)=\rho(Y)$.
\end{itemize}
Two intrinsic properties of risk measures are monotonicity and translation-invariance. 
\emph{Coherent} risk measures, introduced by \cite{artzner1999coherent}, are a widely used class of risk measures that satisfy  super-additivity and positive homogeneity in addition. 
Coherent risk measures capture important concepts such as diversification and risk pooling. \emph{Concave} risk measures generalize coherent risk measures by relaxing the requirements of super-additivity and positive homogeneity to concavity. 
Concave risk measures are more flexible and can capture a wider range of risk preferences.

\emph{Lipschitz} risk measures, on the other hand, form an even broader class of risk measures, which encompass both coherent and concave risk measures. They allow for more general functional forms and provide a flexible framework for capturing risk in various settings. Lipschitz risk measures satisfy the law-invariance property, therefore we overload notations and write $\rho(F_X):=\rho(X)$ for $X\sim F_X$.
\paragraph{Lipschitz continuity.} 
For two cumulative distribution functions (CDFs) $F$ and $G$, their supremum distance is defined as
\[ \norm{F-G}_{\infty}\triangleq \sup_{x\in\bR}\abs{F(x)-G(x)}. \]
The Wasserstein distance as well as the $\ell_1$ distance between $F$ and $G$ is defined as
\begin{align*}
    \norm{F-G}_1\triangleq\int_{-\infty}^{\infty}\abs{F(x)-G(x)}dx.
\end{align*}
A risk measure $\rho$ is said to be Lipschitz continuous with respect to a distance  $\norm{\cdot}_p$ ($p=1$ or $p=\infty$) over the set of distributions $\sD([a,b])$  if 
\[ \rho(F)-\rho(G) \leq L_{p}(\rho,[a,b]) \norm{F-G}_p, \forall F,G \in \sD([a,b]).\]
Here, $L_p(\rho,[a,b])$ is the Lipschitz constant associated with the risk measure $\rho$ over the interval $[a,b]$, and it represents the maximum rate of change of the risk measure with respect to the distance metric. The Lipschitz continuity property provides a way to quantify the sensitivity of the risk measure to changes in the underlying distributions. A larger Lipschitz constant indicates a greater sensitivity or variability of the risk measure values with respect to changes in the distributions.

To gain some intuition, we present the Lipschitz constant values for several popular risk measures over the interval $[0,M]$ in Table \ref{tab:lip}. For interested readers, please refer to  Appendix \ref{app:rm} for the formal definitions of the risk measures and more detailed discussions about the Lipschitz constants.
\begin{table}
	\caption{Lipschitz constants of typical risk measures}	
	\label{tab:lip}
	\centering
	\begin{tabular}{ccccc }
		\hline
		Lipschitz constant & CVaR  &distortion risk measure &ERM &OCE\footnote{For convenience, we only consider the OCE with concave utility function} \\ [0.5ex] 
		\hline
		$L_1([0,M])$ & $\frac{1}{\alpha}$      &$\max g'(x)$ & $\exp(|\beta| M)$& $ u'(-M)$ \\ [0.5ex] 
		$L_{\infty}([0,M])$ & $\frac{M}{\alpha}$     &$M\max g'(x)$ & $\frac{\exp(|\beta| M)-1}{|\beta| }$& $u(-M)$\\
		\hline
	\end{tabular}
\end{table} 
\paragraph{Episodic MDP.}
An episodic MDP is defined by a tuple  $\mathcal{M}\triangleq(\mathcal{S},\mathcal{A},(P_h)_{h\in[H]},(r_h)_{h\in[H]},H)$, where $\mathcal{S}$ is the finite state space with cardinality $S\triangleq|\cS|$, $\mathcal{A}$ the finite action space with cardinality $A\triangleq|\cA|$, $P_h:\mathcal{S}\times\mathcal{A}\times\mathcal{S}\rightarrow[0,1]$ the probability transition kernel at step $h$, $r_h:\mathcal{S}\times\mathcal{A}\rightarrow[0,1]$  the  reward functions at step $h$, and $H$  the length of one episode. The agent interacts with the environment for $K$ episodes. At the beginning of episode $k$, an initial state $s^k_1$ is arbitrarily selected. In step $h$, the agent takes action $a^k_h$ based on the state $s^k_h$, according to its policy. The policy is represented by a (deterministic) sequence of functions $\pi=(\pi_h)_{h\in[H]}$, where each $\pi_h$ maps from $\mathcal{S}$ to $\mathcal{A}$. The agent observes the reward $r_h(s^k_h,a^k_h)$ and transitions to the next state $s^k_{h+1}\sim P_h(\cdot|s^k_h,a^k_h)$. The episode terminates at $H+1$, after which the agent proceeds to the next episode. 
\paragraph{Dynamic programming with DRM.}
The dynamic risk measure is defined via a recursive application of static risk measures $(\rho_h)_{h\in[H-1]}$ \cite{ruszczynski2010risk}. 
The (risk-sensitive) value function of a policy $\pi$ at step $h$  is defined recursively
\begin{equation}
    \label{eqt:policy_eval}
    \begin{aligned}
    &Q^{\pi}_h(s_h,a_h)= r_h(s_h,a_h)+\rho_h(V^{\pi}_{h+1}(s_{h+1})) \\
    &V^{\pi}_h(s_h)=Q^{\pi}_h(s_h,\pi_h(s_h)), V^{\pi}_{H+1}(s_{H+1})=0.
\end{aligned}
\end{equation}
where $\rho_h$ is taken over the \emph{next-state value} $V^{\pi}_{h+1}(s_{h+1})$, i.e.,
\[ V^{\pi}_{h+1}(s_{h+1}) \sim \prt{V^{\pi}_{h+1}, P_h(s, a)} \Longrightarrow \rho_h(V^{\pi}_{h+1}(s_{h+1}))=\rho_h\prt{\prt{V^{\pi}_{h+1}, P_h(s, a)}}. \]
We refer to the distribution of $V^{\pi}_{h+1}(s_{h+1})$ as the \emph{(next-state) value distribution} $\prt{V^{\pi}_{h+1}, P_h(s, a)}$. For convenience, we write $\rho(x,P)=\rho((x,P))$, thus we write $\rho_h\prt{V^{\pi}_{h+1}, P_h(s, a)}$. By incorporating the risk measure $\rho_h$ into the recursive formulation, the dynamic risk measure framework provides a way to account for risk preferences and evaluate the risk-sensitive value function of a policy at each time step. When the risk measure $\rho_h$ specializes in the mean (i.e., taking the expectation), Equation \ref{eqt:policy_eval} reduces to the standard Bellman equation.

The (risk-sensitive) optimal policy is defined as the policy that maximizes the value function, i.e., $\pi^*=\arg\max_{\pi} V^{\pi}_1$. Consequently, the optimal value function is defined as $V^*_h(s)=V^{\pi^*}_h(s)$ and $Q^*_h(s,a)=Q^{\pi^*}_h(s,a)$.
\cite{ruszczynski2010risk} shows that an optimal Markovian policy exists, and the optimal value functions can be computed recursively. The Bellman optimality equation is given by
\begin{equation}
    \label{eqt:opt}
    \begin{aligned}
    &Q^{*}_h(s_h,a_h)= r_h(s_h,a_h)+\rho_h(V^{*}_{h+1}, P_h(s_h,a_h)) \\
    &V^{*}_h(s_h)=\max_{a\in \cA} Q^{*}_h(s_h,a_h), V^{*}_{H+1}(s_{H+1})=0.
\end{aligned}
\end{equation}
The optimal policy is the greedy policy with respect to the optimal action-value function, i.e., $\pi^*_h(s)=\arg\max_{a\in \cA} Q^{*}_h(s, a)$.
\paragraph{Regret.}
We define the regret of an algorithm \texttt{alg}   interacting with an MDP  $\mathcal{M}$ for $K$ episodes as
\begin{align*}
    \text{Regret}(\texttt{alg},\mathcal{M},K)&\triangleq \sum_{k=1}^K V^*_1(s^k_1)-V^{\pi^k}_1(s^k_1).
\end{align*}
The regret quantifies the accumulated suboptimality gap of an algorithm  compared to  the optimal policy. It is a random variable due to the randomness  introduced by $\pi^k$. We denote the expected regret by $\mathbb{E}[\text{Regret}(\texttt{alg},\mathcal{M},K)]$. We may omit the notation  $\boldsymbol{\pi}$ and $\mathcal{M}$ when clear from the context.

\section{Algorithm}
\label{sec:alg}
In this section, we present two model-based algorithms that incorporate the OFU principle. They aim to strike a balance between exploration and exploitation during the learning process. Both algorithms belong to the model-based algorithm class as they maintain an empirical model of the environment during the learning process.  We make the following assumption on the DRM, which our algorithms apply to:
\begin{assumption}
\label{asp:rm}
For each $h\in[H]$, $\rho_h$ is Lipschitz continuous with respect to the $\norm{\cdot}_1$ and $\norm{\cdot}_{\infty}$, and  satisfies $\rho_h:\sD([a,b])\rightarrow[a,b]$.
\end{assumption}
\begin{remark}
    The second condition in Assumption \ref{asp:rm} is mild since it is satisfied by common risk measures. For example, it is easy to check that CVaR and ERM satisfies this condition.
 \end{remark}
For simplicity, we drop $\rho$ from the notations and write $L_{p,h}\triangleq L_{p}(\rho_h,[0,H-h])$ for $h\in[H-1]$. For two probability mass functions (PMFs) $P$ and $Q$ with the same support, we overload notations and denote by $\norm{P-Q}_1:=\sum_{i}|P_1-Q_i|$ their $\ell_1$ distance.
\subsection{\texttt{UCBVI-DRM}}
\texttt{UCBVI-DRM} uses the bonus term to ensure optimism in the estimation of the value function, considering the nonlinearity of the risk measure. In each step $h$ of episode $k$, the optimistic value function is obtained by adding a bonus term $b^k_h$ to the empirical value. The empirical value is constructed by approximating the Bellman optimality equation (Equation \ref{eqt:opt}) with empirical model. To be more specific, the empirical model is maintained and updated based on the visiting counts
\[ N^k_h(s,a):= \sum_{\tau=1}^{k-1} \bI\cbrk{(s^{\tau}_h,a^{\tau}_h)=(s,a)}, \ N^k_h(s,a,s'):= \sum_{\tau=1}^{k-1} \bI\cbrk{(s^{\tau}_h,a^{\tau}_h,s^{\tau}_{h+1})=(s,a,s')}. \]
The empirical model $\hat{P}^k_h$ for step $h$ in episode $k$ is set to be the visiting frequency
\[ \hat{P}^k_h(s'|s,a)=\frac{N^k_h(s,a,s')}{N^k_h(s,a)\vee1}.\]

The bonus term $b^k_h$ is composed of two factors: the estimation error of the next-state value distribution and the Lipschitz constant of the risk measure. The estimation error  can be bounded as
\[ \norm{(\hat{P}^k_h,V^*_{h+1})-(P_h,V^*_{h+1})}_{\infty} \leq \sqrt{\frac{\iota}{2(N^k_h(\cdot,\cdot)\vee 1)}}, \]
where $\iota$ is a confidence level to be specified later. This error term captures the uncertainty in the empirical model. The Lipschitz constant $L_{\infty,h}$ of the risk measure reflects its sensitivity to changes in  the underlying distributions. 
Multiplying these two factors together yields the exploration bonus term, which is added to the empirical value function estimate. This bonus term encourages exploration in situations where the model estimation error is large or the risk measure is sensitive. By carefully designing the bonus term, \texttt{UCBVI-DRM} achieves optimism in its value function estimates, promoting exploration while considering the nonlinearity of the risk measure. This allows the algorithm to balance exploration and exploitation, taking into account the uncertainty in the model and the smoothness of the risk measure.

\begin{algorithm}[tb]
	\caption{\texttt{UCBVI-DRM}}
	\label{alg:UCBVI-DRM}
	\begin{algorithmic}[1]
		\State{Input: $\prt{\rho_h}_{h\in[H-1]}$, $T$ and $\delta$}
		\State{Initialize $N^k_h(\cdot,\cdot)\leftarrow0$, $\hat{P}^k_h(\cdot,\cdot)\leftarrow \frac{1}{S}\textbf{1}$}
		\For{$k = 1:K$}
		\State{$V^k_{H}(\cdot)\leftarrow \max_a r_H(\cdot,a)$}
		\For{$h = H-1:1$}
		\State{$b^k_h(\cdot,\cdot)\leftarrow L_{\infty,h}\sqrt{\frac{\iota}{2(N^k_h(\cdot,\cdot)\vee 1)}}$}
		\State{$Q^k_h(\cdot,\cdot)\leftarrow r_h(\cdot,\cdot)+\rho_h(V^k_{h+1},\hat{P}^k_h(\cdot,\cdot))+b^k_h(\cdot,\cdot)$}
		\State{$V^k_h(\cdot)\leftarrow \max_{a\in\cA} Q^k_h(\cdot,a)$}
		\EndFor
		\State{Receive $s^k_1$}
		\For{$h = 1:H$}
		\State{Take action $a^k_h\leftarrow \arg\max_{a\in\cA} Q^k_h(s^k_h,a)$ and transit to $s^k_{h+1}$}
		\State{Update $N^k_h(\cdot,\cdot)$ and $\hat{P}^k_h(\cdot,\cdot)$}
		\EndFor
		\EndFor
	\end{algorithmic} 
\end{algorithm}

\begin{remark}
Algorithm \ref{alg:UCBVI-DRM} provides a general framework that subsumes other algorithms such as \texttt{ICVaR-VI} in \cite{duprovably} for dynamic CVaR and \texttt{OCE-VI} in \cite{xu2023regret} for dynamic OCE.  In particular, our bonus term matches theirs by setting $L_{\infty,h}=(H-h)/\alpha$ for CVaR and $L_{\infty,h}=u(-(H-h))$ for OCE. Therefore, Algorithm \ref{alg:UCBVI-DRM} generalizes these algorithms and provides a unified framework for addressing different risk measures. 
\end{remark}

\vspace{-2ex}
\subsection{\texttt{OVI-DRM}}
The \texttt{OVI-DRM} algorithm (see Algorithm \ref{alg:OVI-DRM}) is a model-based algorithm which   injects the optimism in the estimated model. It operates at a high level as follows.
For each step $h$ in episode $k$, the algorithm constructs an optimistically estimated transition model $\tilde{P}^k_h$ based on a high probability concentration bound on the empirical transition model $\hat{P}^k_h$. This optimistic model allows for exploration and promotes optimism in the face of uncertainty. Using the optimistic model $\tilde{P}^k_h$, the algorithm approximates the Bellman optimality equation (Equation \ref{eqt:opt}) to obtain optimistic value functions $Q^k_h$. 

The optimistic model $\tilde{P}^k_h$ is obtained by applying a subroutine called \texttt{OM} (Optimistic Model) that takes the empirical model $\hat{P}^k_h(s,a)$, the value at the next step $V^k_{h+1}$, and a confidence radius $c^k_h(s,a)$ as input. The subroutine constructs an optimistic model within an $\ell_1$ norm ball around $\hat{P}^k_h(s,a)$
\[ \norm{P_h(s,a)-\hat{P}^k_h(s,a)}_1 \leq c^k_h(s,a).\]

$c^k_h$ represents the confidence radius around the empirical model within which the true model lies with high probability. Due to the space limit, we defer  the details of the subroutine \texttt{OM} (Algorithm \ref{alg:OM}) in Appendix \ref{app:sub}. Note that \texttt{OM} is computationally efficient with complexity $\cO(S\log S)$. 

The optimism of the model induces optimism in the value estimates
\[ (\tilde{P}^k_h(s,a),V^k_{h+1}) \succeq  (P_h(s,a),V^k_{h+1}) \succeq  (P_h(s,a),V^*_{h+1}) \Longrightarrow Q^k_h(s,a) \ge Q^*_h(s,a). \]
By using an optimistically estimated model, the \texttt{OVI-DRM} algorithm promotes exploration and encourages the agent to take actions that have the potential for higher values.
\begin{algorithm}[tb]
	\caption{\texttt{OVI-DRM}}
	\label{alg:OVI-DRM}
	\begin{algorithmic}[1]
        \State{Input: $\prt{\rho_h}_{h\in[H-1]}$, $T$ and $\delta$}
		\State{Initialize $N^k_h(\cdot,\cdot)\leftarrow0$, $\hat{P}^k_h(\cdot,\cdot)\leftarrow \frac{1}{S}\textbf{1}$}
		\For{$k = 1:K$}
		\State{$V^k_{H}(\cdot)\leftarrow \max_a r_H(\cdot,a)$}
		\For{$h = H-1:1$}
		\State{$c^k_h(\cdot,\cdot)\leftarrow \sqrt{\frac{2S}{N^k_h(\cdot,\cdot)\vee 1}\iota}$}		\State{$\tilde{P}^k_h(\cdot,\cdot)\leftarrow \texttt{OM}\prt{\hat{P}^k_h(\cdot,\cdot),V^k_h,c^k_h(\cdot,\cdot)}$}
		\State{$Q^k_h(\cdot,\cdot)\leftarrow r_h(\cdot,\cdot)+\rho_h(V^k_{h+1},\tilde{P}^k_h(\cdot,\cdot))$}
		\State{$V^k_h(\cdot)\leftarrow \max_{a\in\cA} Q^k_h(\cdot,a)$}
		\EndFor
		\State{Receive $s^k_1$}
		\For{$h = 1:H$}
		\State{Take action $a^k_h\leftarrow \arg\max_{a\in\cA} Q^k_h(s^k_h,a)$ and transit to $s^k_{h+1}$}
		\State{Update $N^k_h(\cdot,\cdot)$ and $\hat{P}^k_h(\cdot,\cdot)$}
		\EndFor
		\EndFor
	\end{algorithmic} 
\end{algorithm}

\section{Main Results}
\label{sec:main}
To simplify notations, we define $\tilde{L}_{1,t}\triangleq\prod^{t}_{i=1} L_{1,i}$ for $t\in[H]$.
\subsection{Worst-case Regret Bounds}
\begin{theorem}[Worst-case regret upper bound]
\label{thm:worst_ub}
Fix $\delta\in(0,1)$. Suppose Assumption \ref{asp:rm} holds. Algorithm \ref{alg:UCBVI-DRM} and Algorithm \ref{alg:OVI-DRM} satisfies for any MDP $\cM$
\begin{align*}
\text{Regret}(K)
\leq\tilde{\mathcal{O}}\prt{\sum^{H-1}_{h=1}L_{\infty,h}\tilde{L}_{1,h-1}\cdot\sqrt{S^2AK}}
\end{align*}
with probability at least $1-\delta$, where $\iota\triangleq\log(4SAT/\delta)$.
\end{theorem}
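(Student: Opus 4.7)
The plan is to follow the standard optimism-in-face-of-uncertainty template for model-based RL, adapted to the non-linearity of $\rho_h$ via the two Lipschitz constants $L_{\infty,h}$ and $L_{1,h}$, which play distinct roles. First I would establish a high-probability ``good event'' $\cG$ on which the empirical model concentrates to the true model, and verify on $\cG$ that $V^k_h(s)\geq V^*_h(s)$ for both algorithms. Then I would derive a one-step recursion for the surrogate gap $\Delta^k_h \triangleq V^k_h(s^k_h) - V^{\pi^k}_h(s^k_h)$, unroll it to bound a single episode, and sum over $k$ using a pigeonhole count on $N^k_h(s^k_h,a^k_h)$.

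\textbf{Good event and optimism.} The deterministic, data-free choice $V=V^*_{h+1}$ lets me apply DKW to the empirical CDF $(V^*_{h+1},\hat{P}^k_h(s,a))$, yielding $\|(V^*_{h+1},\hat{P}^k_h(s,a))-(V^*_{h+1},P_h(s,a))\|_\infty \leq \sqrt{\iota/(2N^k_h(s,a))}$ uniformly in $(s,a,h,k)$ after a union bound. For \texttt{UCBVI-DRM}, combining DKW with Lipschitz-$\infty$ continuity of $\rho_h$ shows that $b^k_h$ exactly absorbs the one-sided error at the target $V^*_{h+1}$; a backward induction using monotonicity of $\rho_h$ then yields $V^k_h\geq V^*_h$. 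For \texttt{OVI-DRM}, a Weissman-type $\ell_1$ concentration places $P_h(s,a)$ inside the confidence ball of radius $c^k_h(s,a)$, and the subroutine \texttt{OM} returns the maximizer of $\rho_h(V^k_{h+1},\cdot)$ inside that ball, giving the same inductive conclusion.

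\textbf{Lipschitz recursion.} Under optimism, $\re(K)\leq \sum_k \Delta^k_1$. Because $a^k_h$ is greedy for $Q^k_h$,
\begin{align*}
\Delta^k_h &= [\rho_h(V^k_{h+1}, \hat{P}^k_h(s^k_h, a^k_h)) - \rho_h(V^k_{h+1}, P_h(s^k_h, a^k_h))] \\
&\quad + [\rho_h(V^k_{h+1}, P_h(s^k_h, a^k_h)) - \rho_h(V^{\pi^k}_{h+1}, P_h(s^k_h, a^k_h))] + b^k_h
\end{align*}
for \texttt{UCBVI-DRM} (analogously for \texttt{OVI-DRM} with $\tilde{P}^k_h$ and no bonus). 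The first bracket involves the data-dependent $V^k_{h+1}$, so I would use Lipschitz-$\infty$ together with $\|(V^k_{h+1},\hat{P}^k_h)-(V^k_{h+1},P_h)\|_\infty \leq \|\hat{P}^k_h-P_h\|_1$ and a Weissman bound to get $\cO(L_{\infty,h}\sqrt{S\iota/N^k_h(s^k_h,a^k_h)})$; this is the source of the extra $\sqrt{S}$ relative to the optimism bonus. The second bracket is where Lipschitz-$1$ enters crucially: the two value distributions share the common PMF $P_h(s^k_h,a^k_h)$, so their Wasserstein distance is tightly coupled by $\sum_{s'} P_h(s'\mid s^k_h,a^k_h)\,|V^k_{h+1}(s')-V^{\pi^k}_{h+1}(s')|$, and optimism drops the absolute values. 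Writing the expectation as a single-sample draw at $s^k_{h+1}$ plus a martingale difference $M^k_h$ gives
\[ \Delta^k_h \leq E^k_h + L_{1,h}\,\Delta^k_{h+1} + L_{1,h}\,M^k_h,\qquad E^k_h = \tilde{\cO}\Bigl(L_{\infty,h}\sqrt{S\iota/(N^k_h(s^k_h,a^k_h)\vee 1)}\Bigr). \]

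\textbf{Unrolling, summing, and main obstacle.} Iterating from $h=1$ to $H-1$ (with $\Delta^k_H=0$ since $V^k_H=V^{\pi^k}_H=\max_a r_H$) gives $\Delta^k_1 \leq \sum_{h=1}^{H-1}\tilde{L}_{1,h-1}E^k_h + \sum_{h=1}^{H-1}\tilde{L}_{1,h}M^k_h$, which immediately produces the product $\tilde{L}_{1,h-1}$ in the statement. Summing over $k$, Azuma--Hoeffding controls the martingale sum as a lower-order term, while the standard pigeonhole inequality $\sum_k 1/\sqrt{N^k_h(s^k_h,a^k_h)\vee 1}\leq \tilde{\cO}(\sqrt{SAK})$ delivers $\sum_k E^k_h \leq \tilde{\cO}(L_{\infty,h}\sqrt{S^2AK\,\iota})$, and summing over $h$ completes the bound. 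The main technical hurdle is the second bracket: the risk-neutral proof uses linearity of expectation to split $\bE_{s'}[V^k-V^{\pi^k}]$ into $\Delta^k_{h+1}+M^k_h$ for free, which is not available for nonlinear $\rho_h$. The reduction of a difference of risk-measure values on distributions with \emph{identical PMF} to a Wasserstein-$1$ distance, invoking the Lipschitz-$1$ assumption, is the key trick that restores an additive recursion at the price of a multiplicative $L_{1,h}$ per stage; getting this step clean is what determines the shape of the final bound.
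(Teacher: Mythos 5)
Your proposal is correct and follows essentially the same route as the paper's proof: DKW at the fixed target $V^*_{h+1}$ for optimism, a decomposition of the per-step gap into a model-error term (handled via Lipschitz-$\infty$ and the Weissman $\ell_1$ bound, which is where the extra $\sqrt{S}$ enters) and a value-propagation term (handled via the transport inequality $\norm{(x,P)-(y,P)}_1\leq\sum_i P_i\abs{x_i-y_i}$ together with Lipschitz-$1$ and optimism to drop the absolute values), followed by unrolling, pigeonhole, and Azuma--Hoeffding. The key lemma you identify for the second bracket is exactly the paper's Lemma \ref{lem:tran}, and your accounting of where $\tilde{L}_{1,h-1}$ and $L_{\infty,h}$ appear matches the paper's Step 3.
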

The proof sketch of Theorem \ref{thm:worst_ub} is shown in Section \ref{sec:pf}.
\begin{remark}
In the risk-neutral setting, The Lipschitz constants take $L_{\infty,h}=H-h$ and $L_{1,h}=1$, which leads to the bound of $\tilde{\mathcal{O}}\prt{H^2\sqrt{S^2AK}}$.
\end{remark}

\begin{theorem}[Minimax  Lower Bound]
\label{thm:mini_lb}
 For any algorithm $\texttt{alg}$, there exists an MDP $\mathcal{M}$ such that for sufficiently large $K$ 
\[ \mathbb{E}[\text{Regret}(\texttt{alg},\mathcal{M},K)]\ge \Omega\prt{c_{\rho}H\sqrt{SAT}},\]
where $c_{\rho}$ is a constant dependent on the risk measure\footnote{For more details, please refer to Appendix \ref{app:lb}.}.
\end{theorem}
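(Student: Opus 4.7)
The plan is to adapt the classical hard-MDP construction for risk-neutral tabular RL to the risk-sensitive setting, while tracking how the risk measure $\rho$ enters the per-step value gap. In the risk-neutral case, a perturbation $\varepsilon$ of the optimal action's transition produces a value gap of order $\varepsilon$ by linearity of expectation; in the risk-sensitive case the same perturbation is propagated through $\rho_h$, and its effect on the value is governed by a \emph{local lower sensitivity} of $\rho_h$, which is precisely what the constant $c_\rho$ will capture. I would construct a family $\{\cM_\theta\}$ indexed by $\theta:\cS\times[H]\to\cA$ of ``distinguished'' actions: at each state-step pair $(s,h)$ all actions induce a symmetric $(\tfrac12,\tfrac12)$ transition between a ``high-value'' and a ``low-value'' successor layer, except $\theta(s,h)$, which shifts probability mass by $\varepsilon$ toward the high-value side. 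Rewards are deterministic and arranged so that the two successor layers carry a persistent value gap.

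\textbf{Per-step gap through $\rho$.} For two-atom next-state value distributions on $\{V^+,V^-\}$, the per-step contribution of the correct action is
\[
  \rho_h\bigl(\{V^+,V^-\},(\tfrac12+\varepsilon,\tfrac12-\varepsilon)\bigr)-\rho_h\bigl(\{V^+,V^-\},(\tfrac12,\tfrac12)\bigr),
\]
which by law-invariance and monotonicity is nonnegative, and admits a one-sided bound of the form $c_{\rho,h}\cdot\varepsilon\cdot(V^+-V^-)$ where $c_{\rho,h}$ is a strictly positive local sensitivity of $\rho_h$ at the balanced two-atom reference (for CVaR at level $\alpha$ this is a multiple of $1/\alpha$; for ERM it is exponential in $|\beta|$). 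Designing the hard instance as a chain in which $V^+-V^-$ remains a constant that is not attenuated when unrolled back through the DRM recursion, the per-step gaps accumulate to a total value gap of order $c_\rho\cdot H\cdot\varepsilon$ between the optimal policy and any policy that misidentifies some $\theta(s,h)$.

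\textbf{Information-theoretic argument and main obstacle.} With the per-instance gap quantified, I would apply Le Cam's two-point method (equivalently Bretagnolle--Huber) on pairs $\cM_\theta,\cM_{\theta'}$ differing at a single $(s,h)$. Since only the transition is perturbed and the perturbation is symmetric of size $\varepsilon$, the KL between the two induced $K$-episode trajectory laws is $O(\varepsilon^2\cdot\bE[N_K(s,\cdot,h)])$, and the constraint $\sum_{s,a,h}\bE[N_K(s,a,h)]=T$ together with averaging over all $SA$ single-coordinate flips per layer pins down $\varepsilon\asymp\sqrt{SA/T}$. Combining with the per-instance $c_\rho H\varepsilon$ gap and summing over episodes yields $\bE[\mathrm{Regret}]\gtrsim c_\rho H\sqrt{SAT}$. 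The main obstacle is the risk-sensitive step: unlike the risk-neutral case, where linearity makes propagation trivial, the DRM is a nested composition $\rho_1\circ\cdots\circ\rho_{H-1}$ of nonlinear operators, so one must (i) identify a two-atom reference distribution at which every $\rho_h$ admits a uniformly positive local sensitivity, and (ii) design the chain of states so that the composition does not attenuate these per-step contributions, ensuring that a single constant $c_\rho$ (rather than an exponential product) survives in the final bound. This balancing is what determines the precise form of $c_\rho$ whose details are deferred to the appendix.
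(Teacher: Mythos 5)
Your overall strategy (a family of hard instances, a per-instance value gap quantified by a one-sided lower-sensitivity constant $c_\rho$ of the risk measure at a two-atom distribution, and a KL/visit-count information argument) matches the paper's in spirit, but your construction differs in a way that creates two genuine gaps. First, your calibration loses a factor of $H$: with a per-episode gap of order $c_\rho H\varepsilon$ and $\varepsilon\asymp\sqrt{SA/T}$, the total regret is $K\cdot c_\rho H\sqrt{SA/T}=c_\rho\sqrt{SAT}$, not $c_\rho H\sqrt{SAT}$. The paper (following Domingues et al.) recovers the missing $H$ through two mechanisms your chain does not have: the single decision in each episode carries a value stake of $\Theta(H)$ (the good successor is absorbing and pays reward for $\Theta(H)$ remaining steps), and the number of hypotheses is $\Theta(SAH)$ rather than $\Theta(SA)$ because a waiting state lets the decisive transition occur at any of $\Theta(H)$ stages while each episode probes only one hypothesis cell, which permits $\varepsilon\asymp\sqrt{SAH/K}$.

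Second, the obstacle you flag as ``(ii) design the chain so the composition does not attenuate the per-step contributions'' is left unresolved, and it is not a technicality: propagating a perturbation at stage $h$ back through $\rho_1\circ\cdots\circ\rho_{h-1}$ generically multiplies it by a product of per-stage lower-sensitivity constants, which for a general Lipschitz DRM can be exponentially small, so a single constant $c_\rho$ does not obviously survive in your every-layer construction. The paper eliminates this issue by construction rather than by analysis: all transitions before the decision are deterministic and all states after it are absorbing, so every $\rho_h$ except $\rho_{h^*}$ is applied to a point mass (where $\rho_h(c)=c$ by Assumption~\ref{asp:rm}), and the entire gap $V_1^*-V_1^{\pi^k}$ collapses to a single comparison $\rho_{h^*}((H',0),(p+\epsilon,1-p-\epsilon))-\rho_{h^*}((H',0),(p,1-p))\ge c_\rho H'\epsilon$, requiring exactly one application of the sensitivity constant. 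To complete your argument you would either need to adopt such a single-decision structure or prove that your chain's nested composition preserves a uniform constant, which is precisely the step currently missing.
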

The construction of proof is based on \cite{domingues2021episodic}. 
\paragraph{Comparisons and Discussions.}
We compare our regret  bounds with that of \cite{xu2023regret} in the dynamic OCE setting.
By instantiating the Lipschitz constants of OCE with $L_{\infty,h}=u(-H+h)$ and $L_{1,h}=u'_{-}(-H+h)$, their bound can be translated into $\tilde{\mathcal{O}}\prt{\sum^{H-1}_{h=1}L_{\infty,h}\sqrt{\tilde{L}_{1,h-1}S^2AK}}$. Our bound matches their result with  additional factors $\sqrt{\tilde{L}_{1,h-1}}$. This is because they employ a change-of-measure technique based on the concave optimization representation of OCE to bound derive a tighter recursion of value gaps, which cannot be easily extended to general risk measures. The Algorithm \ref{alg:UCBVI-DRM}, however, still enjoy the same regret bound as that in \cite{xu2023regret} since our algorithm reduces to \texttt{OCE-VI} for the dynamic OCE. Furthermore, numerical experiments in Appendix \ref{app:lb} shows that Algorithm \ref{alg:OVI-DRM}  empirically outperforms Algorithm \ref{alg:UCBVI-DRM} for dynamic CVaR problem. Our lower bound also matches the lower bound in \cite{xu2023regret}, both of which are tight in $S,A,K,H$ and depend on some constant related to the risk measure.

In the dynamic CVaR setting, which is a special case of dyamic OCE, our upper bound matches the bound $\tilde{\mathcal{O}}\prt{H^2\sqrt{S^2AK}/\sqrt{\alpha^H}}$ in \cite{duprovably} up to a factor of $1/\sqrt{\alpha^H}$. Algorithm \ref{alg:UCBVI-DRM} subsumes \texttt{ICVaR-VI} for the dynamic CVaR problem. In contrast to \cite{duprovably} that provides a algorithm-dependent lower bound, we provide  minimax  and gap-dependent lower bound.
Furthermore,  \cite{lamrisk} considers dynamic coherent risk measures and non-linear function approximation, and their regret bounds are derived under the assumption of a weak simulator. As a result, the regret bounds provided in our work are not directly comparable to theirs, even if their results are specialized to the tabular MDP setting

Theorem \ref{thm:worst_ub} and Theorem \ref{thm:mini_lb} imply that RSRL with Lipschitz DRM can achieve regret bound that is minimax-optimal in terms of $K$ and $A$. Specifically, the gap between the upper and lower bounds is determined by two factors: $\sqrt{S}$ and a multiplicative Lipschitz constant $\tilde{L}_{1,H}$. we provide more technical details  behind this in the proof sketch. Achieving further improvements in these factors can be challenging, especially for general risk measures, under the mild Lipschitz assumption. 

\subsection{Gap-dependent Regret Bounds}
Fix $h\in[H]$, $(s,a)\in\cS\times\cA$, the \emph{sub-optimality gap} of $(s,a)$ at step $h$ is defined as 
$\Delta_h(s,a)\triangleq V^*_h(s,a)-Q^*_h(s,a)$. The \emph{minimum sub-optimality gap} is defined as the minimum non-zero gap 
\[ \Delta_{\text{min}} \triangleq \min_{h,s,a} \cbrk{ \Delta_h(s,a): \Delta_h(s,a)>0  }.\]
\begin{theorem}[Gap-dependent regret upper bound]
\label{thm:gap_ub}
Fix $\delta\in(0,1)$. With probability at least $1-\delta$, Algorithm \ref{alg:UCBVI-DRM} and Algorithm \ref{alg:OVI-DRM} satisfy
\begin{align*}
\text{Regret}(K)
&\leq  \mathcal{O}\prt{  \frac{ S^2A H\prt{ \sum_{h=1}^{H-1}  \tilde{L}_{1,h-1}L_{\infty,h}}^2 }{\Delta_{\text{min}}} \log(SAT)}.
\end{align*}
\end{theorem}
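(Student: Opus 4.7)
The plan is to combine the surplus decomposition that already underlies the proof of Theorem \ref{thm:worst_ub} with a Simchowitz--Jamieson-style clipping argument. Optimism (established in the course of Theorem \ref{thm:worst_ub}) immediately reduces the problem to controlling $\sum_{k}\delta^k_1$, where $\delta^k_h:=V^k_h(s^k_h)-V^{\pi^k}_h(s^k_h)$.

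First I would derive a one-step recursion of the form
\[\delta^k_h\le E^k_h(s^k_h,a^k_h)+L_{1,h}\,\mathbb E_{s'\sim P_h(s^k_h,a^k_h)}\bigl[\delta^k_{h+1}(s')\bigr]+\xi^k_h,\]
where $E^k_h(s,a)=\Theta\bigl(L_{\infty,h}\sqrt{S\iota/(N^k_h(s,a)\vee 1)}\bigr)$ is the per-stage surplus produced either by the bonus of \texttt{UCBVI-DRM} or by the model-radius term $c^k_h$ of \texttt{OVI-DRM}, and $\xi^k_h$ is a bounded martingale difference. The surplus term is produced by the $\norm{\cdot}_\infty$-Lipschitz property of $\rho_h$ together with the $\ell_1$ concentration of the transition model. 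The multiplicative factor $L_{1,h}$ comes from the $\norm{\cdot}_1$-Lipschitz property applied to $(V^k_{h+1},P_h(s^k_h,a^k_h))$ vs $(V^{\pi^k}_{h+1},P_h(s^k_h,a^k_h))$: since these two value distributions share identical probability weights, their $\ell_1$ (Wasserstein) distance equals $\mathbb E_{s'\sim P_h(s^k_h,a^k_h)}[|V^k_{h+1}(s')-V^{\pi^k}_{h+1}(s')|]$, and optimism removes the absolute value. Unrolling then yields $\delta^k_1\le\sum_{h=1}^{H-1}\tilde L_{1,h-1}\,E^k_h(s^k_h,a^k_h)+M^k$ with $M^k$ a martingale controlled by Azuma--Hoeffding.

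Next I apply the clipped regret decomposition in the spirit of Simchowitz and Jamieson. Combining optimism with the recursion above, each per-episode error is dominated, up to constants, by a clipped sum,
\[\delta^k_1\lesssim\sum_{h=1}^{H-1}\mathrm{clip}\!\left[\tilde L_{1,h-1}\,E^k_h(s^k_h,a^k_h)\;\Big|\;\frac{\Delta_h(s^k_h,a^k_h)}{2H}\right]+M^k,\]
where $\mathrm{clip}[x\,|\,\epsilon]:=x\,\mathbb I\{x\ge\epsilon\}$. The justification is standard: if every stage's weighted surplus were below $\Delta_h(s^k_h,a^k_h)/(2H)$, their telescoped sum could not reproduce the true sub-optimality of the action actually played. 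Crucially the clip level scales with the actual gap $\Delta_h(s^k_h,a^k_h)$ (not with $\Delta_{\min}$), which is what produces the tight $1/\Delta_{\min}$ dependence after aggregation.

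Finally I aggregate. For each triple $(h,s,a)$ with $\Delta_h(s,a)>0$, the weighted surplus $\tilde L_{1,h-1}E^k_h(s,a)$ exceeds the clip threshold only while $N^k_h(s,a)\lesssim\tilde L_{1,h-1}^2 L_{\infty,h}^2 S H^2\iota/\Delta_h(s,a)^2$; summing $\tilde L_{1,h-1}E^k_h(s,a)$ over those visits via the usual $\sum_{n=1}^{N}1/\sqrt n\le 2\sqrt N$ bound gives a per-$(h,s,a)$ contribution of $O\bigl(\tilde L_{1,h-1}^2 L_{\infty,h}^2 S H\iota/\Delta_h(s,a)\bigr)$. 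Summing over $(s,a)\in\mathcal S\times\mathcal A$ with $\Delta_h(s,a)\ge\Delta_{\min}$, summing over $h$, and loosening $\sum_h a_h^2\le\bigl(\sum_h a_h\bigr)^2$ for the non-negative quantities $a_h=\tilde L_{1,h-1}L_{\infty,h}$ yields the claimed $O\!\bigl(S^2AH\bigl(\sum_h\tilde L_{1,h-1}L_{\infty,h}\bigr)^2\log(SAT)/\Delta_{\min}\bigr)$; the martingale $\sum_k M^k$ is absorbed into the same order by Azuma--Hoeffding. The main obstacle is the first step: one must exploit the shared-weights structure so that $\delta^k_h$ recurses on a genuine \emph{expectation} of $\delta^k_{h+1}$ at the next state rather than on a supremum. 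Without this, one multiplies by $L_{\infty,h}$ at every stage and loses both the $\tilde L_{1,h-1}$ scaling and the resulting $1/\Delta_{\min}$ form; the remainder of the argument (clipping, pigeonhole, martingale control) is by now standard.
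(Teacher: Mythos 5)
Your route---a Simchowitz--Jamieson-style clipped-surplus decomposition---is genuinely different from the paper's, which instead clips the \emph{optimistic} gap $(Q^k_h-Q^*_h)(s^k_h,a^k_h)$ at the global level $\Delta_{\text{min}}$ and then counts, via a weighted self-bounding pigeonhole argument, how many $(k,h)$ pairs can fall in each dyadic band $[2^{n-1}\Delta_{\text{min}},2^{n}\Delta_{\text{min}})$. Your first step (the one you flag as the main obstacle) is fine and coincides with the paper's: the $\norm{\cdot}_\infty$-Lipschitz surplus plus the transport inequality giving an $L_{1,h}$-weighted \emph{expectation} of $\delta^k_{h+1}$ is exactly Lemma 5 of the appendix. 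The problems are in the clipping and the aggregation.

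First, clipping at $\Delta_h(s^k_h,a^k_h)/(2H)$ with no floor is vacuous at every stage where the played action is optimal: there $\Delta_h(s^k_h,a^k_h)=0$, the threshold is $0$, and the weighted surplus $\tilde L_{1,h-1}E^k_h$ is never clipped. Your aggregation then sums only over $(s,a)$ with $\Delta_h(s,a)>0$ and silently drops these terms; but over $K$ episodes the unclipped surpluses at optimal pairs sum to $\Theta\bigl(\tilde L_{1,h-1}L_{\infty,h}\sqrt{S^2AK\iota}\bigr)$, a $\sqrt{K}$ quantity that cannot be hidden in a $\log K$ bound. This is precisely why the Simchowitz--Jamieson threshold is $\frac{\Delta_{\text{min}}}{2H}\vee\frac{\Delta_h(s,a)}{2}$: the $\Delta_{\text{min}}/(2H)$ floor is what makes the number of unclipped visits to \emph{optimal} pairs finite. (The paper sidesteps the issue by clipping at the global $\Delta_{\text{min}}$, which is also why its bound carries only $1/\Delta_{\text{min}}$ rather than the refined $\sum 1/\Delta_h(s,a)$ your approach would otherwise buy.) Second, the claim that the martingale $\sum_k M^k$ is ``absorbed into the same order by Azuma--Hoeffding'' does not hold: applied over all $K$ episodes, Azuma gives a $\sqrt{K}$ deviation, which again dominates $\log K$. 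The paper avoids this by applying Azuma only to the indicator-weighted sums $\sum_k w^{(n,h)}_k\epsilon^k_h$, whose quadratic variation is the band count $C^{(n,h)}$ itself, and then solving the self-bounding inequality $C\lesssim (a+b)\sqrt{C}$. Both gaps are repairable with standard machinery, but as written the argument does not establish the theorem.
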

We follow the standard convention in the literature for the gap-dependent lower bound. The lower bound is stated for algorithms that
have sublinear worst-case regret. Specifically, we say that an algorithm $\texttt{alg}$ is  $\alpha$-uniformly good if for any MDP $\cM$, there exists a constant $C_{\cM}>0$ such that $\text{Regret}(\texttt{alg},\cM,K)\leq C_{\cM} K^{\alpha}$. The construction of proof is based on \cite{simchowitz2019non}. 
\begin{theorem}[Gap-dependent regret lower bound]
\label{thm:gap_lb}
There exists an MDP $\cM$ such that any $\alpha$-uniformly good algorithm $\texttt{alg}$ satisfies
    \begin{align*}
        \lim_{K\rightarrow\infty} \frac{\text{Regret}(\texttt{alg},\cM,K)}{\log K}= \Omega\prt{(1-\alpha)\sum_{(s,a):\Delta_1(s,a)>0} \frac{ (c_{\rho} H)^2}{\Delta_1(s,a)} }
    \end{align*}
\end{theorem}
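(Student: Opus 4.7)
The proof will adapt the change-of-measure lower-bound machinery of Simchowitz--Jamieson from risk-neutral tabular MDPs to the Lipschitz DRM setting. First, I would construct a base MDP $\cM$ with a uniquely optimal action at each state of stage $1$, together with, for every suboptimal pair $(s,a)$ with $\Delta_1(s,a)>0$, an alternative instance $\cM_{s,a}$ that coincides with $\cM$ everywhere except at the $(s,a)$-cell, where the transition is perturbed by an amount $\epsilon_{s,a}$ toward a distinguished absorbing ``good'' state. The perturbation is calibrated so that $a$ becomes the unique optimal action at $s$ under $\cM_{s,a}$, with $\epsilon_{s,a}$ of order $\Delta_1(s,a)/(c_\rho H)$.

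Next I would establish two quantitative facts. \emph{(i)} A forward Lipschitz propagation, paralleling the analysis behind Theorem \ref{thm:worst_ub}, shows that the risk-sensitive value gap at $(s,a)$ between $\cM$ and $\cM_{s,a}$ is at most $O(c_\rho H\,\epsilon_{s,a})$, so the perturbation needed to flip optimality is at least of the order claimed above. \emph{(ii)} A matching sensitivity lower bound exhibits an explicit perturbation direction in which the risk measure genuinely changes by $\Theta(c_\rho H\,\epsilon_{s,a})$; this is where the instance constant $c_\rho$, the same one appearing in Theorem \ref{thm:mini_lb}, enters.

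With this family in hand, the Bretagnolle--Huber / Kaufmann--Korda--Munos transportation argument gives, for any $\alpha$-uniformly good algorithm,
\[
  \bE_{\cM}[N_K(s,a)]\cdot \mathrm{KL}\bigl(\bP_{\cM}^K \,\big\|\, \bP_{\cM_{s,a}}^K\bigr) \;\geq\; (1-\alpha)\log K - o(\log K).
\]
By the divergence decomposition, the KL on the left reduces to $\bE_{\cM}[N_K(s,a)]\cdot\mathrm{KL}\bigl(P_h(\cdot|s,a)\,\|\,P_h^{(s,a)}(\cdot|s,a)\bigr)$, and the single-cell perturbation of size $\epsilon_{s,a}$ produces a KL of order $\epsilon_{s,a}^2$. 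Substituting $\epsilon_{s,a}^{2}\asymp \Delta_1(s,a)^2/(c_\rho H)^2$ yields
\[
  \bE_{\cM}[N_K(s,a)] \;\gtrsim\; \frac{(1-\alpha)(c_\rho H)^2 \log K}{\Delta_1(s,a)^2}.
\]
Summing the per-pair regret contribution $\Delta_1(s,a)\cdot \bE_{\cM}[N_K(s,a)]$ over all suboptimal $(s,a)$ at stage $1$ and dividing by $\log K$ recovers the claimed $(1-\alpha)\sum (c_\rho H)^2/\Delta_1(s,a)$ lower bound in the limit.

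The main obstacle is step \emph{(ii)}. In the risk-neutral case a reward perturbation of $\epsilon$ propagates linearly through the Bellman operator to produce a value gap of exactly $\epsilon$, so the ``right'' perturbation direction is trivial. Under a general nonlinear DRM only the upper bound $L\epsilon$ is immediate from Lipschitz continuity; producing a matching lower bound requires constructing an instance in which the risk measure is genuinely sensitive in the chosen direction and then propagating this sensitivity through $H$ compositions without cancellation. This gap--perturbation calibration, which pins down the risk-measure-specific constant $c_\rho$, is the technical heart of the adaptation of \cite{simchowitz2019non}; once it is in place, the KL computation, transportation inequality, and final summation are essentially routine.
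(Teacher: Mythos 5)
Your proposal is correct and follows essentially the same route as the paper: a single-cell perturbed family of instances, the divergence decomposition plus a data-processing/transportation inequality combined with $\alpha$-uniform goodness to extract a $(1-\alpha)\log K$ lower bound on the KL, a Bernoulli KL of order $\epsilon_{s,a}^2$, and the calibration $\epsilon_{s,a}\asymp\Delta_1(s,a)/(c_\rho H)$ via the reverse-Lipschitz sensitivity constant $c_\rho$. The paper handles your step (ii) by simply \emph{defining} $c_\rho$ as the constant for which $\rho((H,0),(p,1-p))-\rho((H,0),(q,1-q))\ge c_\rho H|p-q|$ holds on its specific two-point value distributions, so your identification of that calibration as the technical heart is accurate.
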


To our knowledge, Theorem \ref{thm:gap_ub} provides the first result showing that RSRL with DRMs can achieve $\log T$-type regret. 
The lower bound shows that for sufficiently large $K$, the logarithmic dependence on $K$ is unavoidable. Notably, it  implies that our algorithms have a tight dependency on $A$ and $K$.
Furthermore, the presence of a constant factor in both the upper and lower bounds suggests that the specific choice of risk measure can significantly impact their performance.

\section{Proof Sketch of Theorem \ref{thm:worst_ub}}
\label{sec:pf}
For simplicity, we only provide the proof sketch for \texttt{UCBVI-DRM}. The proof structure  builds upon the framework established in \cite{azar2017minimax}, but we introduce new techniques to address the specific challenges posed by nonlinear risk measures: (i) the Lipschitz continuity w.r.t. $\norm{\cdot}_{\infty}$ together with the DKW inequality to ensure the optimism (step 1), and (ii) the Lipschitz continuity w.r.t. $\norm{\cdot}_{1}$ together with a transport inequality (see Lemma \ref{lem:tran}) to obtain recursions  of suboptimality gap.
\paragraph{Step 1: establish optimism.}
We first show that $V^k_1(s^k_1) \ge V^*_1(s^k_1), \forall k \in [K]$ with high probability. Using the  Lipschitz property of $\rho_h$ w.r.t. $\norm{\cdot}_{\infty}$ and the
DKW inequality (Fact \ref{fct:dkw}) 
\begin{align*}
    \rho_h\prt{V^*_{h+1},P_{h}(s,a)}-\rho_h\prt{V^*_{h+1},\hat{P}^k_{h}(s,a)}  &\leq L_{\infty,H-h}\norm{(V^*_{h+1},P_{h}(s,a))- \prt{ V^*_{h+1},\hat{P}^k_{h}(s,a) } }_{\infty}\\
    &\leq L_{\infty,H-h}\sqrt{\frac{\iota}{2(N^k_h(s,a)\vee 1)}}=b^k_h(s,a).
\end{align*}
The results follows from the monotonicity of $\rho_h$ and induction.

\paragraph{Step 2: regret decomposition.}
We define $\Delta^k_h\triangleq V^k_h-V_h^{\pi^k} \in \bR^{\cS}$ 
and $\delta^k_h\triangleq\Delta^k_h(s^k_h)$.  The optimism implies that the regret can be bounded by the surrogate regret
\begin{align*}
\text{Regret}(K)&=\sum^K_{k=1}V^{*}_1(s^k_1)-V^{\pi^k}_1(s^k_1) \leq \sum^K_{k=1}V^{k}_1(s^k_1)-V^{\pi^k}_1(s^k_1)=\sum^K_{k=1}\delta^k_1.
\end{align*}
We write $r^k_h\triangleq r_h(s^k_h,a^k_h)$, $b^k_h\triangleq b^k_h(s^k_h,a^k_h)$, $N^k_h\triangleq N^k_h(s^k_h,a^k_h)$, $\hat{P}^k_h(s^k_h)\triangleq \hat{P}^k_h(s^k_h,a^k_h)$, and $P^{\pi^k}_h\triangleq P_h(s^k_h,a^k_h)$ for simplicity
. For any $h\in[H-1]$, We decompose $\delta^k_h$ as follows
\begin{align*}
    \delta^k_h
    &= b^k_h + \underbrace{ \rho_h\prt{V^k_{h+1},\hat{P}^k_h}-\rho_h\prt{V^k_{h+1},P^{\pi^k}_h} }_{(a)} + \underbrace{ \rho_h\prt{V^k_{h+1},P^{\pi^k}_h}-\rho_h\prt{V^{\pi^k}_{h+1},P^{\pi^k}_h} }_{(b)}.
\end{align*}
Bounding term (a) and term (b) requires new techniques compared with risk-neutral setting. To deal with the nonlinearity, we relate the two value difference terms to the distances between value distribution via Lipschitzness of the risk measure.
To bound term (a), we use the  Lipschitz property  w.r.t. $\norm{\cdot}_{\infty}$ to get
\begin{align*}
(a) &\leq  L_{\infty,h}\norm{ \prt{V^k_{h+1},\hat{P}^k_h} - \prt{V^k_{h+1},P^{\pi^k}_h} }_{\infty} \leq L_{\infty,h} \norm{\hat{P}^k_h-P^{\pi^k}_h}_1 \leq L_{\infty,h} \cdot c^k_h.
\end{align*}
Due to the linearity of expectation, standard analysis  bound term (b) by the sum of the vale gap at next step $\delta^k_{h+1}$ and a martingale noise, and then the recursion of the value gap is obtained. The derivation of a recursion in the presence of nonlinear $\rho_h$, however, leads to the main technical challenge. 
Our key observation to overcome the difficulty is a simple transport inequality in the following.
\begin{lemma}
\label{lem:tran}
$\norm{(x,P)-(y,P)}_{1}\leq \sum_{i=1}^n P_i\abs{x_i-y_i}.$
\end{lemma}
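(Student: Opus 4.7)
The plan is to argue directly from the integral definition of the $\ell_1$ distance on CDFs, using the fact that the two distributions $(x,P)$ and $(y,P)$ share the same weights so their CDFs can be written as sums of indicators that pair up naturally. Write
\[
F(t) = \sum_{i=1}^n P_i \, \mathbb{I}\{x_i \leq t\}, \qquad G(t) = \sum_{i=1}^n P_i \, \mathbb{I}\{y_i \leq t\},
\]
so that $F(t) - G(t) = \sum_i P_i\bigl(\mathbb{I}\{x_i \leq t\} - \mathbb{I}\{y_i \leq t\}\bigr)$. Applying the triangle inequality pointwise in $t$ gives
\[
|F(t) - G(t)| \leq \sum_{i=1}^n P_i \bigl|\mathbb{I}\{x_i \leq t\} - \mathbb{I}\{y_i \leq t\}\bigr|.
\]

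Next, I would integrate over $t$ and swap the sum and integral (by Tonelli, since everything is nonnegative). The key elementary observation is that for any two real numbers $a,b$,
\[
\int_{-\infty}^{\infty} \bigl|\mathbb{I}\{a \leq t\} - \mathbb{I}\{b \leq t\}\bigr|\, dt = \int_{\min(a,b)}^{\max(a,b)} dt = |a-b|,
\]
because the two indicator functions differ exactly on the interval between $a$ and $b$. Applying this with $a=x_i$, $b=y_i$ and summing with weights $P_i$ yields
\[
\norm{(x,P) - (y,P)}_1 = \int_{-\infty}^{\infty} |F(t)-G(t)|\, dt \leq \sum_{i=1}^n P_i |x_i - y_i|,
\]
which is exactly the claim.

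There is no real obstacle here: the argument is a direct calculation whose only subtle point is the triangle inequality \emph{inside} the integral before swapping sum and integral, which is what allows us to avoid cancellations between indices and extract a per-$i$ bound. Conceptually, this is nothing more than the statement that pairing $x_i$ with $y_i$ under weight $P_i$ is a valid transport plan between the two distributions, so the Wasserstein-$1$ distance (which coincides with the $\ell_1$ CDF distance on $\mathbb{R}$) is at most the cost of this particular coupling; the direct integration above makes this explicit without invoking Kantorovich--Rubinstein machinery.
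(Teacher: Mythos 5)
Your proof is correct, and it takes a genuinely different route from the paper's. The paper proves this lemma by viewing $\norm{F-G}_1$ as the Wasserstein-$1$ distance, writing it as an infimum over couplings, and exhibiting the diagonal coupling $\lambda_{i,j}=\delta_{i,j}P_i$ as a feasible transport plan whose cost is $\sum_i P_i\abs{x_i-y_i}$; this implicitly relies on the identity between the $L^1$ distance of CDFs on $\bR$ and the Kantorovich formulation. You instead compute directly from the paper's own definition $\norm{F-G}_1=\int\abs{F(t)-G(t)}\,dt$: you expand $F-G$ as a weighted sum of indicator differences, apply the triangle inequality pointwise in $t$, swap sum and integral, and use the elementary fact that $\int\abs{\bI\{a\leq t\}-\bI\{b\leq t\}}\,dt=\abs{a-b}$. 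Your version is more self-contained, since it never needs the CDF-integral/Wasserstein equivalence, while the paper's version makes the "transport plan" intuition explicit (which is how the authors motivate the lemma in the proof sketch). The two arguments are of course the same inequality seen from dual sides, and your closing remark correctly identifies this. One small point of care that you handle properly: the triangle inequality must be applied \emph{inside} the integral, before summing over $i$, precisely to avoid cancellations between indices; this is the only place the argument could go wrong, and you flag it explicitly.
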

Together with the Lipschitz property  w.r.t. $\norm{\cdot}_{1}$, we have
\begin{align*}
    (b)&\leq L_{1,h}\norm{ \prt{V^k_{h+1},P^{\pi^k}_h} - \prt{V^{\pi^k}_{h+1},P^{\pi^k}_h} }_{1}
    \leq L_{1,h} \sum_{s^{\prime}\in\cS} P^{\pi^k}_h(s^{\prime})\abs{V^k_{h+1}(s^{\prime})-V^{\pi^k}_{h+1}(s^{\prime})}\\
    &= L_{1,h} \sum_{s^{\prime}\in\cS} P^{\pi^k}_h(s^{\prime})\prt{V^k_{h+1}(s^{\prime})-V^{\pi^k}_{h+1}(s^{\prime})}= L_{1,h} \cdot P^{\pi^k}_h \Delta^k_{h+1}\triangleq L_{1,h} (\epsilon^k_h + \delta^k_{h+1}),
\end{align*}
where $\epsilon^k_h\triangleq P^{\pi^k}_h \Delta^k_{h+1} -\Delta^k_{h+1}(s^k_{h+1})$ is a  martingale difference sequence, 
and the first equality is due to  $V^k_{h+1}(s^{\prime})\ge V^*_{h+1}(s^{\prime})\ge V^{\pi^k}_{h+1}(s^{\prime})$ for all $s^{\prime}$. Now we can bound $\delta^k_h$ recursively
\begin{align*}
    &\delta^k_h \leq L_{\infty,h}\cdot c^k_h+ L_{1,h}(\epsilon^k_h + \delta^k_{h+1}) 
    +b^k_h\leq 2L_{\infty,h}\cdot c^k_h+ L_{1,h}(\epsilon^k_h + \delta^k_{h+1}) 
    +c^k_h.
\end{align*}
\paragraph{Step 3: putting together.}
Unrolling the recursion and summing up over $K$ episodes yields
\begin{align*}
    \text{Regret}(K) \leq \sum_{k\in[K]} \delta^k_1 \leq 2\sum^{H-1}_{h=1}L_{\infty,h} \tilde{L}_{1,h-1}\sum_{k=1}^K c^k_{h} + \sum_{k=1}^K\sum^{H-1}_{h=1} \tilde{L}_{1,h} \epsilon^k_{h}.
\end{align*}
We bound the first term via a pigeonhole argument and bound the second term by the Azuma-Hoeffding inequality. The finial result follows from  a union bound.


\section{Conclusions}
\label{sec:con}
We propose two model-based algorithms for the broad class of Lipschitz DRMs. To establish the efficacy of our algorithms, we provide theoretical guarantees in the form of worst-case and gap-dependent regret upper bounds. To complement our upper bounds, we also establish regret lower bounds. These lower bounds  demonstrate the inherent difficulty of the problem.

There are several promising future directions.  It might be possible to improve the regret upper bounds by designing new algorithms or improving the analysis. Currently, our algorithms and analysis are primarily focused on tabular MDPs. However, extending the results to the setting of function approximation, such as linear function approximation, is an important and challenging task. The nonlinearity of risk measures poses a significant obstacle in this context. One potential approach to address this issue is to leverage techniques like value-targeted regression, as proposed in \cite{ayoub2020model,jia2020model}, and integrate them into our framework.

\bibliography{neurips_2023}
\bibliographystyle{apalike}


\newpage

\appendix

\section{Experiments}
\label{app:exp}
In this section, we provide some numerical results to validate the empirical performance of our algorithms. We compare our algorithms to the  algorithms \texttt{UCBVI} \cite{azar2017minimax} for risk-neutral RL and \texttt{RSVI2} \cite{fei2021exponential}  for RSRL with ERM.

\begin{figure}[ht]
\begin{center}
\centering
\begin{subfigure}{0.49\textwidth}
    \includegraphics[width=\textwidth]{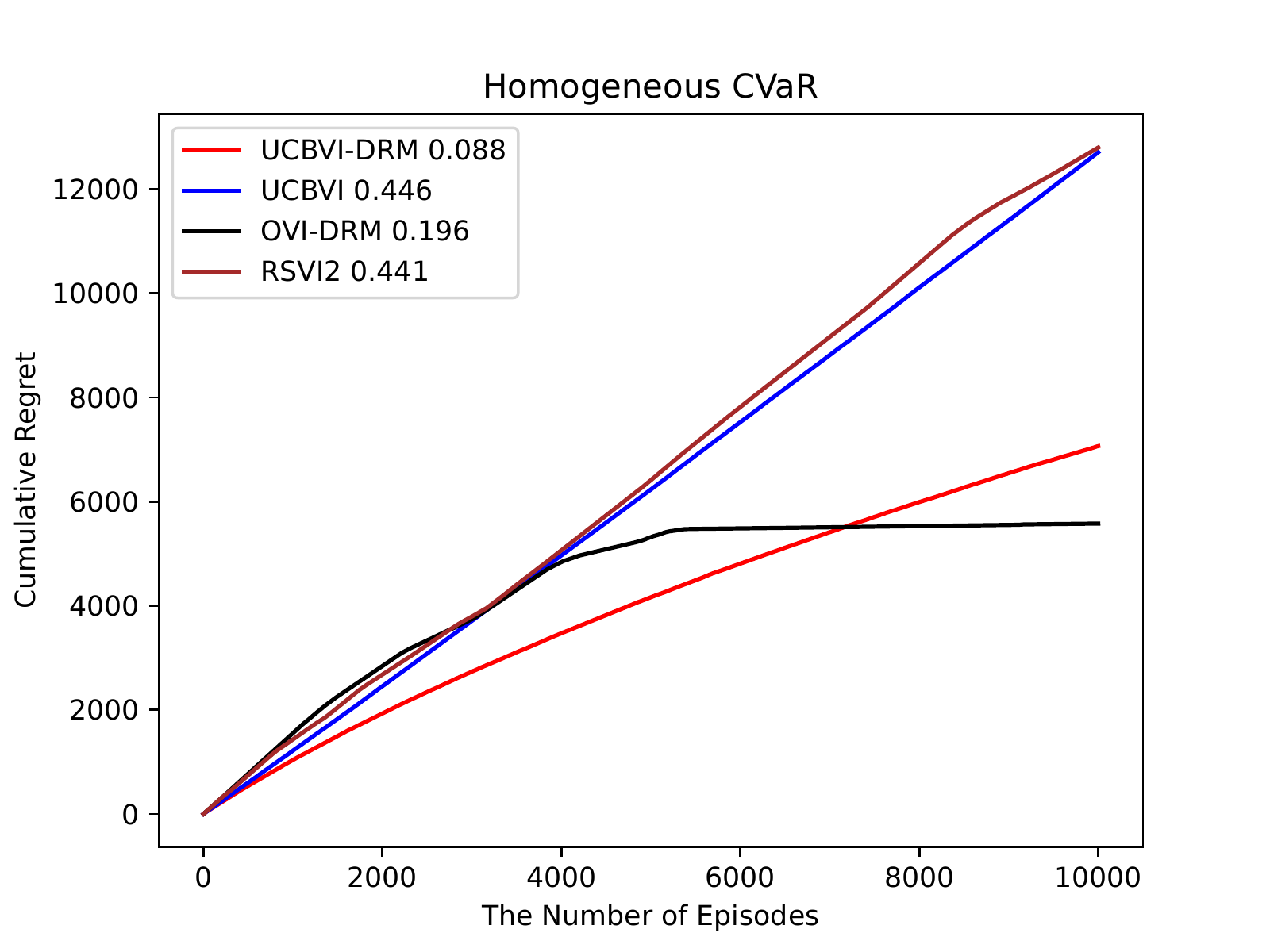}
    \caption{Comparison of different algorithms for homogeneous CVaR.}
    \label{fig:homo}
\end{subfigure}
\begin{subfigure}{0.49\textwidth}
    \includegraphics[width=\textwidth]{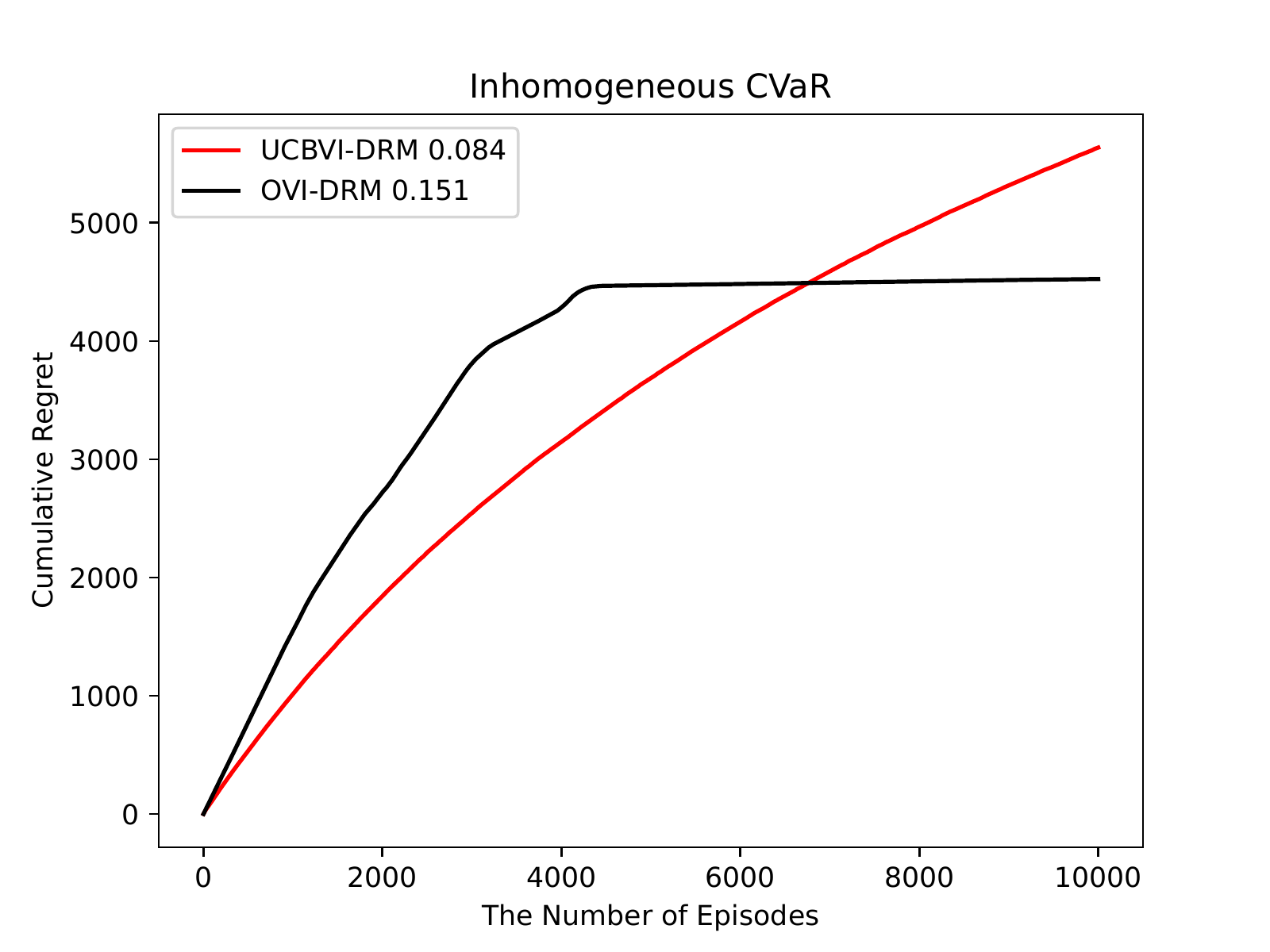}
    \caption{Comparison of different algorithms for in-homogeneous CVaR.}
    \label{fig:inhomo}
\end{subfigure}
\end{center}
\end{figure}

In our experiments, we focus on an  MDP with $S=3$ states, $A$ actions, and horizon $H$, which is similar to the construction  in \cite{duprovably}. The major difference is that we consider a non-stationary MDP. The MDP consists of a fixed initial state denoted as state 0, and three additional states denoted as states 1, 2, and 3. In step $2\leq h\leq H$, the three states generate reward 1,0 and 0.4, respectively. The agent starts from state $0$ in the first step, takes action from $[A]$, and then transitions to one of three states $\{1,2,3\}$ in the next step. Any action in $[A-1]$ leads to a uniform transition to state $1$ and state 2. The optimal action $A$ leads to  a transition to state 2 and state 3 with probability 0.001 and 0.999.

We consider the dynamic CVaR with the homogeneous and in-homogeneous setting. For the homogeneous CVaR, the coefficients at all steps are identical, while for the in-homogeneous CVaR, the coefficients at different steps could be different. We set  $\delta=0.005$, $A=5$, $H=5$ and $K=10000$. We set $\alpha=(0.05,0.05,0.05,0.05)$ and $\alpha=(0.09,0.08,0.07,0.05)$ for the homogeneous and in-homogeneous CVaR, respectively. We perform 5 independent runs for each algorithm.

As shown in Figure \ref{fig:homo}, \texttt{OVI-DRM} and \texttt{UCBVI-DRM} enjoy sublinear regret while the risk-neutral algorithm \texttt{UCBVI} and \texttt{RSVI2}  suffer linear regret. In particular, \texttt{OVI-DRM}
outperforms \texttt{UCBVI-DRM} because it achieves a better balance between exploration and exploitation. Figure \ref{fig:inhomo} only plots the results for our algorithms. It shows that \texttt{UCBVI-DRM} can also  learn  the optimal policy in the in-homogeneous CVaR setting. 

\section{Risk Measures}
\label{app:rm}
\subsection{Definitions}
\paragraph{Conditional Value at Risk (CVaR)}
The CVaR value \cite{rockafellar2000optimization} at level $\alpha\in(0,1)$ for a distribution $F$ is defined as 
\[  C_{\alpha}(F)\triangleq \sup_{\nu\in\bR}\cbrk{ \nu - \frac{1}{\alpha} \bE_{X\sim F}[(\nu-X)^+]}.    \]
\cite{acerbi2002coherence} showed that when $F$ is a continuous distribution,  $C_{\alpha}(F)=\bE_{X\sim F}[X|X\leq F^{-1}(\alpha)]$.
\paragraph{Spectral risk measure (SRM)}
SRM is class of risk measures that  generalizes CVaR via adopting a non-constant weighting function over the quantiles \cite{acerbi2002spectral}. The SRM value of $F$ is defined as
\[  S_{\phi}(F)\triangleq \int_0^1 \phi(y)F^{-1}(y)dy, \]
where $\phi:[0,1]\rightarrow[0,\infty)$ is  the weighting function. \cite{acerbi2002spectral} showed that an SRM is coherent  if $\phi$ is is decreasing and satisfies that $\int_0^1\phi(y)dy=1$. SRM can be viewed as a weighted average of the quantiles, with weight specified by $\phi(y)$. In fact, $S_{\phi}(F)$ specializes in $C_{\alpha}(F)$ for $\phi(y)=\frac{1}{\alpha}\mathbb{I}\{0\leq y\leq \alpha\}$.
\paragraph{Distortion risk measure}
For a distribution $F\in \sD([0,\infty))$, the distortion risk measure \cite{balbas2009properties,wirch2001distortion} $\rho_g(F)$ is defined as
\[  \rho_g(F)\triangleq \int_0^{\infty}g(1-F(x))dx,    \]
where $g:[0,1]\rightarrow[0,1]$ is a continuous increasing function with $g(0)=0$ and $g(1)=1$. We refer to $g$ as the distortion function. Distortion risk measure is coherent if and only if $g$ is convex. Similar to SRM, distortion risk measure can also recover CVaR by choosing proper $g$.
\paragraph{Entropic risk measure (ERM)}
ERM adjusts the risk attitude of the user through the exponential utility function. In particular, the ERM value of $F$ with coefficient $\beta\neq 0$ is defined as
\[  U_{\beta}(F)\triangleq \frac{1}{\beta}\log(\bE_{X\sim F}[\exp(\beta X)])=\frac{1}{\beta}\log\prt{\int_{\bR}\exp(\beta x)dF(x)}.\]
Notably, ERM is the prime example of a convex risk measure which is not coherent \cite{rudloff2008entropic}.
\paragraph{Optimized certainty equivalent (OCE)}
The OCE \cite{ben2007old} value of $F$ associated with a utility function $u$ is given by
\[  C_u(F)\triangleq \sup_{\lambda} \cbrk{\lambda+\bE_{X\sim F}[u(X-\lambda)]}=\sup_{\lambda} \cbrk{\lambda+\int_{\bR}u(x-\lambda)dF(x)}, \]
where $u$ is a non-decreasing, closed utility function that satisfies $u(0)=0$ and $1\in \partial u(0)$. THE OCE is risk-averse (risk-seeking) if and only $u$ is concave (convex). OCE subsumes important examples of popular risk measures, including the ERM and CVaR.
\subsection{Lipschitz Property}
We summarizes the Lipschitz constants of common risk measures over a finite interval $[a,b]$ in Table \ref{tab:lp}. \cite{prashanth2022wasserstein} provides the Lipschitz constants of SRM, OCE, and distortion risk measure w.r.t. the Wasserstein distance or $\norm{\cdot}_1$. \cite{huang2021off} provides the Lipschitz constants of distortion risk measure w.r.t. $\norm{\cdot}_{\infty}$. 

For completeness, we will derive the  Lipschitz constants of SRM and OCE w.r.t. $\norm{\cdot}_{\infty}$ in the following. Fact \ref{fct:1_inf} offers a simple way to derive the Lipschitz constant of a risk measure w.r.t. $\norm{\cdot}_1$ based on that w.r.t. the $\norm{\cdot}_{\infty}$. Therefore, the Lipschitz constants of SRM w.r.t. $\norm{\cdot}_{\infty}$ can take $L_{\infty}(M_{\phi},[a,b])=(b-a) \cdot L_{1}(M_{\phi},[a,b])=(b-a)\max |\phi(x)|$. As a special case, we have $L_{\infty}(C_{\alpha},[0,M])=\frac{b-a}{\alpha}$.
\begin{fact}
\label{fct:1_inf}
If a functional $\rho$ has Lipschitz constant $L_{1}([a,b])$ over $\sD([a,b])$, then it has  Lipschitz constant $L_{\infty}([a,b])=L_{1}([a,b])(b-a)$.
\end{fact}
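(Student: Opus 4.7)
The plan is to reduce the $\|\cdot\|_\infty$-Lipschitz bound to the assumed $\|\cdot\|_1$-Lipschitz bound via an elementary comparison of the two distances on the restricted class $\sD([a,b])$. The key observation is that for any $F \in \sD([a,b])$, the CDF satisfies $F(x) = 0$ for $x < a$ and $F(x) = 1$ for $x \geq b$, so for any pair $F, G \in \sD([a,b])$ the integrand in $\|F - G\|_1 = \int_{-\infty}^{\infty} |F(x) - G(x)|\,dx$ vanishes outside the interval $[a,b]$.

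Bounding the integrand pointwise by $\|F - G\|_\infty$ and integrating over the length-$(b-a)$ interval then yields the elementary comparison $\|F - G\|_1 \leq (b-a)\|F - G\|_\infty$. Combining this with the hypothesis $\rho(F) - \rho(G) \leq L_1([a,b])\|F - G\|_1$ immediately gives $\rho(F) - \rho(G) \leq L_1([a,b])(b-a)\|F - G\|_\infty$, which is exactly the desired Lipschitz bound with constant $L_\infty([a,b]) = L_1([a,b])(b-a)$. Since the argument is symmetric in $F$ and $G$, no extra work is needed to obtain the two-sided Lipschitz inequality.

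There is essentially no obstacle here; the entire claim is a one-line consequence of bounded support. The only point worth being explicit about in the write-up is that the bounded-support hypothesis encoded in $\sD([a,b])$ is indispensable: without it, two CDFs could differ on an unbounded set and $\|F-G\|_1$ could be infinite (or arbitrarily large) while $\|F-G\|_\infty$ remains bounded by $1$, which would make any inequality of the form $\|F-G\|_1 \leq C \|F-G\|_\infty$ impossible. On $\sD([a,b])$ this pathology is precisely ruled out, and the factor $(b-a)$ is the sharp constant in the comparison.
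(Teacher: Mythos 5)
Your proof is correct and follows essentially the same route as the paper: both arguments reduce the claim to the elementary comparison $\norm{F-G}_1 \leq (b-a)\norm{F-G}_\infty$ for $F,G \in \sD([a,b])$, obtained by noting that the integrand vanishes outside $[a,b]$ and bounding it pointwise by the sup-distance. Your added remark on why the bounded-support hypothesis is indispensable is a reasonable clarification but does not change the argument.
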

\begin{proof}
    Suppose $\rho$ has Lipschitz constant $L_{1}([a,b])$ over $\sD([a,b])$, then 
    \[|\rho(F)-\rho(G)|\leq L_{1}([a,b])\norm{F-G}_1 \leq L_{1}([a,b])\norm{F-G}_{\infty}(b-a), \forall F,G\in \sD([a,b]).\]
    This implies that  $L_{\infty}([a,b])=L_{1}([a,b])(b-a)$ is a valid choice.
\end{proof}
\begin{fact}
The Lipschitz constants of OCE w.r.t. $\norm{\cdot}_{\infty}$ is $L_{\infty}(C_{u},[a,b])=-u(a-b)$ for concave utility function and $L_{\infty}(C_{u},[a,b])=u(b-a)$ for convex utility function.
\end{fact}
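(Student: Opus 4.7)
The plan is to prove the Lipschitz bound on the OCE by localizing the dual variable and then applying integration by parts against the CDF difference. I will carry out the concave case in detail; the convex case follows by the symmetric argument.

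First I would fix $F,G \in \sD([a,b])$ and choose $\lambda^\star$ attaining the supremum in the definition of $C_u(F)$. From $C_u(G) \ge \lambda^\star + \int u(x-\lambda^\star)\, dG(x)$, I would obtain $C_u(F) - C_u(G) \le \int_{[a,b]} u(x-\lambda^\star)\, d(F-G)(x)$ and then apply Stieltjes integration by parts on $[a^-, b]$. Both boundary terms vanish because $F(a^-)=G(a^-)=0$ and $F(b)=G(b)=1$, leaving
\[
C_u(F) - C_u(G) \le -\int_a^b u'(x-\lambda^\star)(F(x)-G(x))\,dx \le \norm{F-G}_\infty \bigl[u(b-\lambda^\star) - u(a-\lambda^\star)\bigr].
\]

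Next I would localize $\lambda^\star$. The objective $\varphi(\lambda) := \lambda + \bE[u(X-\lambda)]$ has derivative $1 - \bE[u'(X-\lambda)]$ (read as a subgradient selection if $u$ is nonsmooth). For $X\in[a,b]$, concavity of $u$ together with $1\in\partial u(0)$ forces $u'(X-\lambda)\le 1$ when $\lambda\le a$ and $u'(X-\lambda)\ge 1$ when $\lambda\ge b$, so $\varphi$ is non-decreasing on $(-\infty,a]$ and non-increasing on $[b,\infty)$; hence $\lambda^\star\in[a,b]$. Then I would analyze $\psi(\lambda) := u(b-\lambda) - u(a-\lambda)$ on $[a,b]$: by concavity $u'$ is non-increasing so $\psi'(\lambda) = u'(a-\lambda) - u'(b-\lambda) \ge 0$, and thus $\psi$ attains its maximum at $\lambda = b$ with value $u(0) - u(a-b) = -u(a-b)$. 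Swapping the roles of $F$ and $G$ yields the two-sided bound, establishing the concave case.

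For the convex case I would run the same scheme with $u'$ now non-decreasing; this flips the sign of $\psi'(\lambda)$, so $\psi$ is non-increasing on $[a,b]$ and attains its maximum at $\lambda = a$, equal to $u(b-a) - u(0) = u(b-a)$. (For convex $u$ the OCE is typically defined via an infimum rather than a supremum, but the first-order argument flips sign accordingly and the localization $\lambda^\star\in[a,b]$ still holds.) The main technical subtlety is handling $u$ that is not everywhere differentiable: $u'$ must be interpreted as a one-sided derivative or subgradient selection, after which all pointwise inequalities above continue to hold. Integration by parts for CDFs on a compact interval is standard, so no additional work is needed there.
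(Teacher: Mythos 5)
Your proof is correct and follows essentially the same route as the paper's: evaluate both OCE objectives at the optimizer $\lambda^\star$ for one distribution, integrate by parts against $F-G$ to get the factor $u(b-\lambda^\star)-u(a-\lambda^\star)$, and then maximize this over $\lambda\in[a,b]$ using monotonicity of $u'$, yielding $-u(a-b)$ (concave) and $u(b-a)$ (convex). The only substantive difference is that you actually justify the localization $\lambda^\star\in[a,b]$ via the first-order condition and flag the sup-vs-inf issue for convex $u$, both of which the paper silently assumes by writing its optimization as $\max_{\lambda\in[a,b]}$ from the start.
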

\begin{proof}
    Let $\lambda_1,\lambda_2\in[a,b]$ satisfy 
    \begin{align*}
        C_u(F)&= \lambda_1 + \int_a^b u(x-\lambda_1) dF(x) =\max_{\lambda\in[a,b]} \lambda + \int_a^b u(x-\lambda) dF(x) \\
        C_u(G)&= \lambda_2 + \int_a^b u(x-\lambda_2) dG(x) =\max_{\lambda\in[a,b]} \lambda + \int_a^b u(x-\lambda) dG(x).
    \end{align*}
    Without loss generality, we assume $C_u(F)>C_u(G)$. It holds that
    \begin{align*}
        C_u(F)-C_u(G) &= \lambda_1 + \int_a^b u(x-\lambda_1) dF(x)-\lambda_2 - \int_a^b u(x-\lambda_2) dG(x) \\
        &\leq \lambda_1 + \int_a^b u(x-\lambda_1) dF(x)-\lambda_1 - \int_a^b u(x-\lambda_1) dG(x) \\
        &=\int_a^b u(x-\lambda_1) dF(x) -  \int_a^b u(x-\lambda_1) dG(x) \\
        &= u(x-\lambda_1)F(x)|_a^b-\int_a^b F(x) du(x-\lambda_1)-u(x-\lambda_1)G(x)|_a^b+\int_a^b G(x) du(x-\lambda_1) \\
        &= \int_a^b (G(x)-F(x)) du(x-\lambda_1) \\
        &\leq \int_a^b  du(x-\lambda_1) \cdot \norm{F-G}_{\infty} \\
        &= (u(b-\lambda_1)-u(a-\lambda_1))\norm{F-G}_{\infty} \\
        &\leq \max_{\lambda\in[a,b]}(u(b-\lambda)-u(a-\lambda))\norm{F-G}_{\infty}=L_{\infty}(C_u,[a,b])\norm{F-G}_{\infty},
    \end{align*}
    where the second inequality is due to that $u$ is non-decreasing. For concave utility function, we can bound the last term as 
    \[\max_{\lambda\in[a,b]}(u(b-\lambda)-u(a-\lambda)) = u(b-b)-u(a-b)=-u(a-b).\]
    For convex utility function, we can bound the last term as 
    \[\max_{\lambda\in[a,b]}(u(b-\lambda)-u(a-\lambda)) = u(b-a)-u(a-a)=u(b-a).\]
\end{proof}
\begin{fact}
The Lipschitz constants of OCE w.r.t. $\norm{\cdot}_{1}$ is $L_{1}(C_{u},[a,b])=u'(a-b)$ for concave utility function and $L_{1}(C_{u},[a,b])=u'(b-a)$ for convex utility function.
\end{fact}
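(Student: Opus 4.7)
The plan is to mirror the structure of the preceding proof for $L_{\infty}(C_u,[a,b])$, replacing the $\norm{F-G}_{\infty}$ bound on the final integral with a Hölder-style bound that keeps $u'(x-\lambda_1)$ inside the integrand so as to produce $\norm{F-G}_1$. Fix $F,G \in \sD([a,b])$ and let $\lambda_1,\lambda_2 \in [a,b]$ be the respective maximizers in the OCE representation. The restriction of the supremum to $[a,b]$ is legitimate because of the normalization $1 \in \partial u(0)$ together with the monotonicity of $u'$: for concave $u$, the stationarity condition $\bE[u'(X-\lambda)] = 1$ cannot be satisfied outside $[a,b]$ when $X$ is supported on $[a,b]$ (the convex case is symmetric). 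Assuming WLOG that $C_u(F) \geq C_u(G)$ and substituting $\lambda_1$ for $\lambda_2$ on the $G$-side, I obtain
\[ C_u(F) - C_u(G) \le \int_a^b u(x-\lambda_1)\, d(F-G)(x). \]

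Next, integration by parts in the Riemann--Stieltjes sense converts the right-hand side into $\int_a^b (G(x)-F(x))\, u'(x-\lambda_1)\, dx$, the boundary contributions vanishing since $F(a)=G(a)=0$ and $F(b)=G(b)=1$. Now, instead of pulling out $\norm{F-G}_{\infty}$ as in the $L_{\infty}$ proof, I would bound pointwise by $|F(x)-G(x)|$ and then pull the supremum of $u'(x-\lambda_1)$ over $x$ out of the integral; what remains is $\norm{F-G}_1$, giving $C_u(F)-C_u(G) \le \max_{x\in[a,b]} u'(x-\lambda_1) \cdot \norm{F-G}_1$. Taking the worst case over $\lambda_1 \in [a,b]$ yields $\sup_{x,\lambda \in [a,b]} u'(x-\lambda)$ as the Lipschitz constant. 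For concave $u$, $u'$ is non-increasing, so this supremum is attained at the smallest argument $x-\lambda = a-b$, producing $u'(a-b)$; for convex $u$, $u'$ is non-decreasing, so the supremum is at $x-\lambda = b-a$, yielding $u'(b-a)$.

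The main obstacle I expect is handling potential non-smoothness of $u$: if $u$ is merely concave (or convex) without being $C^1$, then $u'(x-\lambda_1)$ must be interpreted as a measurable selection from the subdifferential (say, the left or right derivative), and the integration-by-parts step has to be justified via Stieltjes integration for functions of bounded variation. A secondary technical point is verifying rigorously that the OCE optimizer lies in $[a,b]$, which rests on $1 \in \partial u(0)$ and the monotonicity of $u'$; this is standard but worth spelling out. Modulo these regularity checks, the new argument is essentially a one-line substitution of the $\norm{\cdot}_1$--$\norm{\cdot}_{\infty}$ duality in place of the bound used in the previous fact.
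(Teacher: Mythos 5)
Your proposal is correct and follows essentially the same route as the paper's proof: replace $\lambda_2$ by the suboptimal $\lambda_1$ on the $G$-side, integrate by parts to get $\int_a^b (G(x)-F(x))\,u'(x-\lambda_1)\,dx$, and bound by $\max_{x,\lambda\in[a,b]}u'(x-\lambda)\cdot\norm{F-G}_1$, with monotonicity of $u'$ pinning down the constant as $u'(a-b)$ (concave) or $u'(b-a)$ (convex). Your additional remarks on the location of the optimizer and on interpreting $u'$ via the subdifferential for non-smooth $u$ are sensible regularity caveats that the paper glosses over, but they do not change the argument.
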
 
\begin{proof}
    Let $\lambda_1,\lambda_2\in[a,b]$ satisfy 
    \begin{align*}
        C_u(F)&= \lambda_1 + \int_a^b u(x-\lambda_1) dF(x) =\max_{\lambda\in[a,b]} \lambda + \int_a^b u(x-\lambda) dF(x) \\
        C_u(G)&= \lambda_2 + \int_a^b u(x-\lambda_2) dG(x) =\max_{\lambda\in[a,b]} \lambda + \int_a^b u(x-\lambda) dG(x).
    \end{align*}
    Without loss generality, we assume $C_u(F)>C_u(G)$. It holds that
    \begin{align*}
        C_u(F)-C_u(G) &= \lambda_1 + \int_a^b u(x-\lambda_1) dF(x)-\lambda_2 - \int_a^b u(x-\lambda_2) dG(x) \\
        &\leq \int_a^b u(x-\lambda_1) dF(x) -  \int_a^b u(x-\lambda_1) dG(x) \\
        &= \int_a^b (G(x)-F(x)) du(x-\lambda_1) \\
        &= \int_a^b (G(x)-F(x)) u'(x-\lambda_1)dx \\
        &\leq \max_{\lambda\in[a,b],x\in[a,b]}u'(x-\lambda)\int_a^b |G(x)-F(x)| dx \\
        &=L_{1}(C_u,[a,b])\norm{F-G}_{1},
    \end{align*}
    where the second inequality is due to the non-negativity of $u'$. For concave utility function, we can bound the last term as 
    \[\max_{\lambda\in[a,b],x\in[a,b]}u'(x-\lambda) = u'(a-b).\]
    For convex utility function, we can bound the last term as 
    \[\max_{\lambda\in[a,b],x\in[a,b]}u'(x-\lambda) = u'(b-a).\]
\end{proof}
OCE subsumes ERM when $u(x)=\frac{\exp(\beta x)-1}{\beta}$. In particular, $L_{\infty}(U_{\beta},[a,b])=-u(a-b)=-\frac{\exp(\beta (a-b))-1}{\beta}=\frac{\exp(|\beta| (b-a))-1}{|\beta|}$ for concave utility ($\beta<0$) and $L_{\infty}(U_{\beta},[a,b])=u(b-a)=\frac{\exp(\beta (b-a))-1}{\beta}$ for convex utility ($\beta>0$). 
\begin{table}
	\caption{Lipschitz constants of common risk measures}	
	\label{tab:lp}
	\centering
	\begin{tabular}{ccccc}
		\hline
		Lipschitz constant & CVaR &SRM &distortion risk measure  &OCE\footnote{For convenience, we only list OCE with concave risk measure.} \\ [0.5ex] 
		\hline
		$L_1([a,b])$ & $\frac{1}{\alpha}$   &$\max \phi(x)$  &$\max g'(x)$ & $ u'(a-b)$ \\ [0.5ex] 
		$L_{\infty}([a,b])$ & $\frac{b-a}{\alpha}$  &$(b-a)\max \phi(x)$   &$(b-a)\max g'(x)$  & $-u(a-b)$\\
		\hline
	\end{tabular}
\end{table} 
\section{Subroutine}
\label{app:sub} 
We present the subroutine \texttt{OM} used in Algorithm \ref{alg:OVI-DRM} in this section. Fix an $(s,a,k,k)$, \texttt{OM} takes the empirical model $\hat{P}^k_h(s,a)$, the value at the next step $V^k_{h+1}$, and a confidence radius $c^k_h(s,a)$ as input and outputs the optimistic model $\tilde{P}^k_h(s,a)$. For a PMF $P$ and a real number $c>0$, denote by $B_1(P,c)\triangleq \cbrk{P'|\norm{P'-P}_1\leq c}$ the $\ell_1$ norm ball centered at $P$ with radius $c$. 

\begin{algorithm}[tb]
	\caption{\texttt{OM}}
	\label{alg:OM}
	\begin{algorithmic}[1]
        \State{Input: $P=(P(s_{1}),\dots,P(s_{S}))$, $V=(V(s_{1}),\dots,V(s_{S}))$ and $c>0$}
		\State{Sorting: let $V'=(V(s_{(1)}),\cdots,V(s_{(S)}))$ such that $V(s_{(1)})\leq V(s_{(2)}) \leq \cdots \leq V(s_{(S)})$}
        \State{Permutation: let $P'=(P(s_{(1)}),\cdots,P(s_{(S)}))$} 
        \State{Transport: sequentially move probability mass $\frac{c}{2}\land1$ of the leftmost states to $s_{(S)}$} 
	\end{algorithmic} 
\end{algorithm}
Recall that we $F\succeq G$ denotes that $F(x)\leq G(x),\forall x\in\bR$. Lemma \ref{lem:om} shows that 
\texttt{OM} can output an optimistic model $\tilde{P}$ whose value distribution dominates those generated by the model within the concentration ball. 
\begin{lemma}
\label{lem:om}    
Let $P,V,c$ be the input of \texttt{OM} and $\tilde{P}$ be the output. It holds that 
\[ (\tilde{P},V) \succeq (Q,V), \forall Q \in B_1(P,c). \]
\end{lemma}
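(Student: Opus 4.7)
\textbf{Proof proposal for Lemma \ref{lem:om}.} The goal is to show first-order stochastic dominance: for every threshold $t \in \bR$, the CDF of $(\tilde{P}, V)$ at $t$ is at most the CDF of $(Q, V)$ at $t$, uniformly over $Q \in B_1(P, c)$. The plan is to reduce everything to sums over the lower level sets $L_t \triangleq \{s \in \cS : V(s) \leq t\}$, since $F_{(R,V)}(t) = \sum_{s \in L_t} R(s)$ for any PMF $R$.

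First, I would handle the trivial case $c \ge 2$: then the transport step caps at mass $1$ and deposits all mass on $s_{(S)}$, so $\tilde{P} = \delta_{s_{(S)}}$, which stochastically dominates any distribution supported on $\cS$.

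For the main case $c < 2$, I would compute $F_{(\tilde{P},V)}(t)$ directly from the construction of \texttt{OM}. By the sorting and permutation steps, \texttt{OM} removes exactly mass $c/2$ from the set of smallest-value states $L_t$ whenever $\sum_{s \in L_t} P(s) \ge c/2$, and removes all of $\sum_{s \in L_t} P(s)$ otherwise (shifting that mass to $s_{(S)} \notin L_t$ as long as $t < V(s_{(S)})$). Hence
\begin{equation*}
\sum_{s \in L_t} \tilde{P}(s) \;=\; \max\!\left(\sum_{s \in L_t} P(s) - \tfrac{c}{2},\; 0\right), \qquad t < V(s_{(S)}),
\end{equation*}
and $F_{(\tilde{P},V)}(t) = 1$ for $t \ge V(s_{(S)})$, in which case dominance is immediate.

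Next, for an arbitrary $Q \in B_1(P,c)$, write $\Delta = Q - P$. Since $Q$ and $P$ are PMFs, $\sum_{s} \Delta(s) = 0$, and since $\|\Delta\|_1 \le c$, the positive and negative parts of $\Delta$ each have total mass at most $c/2$. Applying this to the subset $L_t$ gives
\begin{equation*}
\sum_{s \in L_t} Q(s) \;=\; \sum_{s \in L_t} P(s) + \sum_{s \in L_t} \Delta(s) \;\ge\; \sum_{s \in L_t} P(s) - \tfrac{c}{2}.
\end{equation*}
Combining with the display above yields $F_{(\tilde{P},V)}(t) \le \max(F_{(P,V)}(t) - c/2, 0) \le F_{(Q,V)}(t)$, since $F_{(Q,V)}(t) \ge 0$ in the degenerate case. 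This is the desired stochastic dominance for every $t$, and hence $(\tilde{P}, V) \succeq (Q, V)$.

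The only delicate step is verifying the closed form for $\sum_{s \in L_t} \tilde{P}(s)$ produced by the sequential transport, in particular showing that mass is removed from $L_t$ until either $c/2$ units have been moved or $L_t$ is empty of probability. This is a direct bookkeeping argument using the sorted ordering of $V$: because states are processed in nondecreasing order of $V$, for any $t$ the mass removed from $L_t$ equals $\min(c/2, \sum_{s \in L_t} P(s))$, which is exactly what the bound requires. No other nontrivial ingredients are needed.
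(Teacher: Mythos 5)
Your proposal is correct and takes essentially the same route as the paper: both reduce stochastic dominance to comparing prefix sums in the sorted order, compute the prefix sums of $\tilde{P}$ explicitly from the transport step (your unified formula $\max(\sum_{s\in L_t}P(s)-c/2,\,0)$ cleanly absorbs the paper's separate case $P_n+\frac{c}{2}>1$), and then lower-bound the prefix sums of $Q$ using the fact that a prefix sum can deviate by at most $c/2$ within an $\ell_1$ ball of radius $c$ — which you derive via the positive/negative part decomposition of $Q-P$ and the paper derives by contradiction, the same underlying total-variation observation.
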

\begin{proof}
For simplicity, let $P=(P_1,\cdots,P_n)$, $V\in\bR^n$ satisfying $V_1\leq V_2\cdots \leq V_n$. Observe that the CDF $(P,V)$ is a piecewise constant function. Hence it suffices to show that 
    \[ \sum_{j=1}^i \tilde{P}_j \leq  \sum_{j=1}^i Q_j, \forall i\in[n], \forall Q \in B_1(P,c). \]
Let $l\triangleq \min\cbrk{i|\sum_{j=1}^i P_j\ge\frac{c}{2}}$. There are two cases. 

Case 1: $P_n +\frac{c}{2}\leq1$. Since $\sum_{j=1}^{l-1} P_{j}<\frac{c}{2}$ and $\sum_{j=1}^{l} P_{j}\ge \frac{c}{2}$, we have 
\[ \tilde{P}_i =\begin{cases}
	0, i\in[l-1] \\
	\sum_{j=1}^l P_j-\frac{c}{2}, i=l \\
	P_i, l+1 \leq i \leq n-1 \\
    P_n+\frac{c}{2}, i=n
	\end{cases}\]
and thus
\[ \sum_{j=1}^{i} \tilde{P}_{j} =\begin{cases}
	0, i\in[l-1] \\
	\sum_{j=1}^l P_j-\frac{c}{2}, l\leq i\leq n-1 \\
    1, i=n
	\end{cases}\]
For any $Q\in B_1(P,c)$, it holds that
\[ \sum_{j=1}^i \tilde{P}_j \leq  \sum_{j=1}^i Q_j, \forall i\in[n]. \]
Otherwise $\sum_{j=1}^k Q_j < \sum_{j=1}^k P_j-\frac{c}{2}$ for some $l \leq k \leq n-1$, which implies $\sum_{j=k+1}^n Q_j =1-\sum_{j=1}^k Q_j> 1-\sum_{j=1}^k P_j+\frac{c}{2}=\sum_{j=1}^k P_j+\frac{c}{2}$. This leads to a contradiction $\norm{P-Q}_1=\sum_{j\in[n]}|P_j-Q_j| \ge|\sum_{j=1}^k P_j-\sum_{j=1}^k Q_j|+|\sum_{j=k+1}^n P_j-\sum_{j=k+1}^n Q_j|>c$.

Case 2: $P_n +\frac{c}{2}>1$. In this case, $\tilde{P}_j=0$ for $j\in[n-1]$ and $\tilde{P}_n=1$. It is obvious that 
\[ \sum_{j=1}^i \tilde{P}_j \leq  \sum_{j=1}^i Q_j, \forall i\in[n], \forall Q \in B_1(P,c). \]
Therefore, we have
\[(\tilde{P},V) \succeq (Q,V), \forall Q \in B_1(P,c).\]
\end{proof}
Lemma \ref{lem:om} together with the monotonicity of $\rho_h$ implies that the output $\tilde{P}^k_h(s,a)$ satisfies  
\[  \rho_h(\tilde{P}^k_h(s,a),V^k_{h+1}) \ge \rho_h(P',V^k_{h+1}), \ \forall P'\in B_1(\hat{P}^k_h(s,a),c^k_h(s,a)).\]
In Appendix \ref{app:ub}, we will show that $P_h(s,a)\in B_1(\hat{P}^k_h(s,a),c^k_h(s,a))$ with high probability. Suppose $V^k_{h+1}(s)\ge V^*_{h+1}(s), \forall s$. It follows that 
\begin{align*}
    Q^k_h(s,a)&=r_h(s,a)+\rho_h(\tilde{P}^k_h(s,a),V^k_{h+1}) \ge  r_h(s,a)+\rho_h(P_h(s,a),V^k_{h+1}) \\
    &\ge r_h(s,a)+\rho_h(P_h(s,a),V^*_{h+1}) =Q^*_h(s,a), \forall (s,a).
\end{align*}
The second inequality is due to the monotonicity of $\rho_h$ together with the fact that
\[ V^k_{h+1} \ge V^*_{h+1} \Longrightarrow (P_h(s,a),V^k_{h+1}) \succeq  (P_h(s,a),V^*_{h+1}). \]
Then we have $V^k_h(s)=\max Q^k_h(s,a) \ge \max Q^*_h(s,a) =V^*_h(s), \forall s$. By induction, we obtain 
\[  V^k_h(s) \ge V^*_h(s), \forall (k,h,s). \]
This implies that the value functions induced by the optimistic models are indeed optimistic compared to the optimal value functions.  The computational complexity of \texttt{OM} is $O(S\log(S))$, since the computational complexity of each step is $O(S\log(S))$, $O(S)$, and $O(S)$.

\section{Proofs of Regret Upper Bounds}
\label{app:ub}
\subsection{Worst-case Regret Upper Bound}

We first prove the worst-case regret upper bound for Algorithm \ref{alg:UCBVI-DRM}.
\subsubsection{Proof for Algorithm \ref{alg:UCBVI-DRM}}
\paragraph{Step 1: verify optimism.}
Fix an arbitrary $\delta\in(0,1)$. Define the good event $\mathcal{G}_{\delta}$ as
{\small
\begin{align*}
  \mathcal{G}_{\delta}\triangleq &\left\{\norm{\hat{P}^k_h(\cdot|s,a)-P_h(\cdot|s,a)}_{1} \leq\sqrt{\frac{2S}{N^k_h(s,a)\vee 1}\iota}, \forall (s,a,k,h)\in\mathcal{S}\times\mathcal{A}\times[K]\times[H]\right\},  
\end{align*}
}
under which the empirical model concentrates around the true model under $\norm{\cdot}_1$. 
\begin{lemma}[High probability good event]
\label{lem:main_event}
The event $\mathcal{G}_{\delta}$ holds with probability at least $1-\delta$.
\end{lemma}
\begin{fact}[$\ell_1$ concentration bound, \cite{weissman2003inequalities}]
\label{fct:l1_con}
 Let $P$ be a probability distribution over a finite discrete measurable space $(\mathcal{X}, \Sigma) .$ Let $\widehat{P}_{n}$ be the empirical distribution of $P$ estimated from $n$ samples. Then with probability at least $1-\delta$,
$$
\left\|\widehat{P}_{n}-P\right\|_{1} \leq \sqrt{\frac{2|\mathcal{X}|}{n} \log \frac{1}{\delta}}.
$$
\end{fact}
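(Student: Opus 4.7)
The plan is the classical two-step reduction: first relate the $\ell_1$ distance between probability mass functions to deviations of set-probabilities, then control each set-probability by Hoeffding's inequality and close with a union bound over subsets of $\mathcal{X}$. The probabilistic content is a single application of Hoeffding to a Bernoulli average, and the combinatorial content is bounding the number of subsets by $2^{|\mathcal{X}|}$; the rest is inversion and book-keeping of constants.

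For the reduction step, I would invoke the well-known identity $\|P-Q\|_1 = 2\sup_{A\subseteq\mathcal{X}}(P(A)-Q(A))$, whose proof amounts to observing that the supremum is attained at $A^\star = \{x : P(x)\geq Q(x)\}$ and that $\sum_{x\in A^\star}(P(x)-Q(x)) = \tfrac{1}{2}\|P-Q\|_1$ because $P$ and $Q$ are both probability measures on $\mathcal{X}$. Applied to $P$ and $\hat P_n$, this turns the goal into a uniform control of $|\hat P_n(A)-P(A)|$ over $A\subseteq\mathcal{X}$.

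For the deviation step, I would fix $A\subseteq\mathcal{X}$ and note that $\hat P_n(A)=\tfrac{1}{n}\sum_{i=1}^n\mathbf{1}\{X_i\in A\}$ is the empirical mean of $n$ i.i.d.\ Bernoulli$(P(A))$ indicators, so Hoeffding's inequality yields $\Pr(|\hat P_n(A)-P(A)|\geq\epsilon)\leq 2e^{-2n\epsilon^2}$. Union-bounding over the $2^{|\mathcal{X}|}$ subsets and combining with the identity from the previous paragraph gives
\[
\Pr\bigl(\|\hat P_n-P\|_1\geq 2\epsilon\bigr)\leq 2^{|\mathcal{X}|+1}\,e^{-2n\epsilon^2}.
\]
Setting the right-hand side equal to $\delta$ and solving for $2\epsilon$ produces the bound $\sqrt{(2/n)\bigl((|\mathcal{X}|+1)\log 2+\log(1/\delta)\bigr)}$ with probability at least $1-\delta$. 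To recover the cleaner form stated in Fact~\ref{fct:l1_con}, I absorb the $(|\mathcal{X}|+1)\log 2$ term into $|\mathcal{X}|\log(1/\delta)$ at the price of a constant, which is valid whenever $\delta$ is small enough relative to $|\mathcal{X}|$ (e.g., $\delta\leq\tfrac{1}{4}$ and $|\mathcal{X}|\geq 2$).

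The main obstacle I anticipate is not probabilistic but cosmetic: reconciling the $(|\mathcal{X}|+1)\log 2 + \log(1/\delta)$ factor produced by the union bound with the textbook form $|\mathcal{X}|\log(1/\delta)$. A cleaner alternative, which sidesteps the issue, is to use McDiarmid's bounded-differences inequality — replacing one $X_i$ changes $\|\hat P_n-P\|_1$ by at most $2/n$ — together with the expectation bound $\mathbb{E}\|\hat P_n-P\|_1\leq\sqrt{|\mathcal{X}|/n}$ (itself a Jensen/Cauchy--Schwarz calculation on the per-coordinate binomial variance); I would keep this as a backup route in case the absorption of constants in the subsets-based proof becomes awkward.
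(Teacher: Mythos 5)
The paper never proves this statement: it is quoted as a known result, with the proof living entirely in the cited reference \cite{weissman2003inequalities}. So the only meaningful comparison is with that reference, and your argument is essentially its standard proof: the identity $\norm{P-Q}_1=2\max_{A\subseteq\mathcal{X}}\prt{P(A)-Q(A)}$ for probability measures, Hoeffding's inequality for each fixed $A$ (a Bernoulli average), and a union bound over subsets of $\mathcal{X}$. The reduction, the Hoeffding step, and the union bound are all correct, and your backup route (McDiarmid's inequality with bounded differences $2/n$ plus $\bE\norm{\hat P_n-P}_1\leq\sqrt{|\mathcal{X}|/n}$) is also sound.

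The one concrete slip is in the final absorption step. From $2^{|\mathcal{X}|+1}e^{-2n\epsilon^2}=\delta$ you need $(|\mathcal{X}|+1)\log 2+\log(1/\delta)\leq|\mathcal{X}|\log(1/\delta)$, and your claimed sufficient condition ``$\delta\leq\tfrac14$ and $|\mathcal{X}|\geq2$'' fails: at $|\mathcal{X}|=2$, $\delta=\tfrac14$ the left side is $5\log 2$ while the right side is $4\log 2$. You can repair this either by strengthening the condition (e.g.\ $|\mathcal{X}|\geq3$ when $\delta\leq\tfrac14$, or $\delta\leq\tfrac18$ when $|\mathcal{X}|=2$), or, better, by tightening the union bound exactly as \cite{weissman2003inequalities} does: since $\norm{\hat P_n-P}_1=2\max_{A}\prt{\hat P_n(A)-P(A)}$ and the maximizing set is neither $\emptyset$ nor $\mathcal{X}$, one-sided Hoeffding over the $2^{|\mathcal{X}|}-2$ nontrivial subsets gives $\Pr\prt{\norm{\hat P_n-P}_1\geq2\epsilon}\leq(2^{|\mathcal{X}|}-2)e^{-2n\epsilon^2}$, after which the absorption $|\mathcal{X}|\log 2+\log(1/\delta)\leq|\mathcal{X}|\log(1/\delta)$ does hold for all $|\mathcal{X}|\geq2$, $\delta\leq\tfrac14$. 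Note also that some restriction of this kind is unavoidable: the clean form $\sqrt{(2|\mathcal{X}|/n)\log(1/\delta)}$ is not a consequence of the exponential tail bound when $\delta$ is close to $1$, a sloppiness the paper's unqualified statement inherits from how this fact is usually quoted; it is harmless here because the paper only invokes the fact at confidence levels of order $\delta/(SAT)$.
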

Lemma \ref{lem:main_event} does not directly follow from a union bound together with Fact \ref{fct:l1_con} since the case $N^k_h(s,a)=0$ need to be checked.
\begin{proof}
	Fix some $(s,a,k,h)\in\mathcal{S}\times\mathcal{A}\times[K]\times[H]$. If $N^k_h(s,a)=0$, then we have $\hat{P}^k_h(\cdot|s,a)=\frac{1}{S}\textbf{1}$. A simple calculation yields that for  any $P_h(\cdot|s,a)$
	\begin{align*}
     \norm{\frac{1}{S}\textbf{1}-P_h(\cdot|s,a)}_{1}\leq2\leq\sqrt{2S\log(1/\delta)}.
	\end{align*}
	It follows that 
	\begin{align*}
	\mathbb{P} &\left(\norm{\hat{P}^k_h(\cdot|s,a)-P_h(\cdot|s,a)}_{1}\leq \left. \sqrt{\frac{2S}{N^k_h(s,a)\vee 1}\log(1/\delta)}\right| N^k_h(s,a)=0 \right) =1>1-\delta. 
	\end{align*}
	The event is true for the unseen state-action pairs. Now we consider the case that $N^k_h(s,a)>0$. By Fact \ref{fct:l1_con} , we have that for any integer $n\ge1$
	\begin{align*}
	\mathbb{P} & \left(\norm{\hat{P}^k_h(\cdot|s,a)-P_h(\cdot|s,a)}_{1}  \left.\leq\sqrt{\frac{2S}{N^k_h(s,a)}\log(1/\delta)} \right| N^k_h(s,a)=n \right)\ge1-\delta.
	\end{align*}
	Thus we have
	\begin{align*}
	&\mathbb{P} \left(   \norm{\hat{P}^k_h(\cdot|s,a)-P_h(\cdot|s,a)}_{1}\leq\sqrt{\frac{2S\log(1/\delta)}{N^k_h(s,a)}} \right)\\
	&=\sum_{n=0,1,\cdots}\mathbb{P} \left(   \norm{\hat{P}^k_h(\cdot|s,a)-P_h(\cdot|s,a)}_{1} \leq \left.\sqrt{\frac{2S\log(1/\delta)}{N^k_h(s,a)\vee 1}} \right| N^k_h(s,a)=n\right)\mathbb{P}(N^k_h(s,a)=n ) \\
	&\ge (1-\delta) \sum_{n=0,1,\cdots}\mathbb{P}(N^k_h(s,a)=n )=1-\delta.
	\end{align*}
	Applying a union bound over all $(s,a,k,h)$ and rescaling $\delta$ leads to the result.
\end{proof}
\begin{lemma}[Range of $V^*$]
\label{lem:range}
    For any MDP, it holds that $V^*_h(s)\in[0,H+1-h]$ for all $(s,h)\in\cS\times[H+1]$.
\end{lemma}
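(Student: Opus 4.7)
The plan is to prove the statement by backward induction on $h$, using the Bellman optimality equation and the range condition on $\rho_h$ from Assumption \ref{asp:rm}.

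For the base case $h = H+1$, we have $V^{*}_{H+1}(s) = 0$ by definition, which lies in $[0, 0] = [0, H+1-(H+1)]$. For the step $h = H$, the Bellman optimality equation (together with the convention $V^{*}_{H+1} \equiv 0$) gives $V^{*}_H(s) = \max_{a} r_H(s,a)$, and since rewards are in $[0,1]$, we get $V^{*}_H(s) \in [0,1] = [0, H+1-H]$.

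For the inductive step with $h \in [H-1]$, I would assume $V^{*}_{h+1}(s') \in [0, H-h]$ for every $s' \in \cS$. Then the next-state value distribution $(V^{*}_{h+1}, P_h(s,a))$ is supported in $[0, H-h]$, i.e., it belongs to $\sD([0, H-h])$. Invoking the range condition in Assumption \ref{asp:rm}, namely $\rho_h : \sD([a,b]) \to [a,b]$ with $[a,b] = [0, H-h]$, yields $\rho_h(V^{*}_{h+1}, P_h(s,a)) \in [0, H-h]$. Combined with $r_h(s,a) \in [0,1]$, this gives $Q^{*}_h(s,a) \in [0, H+1-h]$ for every $(s,a)$, and taking the maximum over $a$ preserves the interval, so $V^{*}_h(s) \in [0, H+1-h]$.

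This proof is essentially routine once the assumption $\rho_h : \sD([a,b]) \to [a,b]$ is invoked; there is no real obstacle. The only subtlety worth flagging is making sure the induction is set up with the correct interval at each step so that the assumption applies with $[a,b] = [0, H-h]$, and handling the terminal step $h = H$ separately since $\rho_H$ is not defined in the model.
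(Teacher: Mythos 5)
Your proof is correct and follows essentially the same route as the paper's: backward induction on $h$ using the Bellman optimality equation, the bound $r_h(s,a)\in[0,1]$, and the range condition $\rho_h:\sD([a,b])\to[a,b]$ from Assumption \ref{asp:rm}. Your separate handling of the terminal step $h=H$ is a minor (and reasonable) refinement of the paper's argument, which starts the induction directly from $V^*_{H+1}=0$.
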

\begin{proof}
The proof follows from induction and Assumption \ref{asp:rm}. Observe that $V^*_{H+1}=0$. Suppose $V^*_{h+1}(s)\in[0,H-h]$ for any $s$, then we have
\begin{align*}
    0\leq Q^*_h(s,a)=r_h(s,a)+\rho_h(V^*_{h+1},P_h(s,a))\leq 1+H-h,
\end{align*}
where the inequalities are due to the Assumption \ref{asp:rm}. Then we have $V^*_h(s)=\max_a Q^*_h(s,a)\in[0,H+1-h]$. The induction is completed.
\end{proof}
\begin{fact}[DKW inequality for discrete distribution]
\label{fct:dkw}
    Let $\hat{P_n}$ be the empirical PMF for $(x,P)$ with  $n$ samples, then w.p. at least $1-\delta$
    \[ \norm{(x,P)-(x,\hat{P}_n)}_{\infty} \leq \sqrt{\frac{\log(2/\delta)}{2n}}. \]
\end{fact}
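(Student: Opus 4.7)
The plan is to recognize the statement as an instance of the classical Dvoretzky--Kiefer--Wolfowitz inequality with Massart's tight constant, applied to the empirical CDF of $n$ i.i.d.\ draws from the discrete distribution $(x,P)$. Writing $F$ for the CDF of $(x,P)$ and $\hat{F}_n$ for the empirical CDF built from the $n$ samples, the supremum distance $\norm{(x,P)-(x,\hat{P}_n)}_{\infty}$ is by definition $\sup_{t\in\bR}\abs{\hat{F}_n(t)-F(t)}$, so the quantity we need to bound is exactly what DKW controls.

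First I would recall the DKW--Massart bound: for any probability distribution on $\bR$ (discrete or continuous) and the empirical CDF from $n$ i.i.d.\ samples,
\[ \bP\prt{\sup_{t\in\bR}\abs{\hat{F}_n(t)-F(t)} > \varepsilon} \leq 2\exp(-2n\varepsilon^2). \]
The proof of this inequality does not rely on continuity of $F$, so it applies verbatim when $F$ is the step function associated with the discrete distribution $(x,P)$. Setting the right-hand side equal to $\delta$ and inverting gives $\varepsilon=\sqrt{\log(2/\delta)/(2n)}$, which yields the stated bound on the event whose complement has probability at most $\delta$.

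The only minor point worth verifying is the translation between the notations: $\hat{P}_n$ denotes the empirical PMF, i.e.\ $\hat{P}_n(x_i)=\frac{1}{n}\sum_{j=1}^n \bI\{X_j=x_i\}$ where $X_1,\ldots,X_n\overset{\text{iid}}{\sim}(x,P)$, and the CDF of $(x,\hat{P}_n)$ coincides with the usual empirical CDF $\hat{F}_n(t)=\frac{1}{n}\sum_{j=1}^n \bI\{X_j\leq t\}$. Once this identification is made, the supremum-norm distance between the two discrete CDFs is precisely the Kolmogorov--Smirnov statistic, and the DKW--Massart inequality finishes the argument in a single line.

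There is no genuine obstacle here: the statement is a direct specialization of a standard tail inequality, and I do not anticipate needing auxiliary lemmas beyond citing Massart's sharpening of DKW. The only care required is to make explicit that the inequality is distribution-free and therefore applicable to the discrete setting at hand.
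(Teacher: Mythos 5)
Your proposal is correct and matches the paper's treatment: the paper states this as a fact without proof, implicitly relying on exactly the same observation you make, namely that the supremum distance between the CDFs of $(x,P)$ and $(x,\hat{P}_n)$ is the Kolmogorov--Smirnov statistic and that the DKW--Massart bound $2\exp(-2n\varepsilon^2)$ is distribution-free, hence valid for discrete $F$. Inverting at level $\delta$ gives the stated radius, as you say.
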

We remark that we can also derive a bound by Fact \ref{fct:inf_1}: w.p. $1-\delta$
\[ \norm{(x,P)-(x,\hat{P}_n)}_{\infty} \leq \norm{P-\hat{P}_n}_1 \leq \sqrt{\frac{2m\log(2/\delta)}{n}}. \]
However, this bound is looser than that from Fact \ref{fct:dkw} with a factor of $\sqrt{m}$.
\begin{lemma}[Optimistic value function]
\label{lem:main_opt} 
Conditioned on event $\mathcal{G}_{\delta}$, the sequence  $\{V^k_1(s^k_1)\}_{k\in[K]}$ produced by Algorithm \ref{alg:UCBVI-DRM} satisfies $V^k_1(s^k_1) \ge V^*_1(s^k_1), \forall k \in [K]$.
\end{lemma}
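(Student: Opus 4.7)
The plan is to prove the stronger pointwise statement $V^k_h(s) \ge V^*_h(s)$ for every $(k,h,s)\in[K]\times[H+1]\times\cS$ by backward induction on $h$, from which the lemma's claim at $(h,s)=(1,s^k_1)$ is immediate. Since the bonus $b^k_h(s,a)=L_{\infty,h}\sqrt{\iota/(2(N^k_h(s,a)\vee 1))}$ is calibrated to the supremum distance between CDFs rather than the $\ell_1$ distance, I would first ensure that the good event $\cG_\delta$ is understood to include (or should be augmented with) the DKW-type event
\[ \norm{(V^*_{h+1},P_h(s,a))-(V^*_{h+1},\hat{P}^k_h(s,a))}_{\infty}\leq \sqrt{\frac{\iota}{2(N^k_h(s,a)\vee1)}} \]
for all $(s,a,k,h)$. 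This follows from Fact \ref{fct:dkw} together with a union bound over $SAKH$ many quadruples, treating the $N^k_h(s,a)=0$ case separately (as in the proof of Lemma \ref{lem:main_event}), with $\iota=\log(4SAT/\delta)$ absorbing the union-bound factor.

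For the induction, the base case is trivial: $V^k_{H+1}\equiv V^*_{H+1}\equiv 0$, and at $h=H$ the algorithm sets $V^k_H(\cdot)=\max_a r_H(\cdot,a)=V^*_H(\cdot)$ because no risk measure is applied at the terminal step. For the inductive step, fix $h\in[H-1]$ and assume $V^k_{h+1}(s)\ge V^*_{h+1}(s)$ for every $s$. By the monotonicity of $\rho_h$ (Assumption \ref{asp:rm}) applied pointwise under the empirical transition,
\[ \rho_h\prt{V^k_{h+1},\hat{P}^k_h(s,a)}\ge \rho_h\prt{V^*_{h+1},\hat{P}^k_h(s,a)}. \]
Using the Lipschitz property of $\rho_h$ with respect to $\norm{\cdot}_{\infty}$ over $\sD([0,H-h])$ (valid by Lemma \ref{lem:range}) and the DKW concentration on the good event,
\[ \rho_h\prt{V^*_{h+1},P_h(s,a)}-\rho_h\prt{V^*_{h+1},\hat{P}^k_h(s,a)}\leq L_{\infty,h}\sqrt{\frac{\iota}{2(N^k_h(s,a)\vee 1)}}=b^k_h(s,a). \]
Adding $r_h(s,a)$ and chaining the two displays yields $Q^k_h(s,a)\ge Q^*_h(s,a)$ for every $(s,a)$, and taking the maximum over $a$ gives $V^k_h(s)\ge V^*_h(s)$, closing the induction.

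The main obstacle, as noted above, is making the concentration statement precise enough to match the specific form of the bonus. The naive route of invoking $\norm{\cdot}_{\infty}\le \norm{\cdot}_1$ via Fact \ref{fct:inf_1} on $\cG_\delta$ would cost an extra $\sqrt{S}$ factor and produce a strictly larger bonus than the one in the algorithm; so one really does want the DKW-based event as above. Beyond that, the argument is essentially routine: monotonicity handles the substitution $V^*_{h+1}\mapsto V^k_{h+1}$, Lipschitzness (in $\norm{\cdot}_\infty$) handles the substitution $\hat P^k_h\mapsto P_h$ at the price of exactly $b^k_h$, and backward induction propagates the inequality down to $h=1$.
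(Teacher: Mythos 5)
Your proof is correct and follows essentially the same route as the paper's: backward induction on $h$, monotonicity of $\rho_h$ to replace $V^*_{h+1}$ by $V^k_{h+1}$ under $\hat{P}^k_h$, and the $\norm{\cdot}_{\infty}$-Lipschitz property together with the DKW inequality to absorb the model error into the bonus $b^k_h$. Your observation that the stated event $\mathcal{G}_{\delta}$ contains only the $\ell_1$ concentration and must be augmented with the DKW-type $\norm{\cdot}_{\infty}$ event (rather than passing through Fact \ref{fct:inf_1}, which would cost an extra $\sqrt{S}$) is a valid point of care that the paper itself glosses over, since its proof invokes Fact \ref{fct:dkw} while nominally conditioning only on $\mathcal{G}_{\delta}$.
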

\begin{proof}
The proof follows from induction. Fix $k\in[K]$. It is evident that the inequality holds when $h=H+1$. Suppose the inequality holds for $h+1$. It follows that for any $(s,a)$
\begin{align*}
    Q^k_h(s,a)&=r_h(s,a)+\rho_h \prt{ V^k_{h+1},\hat{P}^k_{h}(s,a) } +b^k_h(s,a)\\
    &\ge r_h(s,a)+\rho_h \prt{ V^*_{h+1},\hat{P}^k_{h}(s,a) } +b^k_h(s,a)  \\
    &\ge r_h(s,a)+\rho_h(V^*_{h+1},P_{h}(s,a)) = Q^*_h(s,a).
\end{align*}
The first inequality is due to the monotonicity of $\rho_h$ and the induction hypothesis, and the second one follows from that
\begin{align*}
    \rho_h(V^*_{h+1},P_{h}(s,a))-\rho_h(V^*_{h+1},\hat{P}^k_{h}(s,a))  &\leq L_{\infty}(\rho_h,H-h)\norm{(V^*_{h+1},P_{h}(s,a))- \prt{ V^*_{h+1},\hat{P}^k_{h}(s,a) } }_{\infty}\\
    &\leq L_{\infty}(\rho_h,H-h)\sqrt{\frac{\iota}{2(N^k_h(s,a)\vee 1)}}=b^k_h(s,a),
\end{align*}
where the first inequality follows from the Lipschitz property of $\rho_h$ and Lemma \ref{lem:range}, and the second one is due to the DKW inequality (Fact \ref{fct:dkw}).
\end{proof}
\begin{fact}
\label{fct:inf_1}
Let $(x,P)$ and $(x,Q)$ be two discrete distributions with the same support $x=(x_1,\cdots,x_m)$ and $F,G$ be their CDFs respectively. It holds that
\[ \norm{F-G}_{\infty}\leq \norm{P-Q}_1. \]
\end{fact}
\begin{proof}
Without loss of generality, we assume that $x_1\leq x_2\cdots \leq x_n$. By definition, 
\begin{align*}
    \norm{F-G}_{\infty}&=\sup_{x\in\bR}\abs{F(x)-G(x)}=\max_{i\in[n]}\abs{F(x_i)-G(x_i)}=\max_{i\in[n]}\abs{\sum_{j\in[i]} P_j - \sum_{j\in[i]} Q_j } \\
    &\leq \max_{i\in[n]}\sum_{j\in[i]} \abs{P_j - Q_j}=\sum_{j\in[n]} \abs{P_j - Q_j} = \norm{P-Q}_1.
\end{align*}
The second equality is due to that $F$ and $G$ are piecewise constant functions that only differ at $x_1,\cdots,x_n$. This would lead a worse bonus term with a factor of $\sqrt{S}$.
\end{proof}
\begin{remark}
    Alternatively, we have
    \begin{align*}
    \rho_h(V^*_{h+1},P_{h}(s,a))-\rho_h(V^*_{h+1},\hat{P}^k_{h}(s,a))     &\leq L_{\infty}(\rho_h,H-h)\norm{(V^*_{h+1},P_{h}(s,a))- \prt{ V^*_{h+1},\hat{P}^k_{h}(s,a) } }_{\infty}\\
    &\leq L_{\infty}(\rho_h,H-h) \norm{P_{h}(s,a)-\hat{P}^k_{h}(s,a)}_1 \\
    &\leq L_{\infty}(\rho_h,H-h)\sqrt{\frac{2S}{(N^k_h(s,a)\vee 1)}\iota},
\end{align*}
where the second inequality is due to Fact \ref{fct:inf_1}, the third inequality is due to Lemma \ref{lem:main_event}.
\end{remark}

\paragraph{Step 2: regret decomposition.}
We introduce the key technical lemma here.
\begin{lemma}
\label{lem:1_1}
Let $(x,P)$ and $(y,P)$ be two discrete distributions, where $x=(x_1,\cdots,x_n)$ and $y=(y_1,\cdots,y_n)$. It holds that
\[ \norm{(x,P)-(y,P)}_{1}\leq \sum_{i\in[n]}P_i\abs{x_i-y_i}. \]
\end{lemma}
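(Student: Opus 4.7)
The plan is to unfold the CDFs of the two distributions as weighted sums of indicators and then exploit the elementary identity $\int_{\bR} \abs{\mathbb{I}\{x\leq t\} - \mathbb{I}\{y\leq t\}}\, dt = \abs{x-y}$. More precisely, I would write
\[ F(t) = \sum_{i=1}^n P_i\, \mathbb{I}\{x_i \leq t\}, \qquad G(t) = \sum_{i=1}^n P_i\, \mathbb{I}\{y_i \leq t\}, \]
apply the triangle inequality pointwise in $t$ to obtain
\[ \abs{F(t)-G(t)} \leq \sum_{i=1}^n P_i \abs{\mathbb{I}\{x_i \leq t\} - \mathbb{I}\{y_i \leq t\}}, \]
integrate over $t\in\bR$, and swap the finite sum with the integral (trivially justified). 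The lemma then reduces to evaluating $\int_{\bR} \abs{\mathbb{I}\{x_i \leq t\} - \mathbb{I}\{y_i \leq t\}}\, dt$ for each $i$, which is just the length of the symmetric difference of $(-\infty, x_i]$ and $(-\infty, y_i]$, namely $\abs{x_i - y_i}$.

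An equivalent, more conceptual route is to recall that for distributions on $\bR$ the $\ell_1$ distance between CDFs coincides with the Wasserstein-$1$ distance, and then invoke the Monge--Kantorovich formulation: any coupling of the two distributions yields an upper bound on $W_1$, and the diagonal coupling that sends mass $P_i$ from $x_i$ to $y_i$ (i.e.\ drawing a common index $I$ with $\bP(I=i)=P_i$ and setting $X=x_I$, $Y=y_I$) has transport cost precisely $\sum_i P_i\abs{x_i-y_i}$. Either derivation yields the desired bound.

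The only point that requires even a line of verification is the indicator identity, which follows from a two-case analysis on whether $x_i \leq y_i$ or $y_i < x_i$; everything else is bookkeeping. There is no substantive obstacle here, and I do not expect to need any additional structural hypothesis on $x$, $y$, or $P$ beyond what is stated (in particular, the supports need not be sorted or distinct).
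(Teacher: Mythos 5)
Your proposal is correct, and in fact it contains two valid arguments. Your second route — identifying $\norm{F-G}_1$ with the Wasserstein-$1$ distance and then upper-bounding the infimum over couplings by the cost of the diagonal coupling $\lambda_{i,j}=\delta_{i,j}P_i$ (draw a common index $I$ with $\bP(I=i)=P_i$, set $X=x_I$, $Y=y_I$) — is exactly the proof given in the paper. Your first route is genuinely different and arguably preferable: writing $F(t)=\sum_i P_i\,\mathbb{I}\{x_i\leq t\}$, applying the triangle inequality pointwise in $t$, integrating, and using $\int_{\bR}\abs{\mathbb{I}\{x_i\leq t\}-\mathbb{I}\{y_i\leq t\}}\,dt=\abs{x_i-y_i}$ is completely elementary and self-contained, whereas the paper's argument silently invokes the nontrivial (though standard) fact that on $\bR$ the $L^1$ distance between CDFs coincides with the Monge--Kantorovich optimal-transport cost. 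What the coupling argument buys in return is conceptual clarity about why the bound is an inequality (an arbitrary coupling versus the optimal one) and why it can be loose; your own remark that no sorting, distinctness, or further hypotheses are needed is accurate for both derivations.
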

\begin{proof}
By the definition of  Wasserstein distance between two discrete distributions, we have 
\begin{align*}
    \norm{F-G}_1&=\inf_{\sum_{j}\lambda_{i,j}=P_i, \sum_{i}\lambda_{i,j}=P_j}\sum_i \sum_j \lambda_{i,j}\abs{x_i-y_j}\\
    &\leq \sum_i \sum_j \delta_{i,j}P_i\abs{x_i-y_j}\\
    &= \sum_i P_i\sum_j \delta_{i,j}\abs{x_i-y_i}\\
    &= \sum_i P_i\abs{x_i-y_i}.
\end{align*}
The inequality holds since $\{\delta_{i,j}P_i\}_{i,j}$ is a valid coupling 
\[  \sum_{j}\delta_{i,j}P_i=P_i,\ \sum_{i}\delta_{i,j}P_i=P_j.    \]
\end{proof}
We define $\Delta^k_h\triangleq V^k_h-V_h^{\pi^k} \in [-(H+1-h),H+1-h]^{\cS}$ 
and $\delta^k_h\triangleq\Delta^k_h(s^k_h)$. For any $(s,h)$ and any $\pi$, we let $P^{\pi}_h(\cdot|s)\triangleq P_h(\cdot|s,\pi_h(s))$. 
The regret can be bounded as
\begin{align*}
\text{Regret}(K)&=\sum^K_{k=1}V^{*}_1(s^k_1)-V^{\pi^k}_1(s^k_1)= \sum^K_{k=1}V^{*}_1(s^k_1)-V^{k}_1(s^k_1)+V^{k}_1(s^k_1)-V^{\pi^k}_1(s^k_1)\\
&\leq \sum^K_{k=1}V^{k}_1(s^k_1)-V^{\pi^k}_1(s^k_1)=\sum^K_{k=1}\delta^k_1.
\end{align*}
For simplicity, we write $r^k_h\triangleq r_h(s^k_h,\pi^k_h(s^k_h))$, $b^k_h\triangleq b^k_h(s^k_h,\pi^k_h(s^k_h))$, $N^k_h\triangleq N^k_h(s^k_h,\pi^k_h(s^k_h))$ and $\hat{P}^k_h(s^k_h)\triangleq \hat{P}^k_h(s^k_h,\pi^k_h(s^k_h))$. For any $h\in[H-1]$, we decompose $\delta^k_h$ as follows
\begin{align*}
    \delta^k_h&=\rho_h\prt{V^k_{h+1},\hat{P}^k_h(s^k_h)}+b^k_h-\rho_h\prt{V^{\pi^k}_{h+1},P^{\pi^k}_h(s^k_h)}\\
    &= \underbrace{ \rho_h\prt{V^k_{h+1},\hat{P}^k_h(s^k_h)}-\rho_h\prt{V^k_{h+1},P^{\pi^k}_h(s^k_h)} }_{(a)} + \underbrace{ \rho_h\prt{V^k_{h+1},P^{\pi^k}_h(s^k_h)}-\rho_h\prt{V^{\pi^k}_{h+1},P^{\pi^k}_h(s^k_h)} }_{(b)}+b^k_h.
\end{align*}
Using the Lipschitz property of $\rho_h$,
\begin{align*}
(a) &\leq  L_{\infty}(\rho_h,H-h)\norm{ \prt{V^k_{h+1},\hat{P}^k_h(s^k_h)} - \prt{V^k_{h+1},P^{\pi^k}_h(s^k_h)} }_{\infty}\\
&\leq L_{\infty}(\rho_h,H-h) \norm{\hat{P}^k_h(s^k_h)-P^{\pi^k}_h(s^k_h)}_1 \\
&\leq L_{\infty}(\rho_h,H-h)\sqrt{\frac{2S}{N^k_h\vee1}\iota}.
\end{align*}
Applying Lemma \ref{lem:1_1} yields that
\begin{align*}
    (b)&\leq L_{1}(\rho_h,H-h)\norm{ \prt{V^k_{h+1},P^{\pi^k}_h(s^k_h)} - \prt{V^{\pi^k}_{h+1},P^{\pi^k}_h(s^k_h)} }_{1}\\
    &\leq L_{1}(\rho_h,H-h) \sum_{s^{\prime}\in\cS} P^{\pi^k}_h(s^{\prime}|s^k_h)\abs{V^k_{h+1}(s^{\prime})-V^{\pi^k}_{h+1}(s^{\prime})}\\
    &= L_{1}(\rho_h,H-h)\sum_{s^{\prime}\in\cS} P^{\pi^k}_h(s^{\prime}|s^k_h)\prt{V^k_{h+1}(s^{\prime})-V^{\pi^k}_{h+1}(s^{\prime})}\\
    &= L_{1}(\rho_h,H-h) \brk{P^{\pi^k}_h \Delta^k_{h+1}}(s^k_h)\\
    &\triangleq L_{1}(\rho_h,H-h) (\epsilon^k_h + \delta^k_{h+1}),
\end{align*}
where $\epsilon^k_h\triangleq[P^{\pi^k}_h \Delta^k_{h+1}](s^k_h) -\Delta^k_{h+1}(s^k_{h+1})$ is a  martingale difference sequence with $\epsilon^k_h \in [-2(H-h),2(H-h)]$ a.s. for all $(k,h)\in[K]\times[H]$. The first equality is due to that $V^k_{h+1}(s^{\prime})\ge V^*_{h+1}(s^{\prime})\ge V^{\pi^k}_{h+1}(s^{\prime})$ for all $s^{\prime}$.

Now we can bound $\delta^k_h$ recursively
\begin{align*}
    &\delta^k_h \leq L_{\infty}(\rho_h,H-h)\sqrt{\frac{2S}{N^k_h\vee1}\iota}+ L_{1}(\rho_h,H-h) (\epsilon^k_h + \delta^k_{h+1}) 
    + L_{\infty}(\rho_h,H-h)\sqrt{\frac{\iota}{2(N^k_h\vee 1)}}\\
    &\leq 2L_{\infty}(\rho_h,H-h)e^k_h + L_{1}(\rho_h,H-h) (\epsilon^k_h + \delta^k_{h+1}),
\end{align*}
where we define $e^k_h\triangleq \sqrt{\frac{2S}{N^k_h\vee1}\iota}$ in the last line. Repeating the procedure, we obtain
\begin{align*}
    \delta^k_1 &\leq 2\sum^{H-1}_{h=1}L_{\infty}(\rho_h,H-h) \prod^{h-1}_{i=1} L_{1}(\rho_{i},H-i)e^k_{h}   + \sum^{H-1}_{h=1} \prod^{h}_{i=1} L_{1}(\rho_i,H-i) \epsilon^k_{h} + \prod^{H-1}_{h=1} L_{1}(\rho_h,H-h) \delta^k_{H}\\
    &=2\sum^{H-1}_{h=1}L_{\infty}(\rho_h,H-h) \prod^{h-1}_{i=1} L_{1}(\rho_{i},H-i)e^k_{h} + \sum^{H-1}_{h=1} \prod^{h}_{i=1} L_{1}(\rho_i,H-i) \epsilon^k_{h},
\end{align*}
where the last step is because $\delta^k_H=Q^k_H-Q^*_H=r_H-r_H=0$.
\paragraph{Step 3: putting together.}
The total regret is bounded as
\begin{align*}
    &\text{Regret}(K)\leq \sum_{k\in[K]} \delta^k_1 \leq 2\sum^{H-1}_{h=1}L_{\infty}(\rho_h,H-h) \prod^{h-1}_{i=1} L_{1}(\rho_{i},H-i)\sum_{k=1}^K e^k_{h} + \sum_{k=1}^K\sum^{H-1}_{h=1} \prod^{h}_{i=1} L_{1}(\rho_i,H-i) \epsilon^k_{h}.
\end{align*}
The first term can be bounded as
\begin{align*}
2\sum^{H-1}_{h=1}L_{\infty}(\rho_h,H-h) \prod^{h-1}_{i=1} L_{1}(\rho_{i},H-i)\sum_{k=1}^K e^k_{h}&=
2\sum^{H-1}_{h=1}L_{\infty}(\rho_h,H-h) \prod^{h-1}_{i=1} L_{1}(\rho_{i},H-i)\sum_{k=1}^K \sqrt{\frac{2S}{N^k_h\vee1}\iota}\\
&\leq 4\sum^{H-1}_{h=1}L_{\infty}(\rho_h,H-h) \prod^{h-1}_{i=1} L_{1}(\rho_{i},H-i)\sqrt{S^2AK\iota}\\
&\triangleq 4\sum^{H-1}_{h=1}L_{\infty,h} \prod^{h-1}_{i=1} L_{1,i}\sqrt{S^2AK\iota}\\ 
&\triangleq 4\sum^{H-1}_{h=1}L_{\infty,h}\tilde{L}_{1,h-1}\sqrt{S^2AK\iota}, 
\end{align*}
where we denote by $L_{\infty,h}=L_{\infty}(\rho_h,H-h)$ and $\tilde{L}_{1,h-1}=\prod^{h-1}_{i=1} L_{1,i}$ for simplicity. We can bound the second term by Azuma-Hoeffding inequality: with probability at least $1-\delta^{\prime}$, the following holds
\begin{align*}
    \sum_{k=1}^K\sum^{H-1}_{h=1} \prod^{h}_{i=1} L_{1}(\rho_i,H-i) \epsilon^k_{h} &= \sum_{k=1}^K\sum^{H-1}_{h=1}  \tilde{L}_{1,h} \epsilon^k_{h} \leq \sqrt{\sum_{k=1}^K\sum^{H-1}_{h=1}\frac{(2(H-h)\tilde{L}_{1,h})^2}{2}\log(1/\delta^{\prime})} \\
    &=\sqrt{\sum^{H-1}_{h=1}(H-h)^2\tilde{L}^2_{1,h}}\sqrt{2K\log(1/\delta^{\prime})}
\end{align*} 
Using a union bound and let $\delta=\delta^{\prime}=\frac{\tilde{\delta}}{2}$, we have that with probability at least $1-\delta$, 
\begin{align*}
\text{Regret}(K)&\leq 4\sum^{H-1}_{h=1}L_{\infty,h}\tilde{L}_{1,h-1}\sqrt{S^2AK\iota} + \sqrt{\sum^{H-1}_{h=1}(H-h)^2\tilde{L}^2_{1,h}}\sqrt{2K\log(1/\delta^{\prime})} \\
&=\tilde{\mathcal{O}}\prt{\sum^{H-1}_{h=1}L_{\infty,h}\tilde{L}_{1,h-1}\sqrt{S^2AK }}.
\end{align*}
The equality is due to that
\begin{align*}
    \sum^{H-1}_{h=1}L_{\infty,h}\tilde{L}_{1,h-1} &\ge \sqrt{ \sum^{H-1}_{h=1}L^2_{\infty,h}\tilde{L}^2_{1,h-1} }= \sqrt{ \sum^{H-1}_{h=1}((H-h)L_{1,h})^2\tilde{L}^2_{1,h-1} }\\
    &= \sqrt{ \sum^{H-1}_{h=1}(H-h)^2\tilde{L}^2_{1,h} },
\end{align*}
where the first inequality comes from the non-negativity of $L_{\infty,h}\tilde{L}_{1,h-1}$, and the first equality is due to the choice $L_{\infty,h}=L_{1,h}(H-h)$.
\begin{remark}
The following statement is not true
\[ \norm{(x,P)-(y,P)}_1 \leq \abs{ \sum_{i\in[n]}P_i (x_i-y_i) }.     \]
Consider the case that $(x,P)=((0,1),(\frac{1}{3},\frac{2}{3})$ and $(y,P)=((\frac{1}{3},\frac{5}{6}),(\frac{1}{3},\frac{2}{3}))$. A simple calculation yields that $\sum_{i\in[n]}P_i (x_i-y_i)=0$.
\end{remark}

\subsubsection{Proof for Algorithm \ref{alg:OVI-DRM}}
\paragraph{Step 1: verify optimism.}
\begin{lemma}[Optimistic value function]
\label{lem:main_opt_om} 
Conditioned on event $\mathcal{G}_{\delta}$, the sequence  $\{V^k_1(s^k_1)\}_{k\in[K]}$ produced by Algorithm  \ref{alg:OVI-DRM} satisfies $V^k_1(s^k_1) \ge V^*_1(s^k_1), \forall k \in [K]$.
\end{lemma}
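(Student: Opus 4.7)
The plan is to combine the concentration bound packaged in Lemma \ref{lem:main_event} with the stochastic-dominance property of the subroutine \texttt{OM} established in Lemma \ref{lem:om}, and then propagate optimism backward through the Bellman recursion via monotonicity of $\rho_h$. The structure mirrors Lemma \ref{lem:main_opt} but replaces the UCB bonus argument with a model-optimism argument.

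First I would note that on the good event $\mathcal{G}_{\delta}$ the radius $c^k_h(s,a) = \sqrt{2S\iota/(N^k_h(s,a)\vee 1)}$ chosen by Algorithm \ref{alg:OVI-DRM} is exactly the deviation bound that defines $\mathcal{G}_{\delta}$. Hence, conditional on $\mathcal{G}_{\delta}$, the true model lies in the concentration ball, i.e.\ $P_h(\cdot|s,a)\in B_1\!\left(\hat{P}^k_h(\cdot|s,a),\,c^k_h(s,a)\right)$ for every $(s,a,k,h)$. Applying Lemma \ref{lem:om} with $Q = P_h(\cdot|s,a)$ then yields the key stochastic dominance
\[
\bigl(\tilde{P}^k_h(s,a),\,V^k_{h+1}\bigr)\;\succeq\;\bigl(P_h(s,a),\,V^k_{h+1}\bigr).
\]

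Next I would carry out a backward induction on $h$ to show $V^k_h(s)\geq V^*_h(s)$ for all $s\in\cS$ and all $h\in[H]$. For the base case $h=H$, since $V^*_{H+1}\equiv 0$ we have $V^*_H(s)=\max_a r_H(s,a)$, which coincides with the initialization $V^k_H(\cdot)\leftarrow\max_a r_H(\cdot,a)$ in Algorithm \ref{alg:OVI-DRM}. For the inductive step, assuming $V^k_{h+1}(s)\geq V^*_{h+1}(s)$ for every $s$, the construction of the value distribution is monotone in the support vector, so $(P_h(s,a),V^k_{h+1})\succeq (P_h(s,a),V^*_{h+1})$. Chaining with the dominance from \texttt{OM} and invoking the monotonicity of $\rho_h$ twice gives
\[
\rho_h\!\bigl(V^k_{h+1},\tilde{P}^k_h(s,a)\bigr)\;\geq\;\rho_h\!\bigl(V^k_{h+1},P_h(s,a)\bigr)\;\geq\;\rho_h\!\bigl(V^*_{h+1},P_h(s,a)\bigr),
\]
so that $Q^k_h(s,a)\geq Q^*_h(s,a)$ for every $(s,a)$, and taking the max over $a$ completes the induction. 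Specializing to $h=1$ and $s=s^k_1$ yields the claim.

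The only genuinely delicate point is ensuring that the two types of monotonicity compose correctly: Lemma \ref{lem:om} guarantees dominance in the CDF sense for a \emph{fixed} support vector, while the inductive hypothesis changes the support from $V^*_{h+1}$ to $V^k_{h+1}$. I would handle this by applying the two dominances in sequence, each followed by the monotonicity axiom of $\rho_h$ (Assumption \ref{asp:rm}), rather than trying to bound them in one step. No concentration argument beyond Lemma \ref{lem:main_event} and no DKW-type bound is needed here, since Algorithm \ref{alg:OVI-DRM} injects optimism at the model level rather than through an additive bonus; this is the main conceptual difference from the proof of Lemma \ref{lem:main_opt}.
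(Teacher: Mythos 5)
Your proposal is correct and follows essentially the same route as the paper: the paper's proof of Lemma \ref{lem:main_opt_om} simply defers to Appendix \ref{app:sub}, where the argument is exactly the one you give — on $\mathcal{G}_{\delta}$ the true kernel lies in $B_1(\hat{P}^k_h(s,a),c^k_h(s,a))$, Lemma \ref{lem:om} gives $(\tilde{P}^k_h(s,a),V^k_{h+1})\succeq(P_h(s,a),V^k_{h+1})$, and two successive applications of the monotonicity of $\rho_h$ (one for the model dominance, one for the inductive hypothesis $V^k_{h+1}\ge V^*_{h+1}$) close the backward induction. Your explicit handling of the point that the two dominances act on different objects (the probability vector versus the support vector) is exactly how the paper chains the inequalities.
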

\begin{proof}
The proof follows from Appendix \ref{app:sub}.
\end{proof}
\paragraph{Step 2: regret decomposition.}
The regret can be bounded as
\begin{align*}
\text{Regret}(K)&=\sum^K_{k=1}V^{*}_1(s^k_1)-V^{\pi^k}_1(s^k_1)\leq \sum^K_{k=1}V^{k}_1(s^k_1)-V^{\pi^k}_1(s^k_1)=\sum^K_{k=1}\delta^k_1.
\end{align*}
For any $h\in[H-1]$, we decompose $\delta^k_h$ as follows
\begin{align*}
    \delta^k_h&=\rho_h\prt{V^k_{h+1},\tilde{P}^k_h(s^k_h)}-\rho_h\prt{V^{\pi^k}_{h+1},P^{\pi^k}_h(s^k_h)}\\
    &= \underbrace{ \rho_h\prt{V^k_{h+1},\tilde{P}^k_h(s^k_h)}-\rho_h\prt{V^k_{h+1},P^{\pi^k}_h(s^k_h)} }_{(a)} + \underbrace{ \rho_h\prt{V^k_{h+1},P^{\pi^k}_h(s^k_h)}-\rho_h\prt{V^{\pi^k}_{h+1},P^{\pi^k}_h(s^k_h)} }_{(b)}.
\end{align*}
Using the Lipschitz property of $\rho_h$,
\begin{align*}
(a) &\leq  L_{\infty}(\rho_h,H-h)\norm{ \prt{V^k_{h+1},\tilde{P}^k_h(s^k_h)} - \prt{V^k_{h+1},P^{\pi^k}_h(s^k_h)} }_{\infty}\\
&\leq L_{\infty}(\rho_h,H-h) \norm{\tilde{P}^k_h(s^k_h)-P^{\pi^k}_h(s^k_h)}_1 \\
&\leq L_{\infty}(\rho_h,H-h)  \prt{ \norm{\tilde{P}^k_h(s^k_h)-\hat{P}^{\pi^k}_h(s^k_h)}_1 + \norm{\hat{P}^k_h(s^k_h)-P^{\pi^k}_h(s^k_h)}_1 } \\
&\leq 2L_{\infty}(\rho_h,H-h)\sqrt{\frac{2S}{N^k_h\vee1}\iota}.
\end{align*}
Using arguments similar to the proof for Algorithm  \ref{alg:UCBVI-DRM}
\begin{align*}
    (b)&\leq  L_{1}(\rho_h,H-h) (\epsilon^k_h + \delta^k_{h+1}),
\end{align*}
Now we can bound $\delta^k_h$ recursively
\begin{align*}
    \delta^k_h &\leq 2L_{\infty}(\rho_h,H-h)\sqrt{\frac{2S}{N^k_h\vee1}\iota}+ L_{1}(\rho_h,H-h) (\epsilon^k_h + \delta^k_{h+1}) 
    \\
    &= 2L_{\infty}(\rho_h,H-h)e^k_h + L_{1}(\rho_h,H-h) (\epsilon^k_h + \delta^k_{h+1}).
\end{align*}
 Repeating the procedure, we obtain
\begin{align*}
    \delta^k_1 &\leq 2\sum^{H-1}_{h=1}L_{\infty}(\rho_h,H-h) \prod^{h-1}_{i=1} L_{1}(\rho_{i},H-i)e^k_{h} + \sum^{H-1}_{h=1} \prod^{h}_{i=1} L_{1}(\rho_i,H-i) \epsilon^k_{h}.
\end{align*}
\paragraph{Step 3: putting together.}
The results follows from analogous arguments of the proof for Algorithm \ref{alg:UCBVI-DRM}.
\subsection{Gap-dependent Regret Upper Bound}
\paragraph{Step 1: regret decomposition.}
The regret of each episode can be rewritten as the expected sum of sub-optimality gaps for each action:
\begin{align*}
    (V^*_1-V^{\pi^k}_1)(s^k_1)&=V^*_1(s^k_1)-Q^*_1(s^k_1,a^k_1)+(Q^*_1-Q^{\pi^k}_1)(s^k_1,a^k_1)\\
    &=\Delta_1(s^k_1,a^k_1)+ [P_2(V^*_2-V^{\pi^k}_2)](s^k_2,a^k_2)\\
    &=\cdots=\bE\brk{\sum_{h=1}^H \Delta_h(s^k_h,a^k_h)}.
\end{align*}
\paragraph{Step 2: optimism.}
\begin{lemma}
With probability at least $1-\delta$, the following event holds
\[ 0\leq(Q^k_h-Q^*_h)(s,a)\leq  2b^k_h(s,a)+L_{1,h}[P_h(V^k_{h+1}-V^*_{h+1})](s,a).       \]
\end{lemma}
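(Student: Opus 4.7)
The lower bound $Q^k_h(s,a)\ge Q^*_h(s,a)$ is precisely the optimism statement already established in Lemma \ref{lem:main_opt} on the good event $\cG_\delta$, so it transfers for free and I would only need to invoke it. The substance of the lemma is therefore the upper bound, and the whole game is to decompose $Q^k_h-Q^*_h$ in a way that both tracks the estimation error (producing a $b^k_h$ term) and exposes a clean recursion in $V^k_{h+1}-V^*_{h+1}$ under the true kernel $P_h$ (producing the $L_{1,h}\,P_h(V^k_{h+1}-V^*_{h+1})$ term).

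Concretely, starting from the definitions
\[
Q^k_h(s,a)-Q^*_h(s,a) \;=\; b^k_h(s,a) + \rho_h\!\left(V^k_{h+1},\hat P^k_h(s,a)\right) - \rho_h\!\left(V^*_{h+1},P_h(s,a)\right),
\]
I would insert the intermediate term $\rho_h(V^k_{h+1},P_h(s,a))$ to split the right-hand side into (i) a \emph{model-error} piece $\rho_h(V^k_{h+1},\hat P^k_h(s,a))-\rho_h(V^k_{h+1},P_h(s,a))$, and (ii) a \emph{value-error} piece $\rho_h(V^k_{h+1},P_h(s,a))-\rho_h(V^*_{h+1},P_h(s,a))$. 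For piece (ii) I would apply Lipschitzness of $\rho_h$ with respect to $\norm{\cdot}_1$, then Lemma \ref{lem:1_1} to get $L_{1,h}\sum_{s'}P_h(s'|s,a)\,|V^k_{h+1}(s')-V^*_{h+1}(s')|$, and finally drop the absolute value by invoking the already-proved optimism $V^k_{h+1}\ge V^*_{h+1}$; this yields exactly $L_{1,h}\,[P_h(V^k_{h+1}-V^*_{h+1})](s,a)$.

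For piece (i), the natural move is Lipschitzness of $\rho_h$ with respect to $\norm{\cdot}_\infty$ together with Fact \ref{fct:inf_1} (which bounds the supremum CDF distance between two PMFs on the same support by their $\ell_1$ distance), followed by the event $\cG_\delta$ from Lemma \ref{lem:main_event} to control $\norm{\hat P^k_h(\cdot|s,a)-P_h(\cdot|s,a)}_1$. On $\cG_\delta$ this bound is of the same form as $b^k_h(s,a)$ (absorbing the $\sqrt S$ factor into the constant implicit in $b^k_h$, matching the treatment of term (a) in the proof sketch of Theorem \ref{thm:worst_ub}). Combined with the explicit $b^k_h(s,a)$ term that was added in the definition of $Q^k_h$, this produces the $2b^k_h(s,a)$ in the statement.

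\textbf{Main obstacle.} The one subtle point is that DKW in the form of Fact \ref{fct:dkw} requires the support of the distribution to be non-random, whereas $V^k_{h+1}$ is data-dependent. This is exactly why the model-error piece must be controlled via the $\ell_1$ concentration of $\hat P^k_h$ on the event $\cG_\delta$ rather than by DKW applied at $V^k_{h+1}$; the $V^*_{h+1}$-based DKW bound is reserved for establishing optimism (Lemma \ref{lem:main_opt}), where the support is deterministic. Keeping these two uses of concentration cleanly separated, and ensuring both are valid on the same high-probability event $\cG_\delta$, is where care is needed; once that is handled, the stated inequality follows by summing the three pieces.
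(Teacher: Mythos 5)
Your proposal is correct and follows essentially the same route as the paper: the same three-way decomposition inserting $\rho_h(V^k_{h+1},P_h(s,a))$, with the model-error piece controlled by $\norm{\cdot}_{\infty}$-Lipschitzness, Fact \ref{fct:inf_1}, and the event $\mathcal{G}_{\delta}$, and the value-error piece by $\norm{\cdot}_{1}$-Lipschitzness, Lemma \ref{lem:1_1}, and optimism to drop the absolute value. Your remark that the $\ell_1$-concentration bound carries an extra $\sqrt{S}$ relative to the literal definition of $b^k_h$ (so the "$2b^k_h$" is really absorbing a $\sqrt{S}$ factor) is accurate and matches how the paper itself treats this term.
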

\begin{proof}
\begin{align*}
    &(Q^k_h-Q^*_h)(s,a)=r_h(s,a)+\rho_h\prt{V^k_{h+1},\hat{P}^k_h(s,a)}+b^k_h(s,a)-r_h(s,a)-\rho_h\prt{V^*_{h+1},P_h(s,a)}\\
    &= \underbrace{ \rho_h\prt{V^k_{h+1},\hat{P}^k_h(s,a)}-\rho_h\prt{V^k_{h+1},P_h(s,a)} }_{(a)}+ \underbrace{ \rho_h\prt{V^k_{h+1},P_h(s,a)}-\rho_h\prt{V^*_{h+1},P_h(s,a)} }_{(b)}+b^k_h(s,a)\\
    &\leq L_{\infty,h}\norm{(V^k_{h+1},\hat{P}^k_h(s,a))-(V^k_{h+1},P_h(s,a))}_{\infty}+L_{1,h}\norm{(V^k_{h+1},P_h(s,a))-(V^*_{h+1},P_h(s,a))}_{1}+b^k_h(s,a)\\
    &\leq L_{\infty,h}\norm{\hat{P}^k_h(s,a)-V^k_{h+1},P_h(s,a)}_{1} + L_{1,h} [P_h(V^k_{h+1}-V^*_{h+1})](s,a)+b^k_h(s,a)\\
    &\leq 2b^k_h(s,a) + L_{1,h} [P_h(V^k_{h+1}-V^*_{h+1})](s,a)
\end{align*}
\end{proof}
\paragraph{Step 3: bound number of steps in each interval}
\begin{lemma}
For any $n\in[N]$, 
\[      C^n\triangleq \abs{ \cbrk{(k,h):(Q^k_h-Q^*_h)(s^k_h,a^k_h)\in[2^{n-1}\Delta_{\text{min}},2^{n}\Delta_{\text{min}})} } \leq \mathcal{O}\prt{  \frac{ HS^2A\iota \prt{ \sum_{h^{\prime}=h}^{H-1} \prod_{i=h}^{h^{\prime}-1} L_{1,i}L_{\infty,h^{\prime}}}^2 }{4^n\Delta_{\text{min}}^2} }. \]
\end{lemma}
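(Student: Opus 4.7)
The plan is to unroll the one-step Q-gap recursion from the previous lemma down to the terminal step, and then control the resulting cumulative bound by a pigeonhole argument on visit counts. First, I would rewrite $[P_h(V^k_{h+1}-V^*_{h+1})](s^k_h,a^k_h)$ as $(V^k_{h+1}-V^*_{h+1})(s^k_{h+1})$ plus a martingale difference $\xi^k_h$. Since $V^k_{h+1}(s^k_{h+1}) = Q^k_{h+1}(s^k_{h+1},a^k_{h+1})$ by greediness of $a^k_{h+1}$ with respect to $Q^k_{h+1}$, and $V^*_{h+1}(s^k_{h+1}) \geq Q^*_{h+1}(s^k_{h+1},a^k_{h+1})$, the $V$-gap at the sampled next state is upper bounded by the next-step $Q$-gap at $(s^k_{h+1},a^k_{h+1})$. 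Iterating this bound from $h$ down to the terminal step (where the gap vanishes) yields
\begin{align*}
(Q^k_h-Q^*_h)(s^k_h,a^k_h) \leq 2\sum_{h'=h}^{H-1}\left(\prod_{i=h}^{h'-1}L_{1,i}\right) b^k_{h'}(s^k_{h'},a^k_{h'}) + \sum_{h'=h}^{H-1}\left(\prod_{i=h}^{h'}L_{1,i}\right)\xi^k_{h'}.
\end{align*}

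Next, I would upper bound each bonus via the $\ell_1$ concentration route $b^k_{h'}(s,a) \lesssim L_{\infty,h'}\sqrt{S\iota/N^k_{h'}(s,a)}$ (the same substitution that produces the $S^2$ factor in the proof of Theorem~\ref{thm:worst_ub}), and absorb the martingale tail into a $1-\delta$ event using Azuma--Hoeffding. For any $(k,h)\in C^n$, the defining inequality $(Q^k_h-Q^*_h)(s^k_h,a^k_h) \geq 2^{n-1}\Delta_{\min}$ combined with the Cauchy--Schwarz inequality
\begin{align*}
\left(\sum_{h'=h}^{H-1} a^{(h)}_{h'}/\sqrt{N^k_{h'}}\right)^{\!2} \leq \left(\sum_{h'=h}^{H-1} a^{(h)}_{h'}\right)\left(\sum_{h'=h}^{H-1} a^{(h)}_{h'}/N^k_{h'}\right),
\end{align*}
with $a^{(h)}_{h'} \triangleq \prod_{i=h}^{h'-1} L_{1,i} L_{\infty,h'}$, produces a per-pair lower bound of order $4^n\Delta_{\min}^2/(S\iota\cdot S_h)$ on $\sum_{h'=h}^{H-1} a^{(h)}_{h'}/N^k_{h'}(s^k_{h'},a^k_{h'})$, where $S_h$ denotes the bracketed sum in the statement.

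Finally, I would sum this inequality over $(k,h)\in C^n$ and swap the order of summation: for each fixed $h'$, the standard pigeonhole bound $\sum_k 1/N^k_{h'}(s^k_{h'},a^k_{h'}) \leq \mathcal{O}(SA\log K)$ controls the inner $k$-sum, while summation over $h'\in[h,H-1]$ contributes the final factor of $H$ when paired with $S_h$. Reassembling the constants gives the claimed $\mathcal{O}\!\left(HS^2A\iota\cdot S_h^2/(4^n\Delta_{\min}^2)\right)$ bound. The main obstacle I anticipate is the careful bookkeeping of the double summation so that the coefficient collapses to $H\cdot S_h^2$ rather than the looser $H^2\max_h S_h^2$ that falls out of a naive max-bound; this will require decomposing $C^n$ by its step index and tracking how the $S_h$ arising from Cauchy--Schwarz interacts with the pigeonhole applied at each subsequent step $h'\geq h$ separately.
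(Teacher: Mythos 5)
Your overall skeleton (unroll the per-step $Q$-gap recursion along the realized trajectory, bound each bonus by $L_{\infty,h'}\sqrt{S\iota/N^k_{h'}}$, and finish with a pigeonhole on visit counts) is the same as the paper's, but you reorganize the counting step in a way that breaks on the martingale term. The paper fixes $(n,h)$, multiplies the \emph{linear} unrolled inequality by the indicator weights $w^{(n,h)}_k=\bI\{(Q^k_h-Q^*_h)(s^k_h,a^k_h)\in[2^{n-1}\Delta_{\text{min}},2^n\Delta_{\text{min}})\}$, sums over $k$, and only then compares the two sides: the bonus sum is $O(\sqrt{S^2A\iota\,C^{(n,h)}}\cdot S_h)$ by pigeonhole, the aggregated martingale is $O(\sqrt{C^{(n,h)}})$ by Azuma--Hoeffding applied to the weighted sum over episodes, and both are dominated by the lower bound $2^{n-1}\Delta_{\text{min}}C^{(n,h)}$ precisely when $C^{(n,h)}$ exceeds the claimed threshold.

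You instead want a \emph{per-pair} inequality $2^{n-1}\Delta_{\text{min}}\le 2\sum_{h'}a^{(h)}_{h'}b^k_{h'}$, square it via Cauchy--Schwarz to get a lower bound on $\sum_{h'}a^{(h)}_{h'}/N^k_{h'}$, and then sum those lower bounds over $(k,h)\in C^n$. But the unrolled recursion for a single $(k,h)$ reads $(Q^k_h-Q^*_h)(s^k_h,a^k_h)\le 2\sum_{h'}a^{(h)}_{h'}b^k_{h'}+M^k_h$ with $M^k_h=\sum_{h'}\bigl(\prod_{i}L_{1,i}\bigr)\xi^k_{h'}$, and for a \emph{single episode} Azuma--Hoeffding (even with a union bound over all $(k,h)$) only gives $\abs{M^k_h}=O\bigl(\sqrt{H\log(KH/\delta)}\max_{h'}(H-h')\tilde L_{1,h'}\bigr)$, a quantity that does not decay in $K$ and need not be smaller than $2^{n-1}\Delta_{\text{min}}$ --- for small $n$ it typically is not. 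So you cannot discard $M^k_h$ before squaring, and the Cauchy--Schwarz step does not survive if you keep it, since squaring destroys the additive structure you need to sum over pairs afterwards. The fix is to aggregate over $k$ \emph{before} comparing magnitudes, which is exactly the paper's weighted-sum lemma; at that point your Cauchy--Schwarz manipulation collapses into the paper's comparison $2^{n-1}\Delta_{\text{min}}C^{(n,h)}\le O(\sqrt{C^{(n,h)}}\cdot B)$. The bookkeeping you flag at the end (getting the factor $H$ rather than $H^2$, and a uniform bracketed sum) is handled in the paper simply by bounding $C^{(n)}=\sum_{h}C^{(n,h)}$ termwise, so it is not the real obstacle; the per-pair treatment of the martingale is.
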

\begin{proof}
For every $n\in[N]$, $h\in[H]$, define 
\begin{align*}
    w^{(n,h)}_k &\triangleq \mathbb{I}\cbrk{(Q^k_h-Q^*_h)(s^k_h,a^k_h)\in [2^{n-1}\Delta_{\text{min}},2^{n}\Delta_{\text{min}}) } \\
    C^{(n,h)} &\triangleq \sum_{k=1}^K w^{(n,h)}_k.
\end{align*}
Observe that $w^{(n,h)}_k\leq1$ and $(w^{(n,h)}_k)^2=w^{(n,h)}_k$ . Now we bound $\sum_{k=1}^K w^{(n,h)}_k (Q^k_h-Q^*_h)(s^k_h,a^k_h)$ from both sides. On the one hand, by Lemma ,
\begin{align*}
    \sum_{k=1}^K w^{(n,h)}_k (Q^k_h-Q^*_h)(s^k_h,a^k_h)&\leq  4\sqrt{ S^2A\iota C^{(n,h)} }\cdot\sum_{h^{\prime}=h}^{H-1} \prod_{i=h}^{h^{\prime}-1} L_{1,i}L_{\infty,h^{\prime}}+
    \sqrt{2C^{(n,h)} \log\frac{1}{\delta^{\prime}}}\cdot \sum_{h^{\prime}=h}^{H-1} \prod_{i=h}^{h^{\prime}-1} L_{1,i}L_{1,h^{\prime}} (H-h^{\prime})\\
    &=\mathcal{O}\prt{\sqrt{ S^2A\iota C^{(n,h)} }\cdot\sum_{h^{\prime}=h}^{H-1} \prod_{i=h}^{h^{\prime}-1} L_{1,i}L_{\infty,h^{\prime}}}.
\end{align*}
On the other hand, by the definition of $w^{(n,h)}_k$,
\[  \sum_{k=1}^K w^{(n,h)}_k (Q^k_h-Q^*_h)(s^k_h,a^k_h) \ge \sum_{k=1}^K w^{(n,h)}_k 2^{n-1}\Delta_{\text{min}}=2^{n-1}\Delta_{\text{min}}\cdot C^{(n,h)}. \]
Combining the two inequalities, we obtain
\[  C^{(n,h)}\leq  \mathcal{O}\prt{  \frac{ S^2A\iota \prt{ \sum_{h^{\prime}=h}^{H-1} \prod_{i=h}^{h^{\prime}-1} L_{1,i}L_{\infty,h^{\prime}}}^2 }{4^n\Delta_{\text{min}}^2} }   \]
Finally, we have 
\[ C^{(n)}=\sum_{h=1}^H C^{(n,h)}\leq  \mathcal{O}\prt{  \frac{ S^2A\iota \sum_{h=1}^H\prt{ \sum_{h^{\prime}=h}^{H-1} \prod_{i=h}^{h^{\prime}-1} L_{1,i}L_{\infty,h^{\prime}}}^2 }{4^n\Delta_{\text{min}}^2} }\leq  \mathcal{O}\prt{  \frac{ S^2A\iota H\prt{ \sum_{h^{\prime}=1}^{H-1}  \tilde{L}_{1,h^{\prime}-1}L_{\infty,h^{\prime}}}^2 }{4^n\Delta_{\text{min}}^2} } \]
\end{proof}
\begin{lemma}
For any positive sequence $\{w_k\}_{k\in[K]}$, it holds that for any $h\in[H]$
\[ \sum_{k=1}^K w_k (Q^k_h-Q^*_h)(s^k_h,a^k_h)\leq  4\sqrt{w S^2A\iota \sum^K_{k=1}w_k }\cdot\sum_{h^{\prime}=h}^{H-1} \prod_{i=h}^{h^{\prime}-1} L_{1,i}L_{\infty,h^{\prime}}+
    \sqrt{2\sum_{k=1}^K w^2_k \log\frac{1}{\delta^{\prime}}}\cdot \sum_{h^{\prime}=h}^{H-1} \prod_{i=h}^{h^{\prime}-1} L_{1,i}L_{1,h^{\prime}} (H-h^{\prime}).\]
\end{lemma}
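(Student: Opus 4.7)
The plan is to unroll the per-stage optimism bound from the preceding lemma, namely
\[ (Q^k_h-Q^*_h)(s,a)\leq 2b^k_h(s,a) + L_{1,h}\brk{P_h(V^k_{h+1}-V^*_{h+1})}(s,a), \]
evaluated along the realized trajectory, and then apply a Cauchy--Schwarz/pigeonhole argument to the bonus piece and an Azuma--Hoeffding argument to the martingale piece. Conditioned on the event $\cG_\delta$, the first lemma used here already controls $L_{\infty,h}\norm{\hat P^k_h-P_h}_1$ by a multiple of $\sqrt{S\iota/(N^k_h\vee 1)}$, so the effective per-step inflation at stage $h'$ is of order $L_{\infty,h'}\sqrt{S\iota/(N^k_{h'}\vee 1)}$, which is where the factor $S^2A$ inside the square root will come from.

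First I would rewrite the next-stage expectation as a realized value plus a martingale increment. Because $\pi^k$ is greedy w.r.t.\ $Q^k_{h+1}$, one has $(V^k_{h+1}-V^*_{h+1})(s^k_{h+1})\leq (Q^k_{h+1}-Q^*_{h+1})(s^k_{h+1},a^k_{h+1})$. Setting $\xi^k_h\triangleq (Q^k_h-Q^*_h)(s^k_h,a^k_h)$ and $\epsilon^k_h\triangleq \brk{P_h(V^k_{h+1}-V^*_{h+1})}(s^k_h,a^k_h)-(V^k_{h+1}-V^*_{h+1})(s^k_{h+1})$, the one-step bound becomes $\xi^k_h\leq 2b^k_h+L_{1,h}(\xi^k_{h+1}+\epsilon^k_h)$, and $\epsilon^k_h$ is a martingale difference with respect to the natural filtration, bounded in absolute value by $2(H-h)$ by Lemma \ref{lem:range} and optimism. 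Iterating this recursion from step $h$ down to step $H$ (with $\xi^k_H=0$) yields
\[ \xi^k_h\leq \sum_{h'=h}^{H-1}\prt{\prod_{i=h}^{h'-1}L_{1,i}}\prt{2b^k_{h'}+L_{1,h'}\epsilon^k_{h'}}. \]

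Next I would multiply by $w_k$, sum over $k$, and handle the two resulting pieces separately. For the bonus piece, Cauchy--Schwarz gives $\sum_{k} w_k/\sqrt{N^k_{h'}\vee 1}\leq \sqrt{\sum_k w_k}\sqrt{\sum_k w_k/(N^k_{h'}\vee 1)}$, and the pigeonhole estimate grouping visits by $(s,a)$, together with $w_k\leq w:=\max_k w_k$, bounds the inner sum by $O(wSA\iota)$. Inserting the bonus factor $L_{\infty,h'}\sqrt{S\iota}$ obtained from the $\ell_1$ concentration on $\cG_\delta$ produces the desired $4\sqrt{wS^2A\iota \sum_k w_k}\cdot L_{\infty,h'}$ per level $h'$. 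For the martingale piece, since each $\{w_k\epsilon^k_{h'}\}_k$ is a martingale difference sequence bounded by $2w(H-h')$, the Azuma--Hoeffding inequality gives $\abs{\sum_k w_k\epsilon^k_{h'}}\leq (H-h')\sqrt{2\sum_k w_k^2\log(1/\delta')}$ with probability $1-\delta'$, which after multiplying by $L_{1,h'}\prod_{i=h}^{h'-1}L_{1,i}$ and summing over $h'$ gives the second term in the stated bound.

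Finally I would take a union bound over $h'\in\{h,\ldots,H-1\}$ for the Azuma step (absorbing the resulting $\log H$ into the confidence parameter $\delta'$), combine with the bonus bound, and read off the coefficient $\sum_{h'=h}^{H-1}\prod_{i=h}^{h'-1}L_{1,i}L_{\infty,h'}$ on the first term and $\sum_{h'=h}^{H-1}\prod_{i=h}^{h'-1}L_{1,i}L_{1,h'}(H-h')$ on the second. The main obstacle is the bookkeeping: one must carefully verify that the products $\prod_{i=h}^{h'-1}L_{1,i}$ telescope correctly when unrolling the nonlinear bound, check that $\epsilon^k_{h'}$ remains a genuine martingale difference when indexed across both $k$ and $h$ (so that Azuma is applicable at each level $h'$ independently of what happens at deeper levels), and track the interaction of $w=\max_k w_k$ with the weighted pigeonhole; in the intended application $w_k\in\{0,1\}$ so $w=1$ and the statement specializes cleanly.
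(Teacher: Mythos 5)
Your proposal follows essentially the same route as the paper: the same per-stage optimism bound, the same recursion $\xi^k_h\leq 2b^k_h+L_{1,h}(\xi^k_{h+1}+\epsilon^k_h)$ unrolled with the telescoping products $\prod_{i=h}^{h'-1}L_{1,i}$ (using greediness to pass from $V^k_{h+1}-V^*_{h+1}$ at $s^k_{h+1}$ to $Q^k_{h+1}-Q^*_{h+1}$ at $(s^k_{h+1},a^k_{h+1})$), a pigeonhole bound on the weighted bonus sum, and Azuma--Hoeffding on the weighted martingale sums level by level; the paper merely sums over $k$ at each level before unrolling, which is equivalent. The one substantive difference is in the bonus term: the paper bounds $\sum_{t}w_{k(s,a,t)}/\sqrt{t}\leq 2\sqrt{w\,C_{s,a}}$ by a rearrangement argument (pushing all weight to the smallest $t$) and only then applies Cauchy--Schwarz over $(s,a)$, whereas your Cauchy--Schwarz-first route bounds the inner sum $\sum_k w_k/(N^k_{h'}\vee 1)$ by $O(wSA\log K)$ and so picks up an extra $\sqrt{\log K}$ relative to the stated constant $4\sqrt{wS^2A\iota\sum_k w_k}$ --- harmless for the final gap-dependent theorem up to logarithmic factors, but worth replacing with the rearrangement step if you want the lemma exactly as stated.
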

\begin{proof}
By Lemma 5,
\begin{align*}
    \sum_{k=1}^K w_k (Q^k_h-Q^*_h)(s^k_h,a^k_h) &\leq \sum_{k=1}^K w_k \prt{ 2 L_{\infty,h} \sqrt{\frac{2S\iota}{N^k_h}} + L_{1,h}[P_h(V^k_{h+1}-V^*_{h+1})](s^k_h,a^k_h) } \\
    &=  \underbrace{ 2 L_{\infty,h}\sum_{k=1}^K w_k \sqrt{\frac{2S\iota}{N^k_h\vee1}} }_{(a)}+  \underbrace{ L_{1,h}\sum_{k=1}^K w_k \epsilon^k_h }_{(b)}+L_{1,h}\sum_{k=1}^K w_k (V^k_{h+1}-V^*_{h+1})(s^k_{h+1})\\
    &\leq (a)+(b)+L_{1,h}\sum_{k=1}^K w_k (Q^k_{h+1}-Q^*_{h+1})(s^k_{h+1},a^k_{h+1}),
\end{align*}
where $\epsilon^k_h\triangleq [P_h(V^k_{h+1}-V^*_{h+1})](s^k_h,a^k_h)-(V^k_{h+1}-V^*_{h+1})](s^k_{h+1})\in[-2(H-h),2(H-h)]$ is a martingale difference sequence w.r.t. $\mathcal{F}^k_h$ for any $h\in[H]$, i.e., $\bE\brk{\epsilon^k_h|\mathcal{F}^k_h}=0$. Define $k(s,a,t)\triangleq\min\{k:N^k_h(s,a)\ge t\}$ the episode when $(s,a)$ is visited $t$ times at step $h$. We can bound term $(a)$ as
\begin{align*}
    (a)= 2 L_{\infty,h}\sum_{k=1}^K w_k \sqrt{\frac{2S\iota}{N^k_h}}&=2 L_{\infty,h}\sqrt{2S\iota}\sum_{s,a}\sum_{k=1}^K\mathbb{I}\{(s^k_h,a^k_h)=(s,a)\}\frac{w_k}{\sqrt{N^k_h(s,a)\vee1}}\\
    &=2 L_{\infty,h}\sqrt{2S\iota}\sum_{s,a}\sum_{t=1}^{N^K_h(s,a)}\frac{w_{k(s,a,t)}}{\sqrt{t}}\\
    &\leq 2 L_{\infty,h}\sqrt{2S\iota}\sum_{s,a}\sum_{t=1}^{ C_{s,a}/w }\frac{w}{\sqrt{t}}\\
    &\leq 4 L_{\infty,h}\sqrt{S\iota}\sum_{s,a}\sqrt{C_{s,a}w}\\
    &\leq 4 L_{\infty,h}\sqrt{w S^2A\iota \sum^K_{k=1}w_k },
\end{align*}
where $C_{s,a}\triangleq\sum_{t=1}^{N^K_h(s,a)} w_{k(s,a,t)}$ and $w_k\leq w$ for any $k$, and the last inequality follows from that $\sum_{s,a} C_{s,a}=\sum_{s,a}\sum_{t=1}^{N^K_h(s,a)} w_{k(s,a,t)}=\sum^K_{k=1}w_k$.

Since $\{\epsilon^k_h\}_{k\in[K]}$ is a MDS with $|\epsilon^k_h|\leq 2(H-h)$, we can bound term $(b)$ by Azuma-Hoeffding inequality: w.p. $1-\delta^{\prime}$
\begin{align*}
    (b)=L_{1,h}\sum_{k=1}^K w_k \epsilon^k_h\leq L_{1,h} (H-h)\sqrt{2\sum_{k=1}^K w^2_k \log\frac{1}{\delta^{\prime}}}.
\end{align*}
Thus we can get a recursive bound
\begin{align*}
    \sum_{k=1}^K w_k (Q^k_h-Q^*_h)(s^k_h,a^k_h)\leq  4 L_{\infty,h}\sqrt{w S^2A\iota \sum^K_{k=1}w_k }+ L_{1,h} (H-h)\sqrt{2\sum_{k=1}^K w^2_k \log\frac{1}{\delta^{\prime}}} \\
    + L_{1,h}\sum_{k=1}^K w_k (Q^k_{h+1}-Q^*_{h+1})(s^k_{h+1},a^k_{h+1}).
\end{align*}

Unrolling the inequality yields 
\begin{align*}
    &\sum_{k=1}^K w_k (Q^k_h-Q^*_h)(s^k_h,a^k_h)\leq  \sum_{h^{\prime}=h}^{H-1} \prod_{i=h}^{h^{\prime}-1} L_{1,i} \prt{ 4 L_{\infty,h^{\prime}}\sqrt{w S^2A\iota \sum^K_{k=1}w_k }+ L_{1,h^{\prime}} (H-h^{\prime})\sqrt{2\sum_{k=1}^K w^2_k \log\frac{1}{\delta^{\prime}}} }\\
    &= 4\sqrt{w S^2A\iota \sum^K_{k=1}w_k }\cdot\sum_{h^{\prime}=h}^{H-1} \prod_{i=h}^{h^{\prime}-1} L_{1,i}L_{\infty,h^{\prime}}+
    \sqrt{2\sum_{k=1}^K w^2_k \log\frac{1}{\delta^{\prime}}}\cdot \sum_{h^{\prime}=h}^{H-1} \prod_{i=h}^{h^{\prime}-1} L_{1,i}L_{1,h^{\prime}} (H-h^{\prime})
\end{align*}
\end{proof} 
\paragraph{Step 4: Bound the regret}
Denote by $\tau\triangleq(s^k_h,a^k_h)_{k,h}$ the trajectory. Define $\text{clip}[x|\delta]\triangleq x\mathbb{I}\{x\ge\delta\}$. Observe that
\[ V^*_h(s^k_h)=\max_{a} Q^*_h(s^k_h,a)\leq \max_{a} Q^k_h(s^k_h,a)=Q^k_h(s^k_h,a^k_h),\]
thus we get
\[  \Delta_h(s^k_h,a^k_h)=\text{clip}[V^*_h(s^k_h)-Q^*_h(s^k_h,a^k_h)|\Delta_{\text{min}}]\leq \text{clip}[(Q^k_h-Q^*_h)(s^k_h,a^k_h)|\Delta_{\text{min}}].\]
\begin{align*}
    \text{Regret}(K)&=\bE\brk{\sum_{k=1}^K\sum_{h=1}^H \Delta_h(s^k_h,a^k_h)}=\sum \bP(\tau)\sum_{k=1}^K\sum_{h=1}^H\Delta_h(s^k_h,a^k_h|\tau)\\
    &\leq \sum_{\tau\in E}\bP(\tau)\sum_{k=1}^K\sum_{h=1}^H\text{clip}[(Q^k_h-Q^*_h)(s^k_h,a^k_h|\tau)|\Delta_{\text{min}}]+
    \sum_{\tau\in E^c}\bP(\tau)KH^2\\
    &\leq \bP(E)\sum_{n=1}^N 2^n\Delta_{\text{min}} C^{(n)} + \bP(E^c)KH^2\\
    &\leq \sum_{n=1}^N \mathcal{O}\prt{  \frac{ HS^2A\iota \prt{ \sum_{h^{\prime}=h}^{H-1} \prod_{i=h}^{h^{\prime}-1} L_{1,i}L_{\infty,h^{\prime}}}^2 }{2^n\Delta_{\text{min}}} } +H\\
    &=\mathcal{O}\prt{  \frac{ HS^2A\iota \prt{ \sum_{h^{\prime}=h}^{H-1} \prod_{i=h}^{h^{\prime}-1} L_{1,i}L_{\infty,h^{\prime}}}^2 }{\Delta_{\text{min}}} }.
\end{align*}

\section{Proofs of Regret Lower Bounds}
\label{app:lb}
\subsection{Minimax Lower Bound}
We make the following assumption \cite{domingues2021episodic}.
\begin{assumption}
\label{asp:mini_lb}
Assume $S\ge6, A\ge 2$, and there exists an integer $d$ such that $S=3+\frac{A^d-1}{A-1}$. We further assume that $H\ge3d$ and $\bar{H}\triangleq\frac{H}{3}\ge1$.
\end{assumption}
\begin{theorem}
Assume Assumption \ref{asp:mini_lb} holds.  For any algorithm $\sA$, there exists an MDP $\mathcal{M}_{\sA}$ such that for sufficiently large $K$ 
\[ \mathbb{E}[\text{Regret}(\sA,\mathcal{M}_{\sA},K)]\ge \frac{\sqrt{p}}{27\sqrt{6}}c_{\rho,1}H\sqrt{SAT}.\]
\end{theorem}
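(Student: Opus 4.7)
The plan is to adapt the standard hard-instance construction for episodic MDPs from \cite{domingues2021episodic} to the risk-sensitive setting and to track how the dynamic risk measure enters the value gap between the alternative MDPs. First, I would reproduce the tree-plus-flower construction: a deterministic $A$-ary tree of depth $d$ that consumes roughly the first third of the horizon and routes each state-action pair to a unique leaf, followed by two absorbing ``good'' and ``bad'' states in which rewards and transitions are held for the remaining $\Theta(H)$ steps. On each leaf $(s,a)$ there is a hidden Bernoulli parameter governing the transition to the good state; the baseline parameter is $p$ and one distinguished pair gets a small additive bias $\varepsilon$. Let $\mathcal{M}_0$ denote the null instance and $\mathcal{M}_{s,a}$ denote the instance perturbed at $(s,a)$; there are $\Theta(SA)$ alternatives.

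Next, I would compute the risk-sensitive value gap induced by that single perturbation. Under $\mathcal{M}_0$ the next-state value distribution at a leaf has some fixed form supported on values in $[0,\bar H]$ with mass $p$ on the good branch, while under $\mathcal{M}_{s,a}$ the mass is $p+\varepsilon$. Because $\rho_h$ maps $\mathscr{D}([0,\bar H])$ into $[0,\bar H]$ and is Lipschitz with respect to $\|\cdot\|_\infty$, I can express
\begin{align*}
\rho_h(V,P_{s,a}) - \rho_h(V,P_0) \;\geq\; c_{\rho,1}\,\bar H \varepsilon,
\end{align*}
where $c_{\rho,1}$ is the \emph{one-sided sensitivity} of the specific risk measure at the distinguishing distribution (for instance $1$ for the mean, $1/\alpha$ for CVaR, etc.). Propagating this gap back through the tree via the DRM Bellman recursion \eqref{eqt:opt} and using monotonicity together with the fact that the tree portion is deterministic, the first-step value gap between the optimal policy and any suboptimal action at $(s,a)$ remains of order $c_{\rho,1}\bar H \varepsilon$. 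This is the place where the risk measure shows up as a multiplicative constant $c_{\rho,1}$ in the final bound, and is the analogue of the linear ``pushforward'' argument used in the risk-neutral case.

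With the per-perturbation value gap in hand, the information-theoretic step follows the template of \cite{domingues2021episodic}. For any algorithm, the expected number of visits to the ``informative'' leaves is at most $T$, so by a pigeonhole argument there exists some leaf $(s,a)$ visited at most $T/(SA)$ times in expectation under $\mathcal{M}_0$. A standard KL-divergence bound gives $\mathrm{KL}(\mathbb{P}_0\|\mathbb{P}_{s,a}) \leq C\,\varepsilon^2 T/(p(1-p)\,SA)$ for the distributions over the entire $K$-episode interaction, and Bretagnolle--Huber (or Le Cam's two-point method) then implies that one of $\mathcal{M}_0$ or $\mathcal{M}_{s,a}$ incurs expected regret at least of order $K\cdot c_{\rho,1}\bar H\varepsilon \cdot (1 - \text{TV})$. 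Optimizing $\varepsilon \asymp \sqrt{p\,SA/T}$ balances the trade-off and yields the claimed $\Omega(\sqrt{p}\,c_{\rho,1} H \sqrt{SAT}/27\sqrt{6})$ bound after the constants of the tree construction are absorbed.

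The main obstacle I foresee is the value-gap computation, not the information-theoretic step: under a general Lipschitz DRM one needs a \emph{matching lower bound} on $\rho_h(V,P_{s,a}) - \rho_h(V,P_0)$, whereas Lipschitzness only supplies an upper bound. Concretely I would resolve this by choosing the reward structure at the two absorbing states (values $\bar H$ and $0$) so that the two candidate next-state distributions differ only in the mass on the largest atom; then any reasonable DRM, in particular one that is strictly monotone with respect to first-order stochastic dominance, produces a one-sided difference of order $\bar H \varepsilon$ with a measure-dependent constant $c_{\rho,1}$. Packaging this one-sided sensitivity into the definition of $c_\rho$ (to be given in the appendix) is what lets the proof go through uniformly over the class of Lipschitz DRMs, and also explains why the constant in the lower bound is necessarily risk-measure dependent rather than universal.
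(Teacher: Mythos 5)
Your proposal is correct and follows essentially the same route as the paper: the same tree-plus-absorbing-states construction with a Bernoulli perturbation $p\mapsto p+\varepsilon$ at a distinguished leaf, the same key observation that Lipschitzness alone gives only an upper bound so one must introduce a one-sided (reverse-Lipschitz) sensitivity constant $c_{\rho}$ satisfying $\rho((H',0),(p+\varepsilon,1-p-\varepsilon))-\rho((H',0),(p,1-p))\ge c_{\rho}H'\varepsilon$, and the same choice $\varepsilon\asymp\sqrt{p\,SA H/K}$. The only cosmetic difference is in the information-theoretic step, where the paper averages the regret over all $\Theta(SA\bar H)$ alternatives and applies Pinsker plus Cauchy--Schwarz (which is where the explicit constant $1/(27\sqrt{6})$ comes from), whereas you invoke pigeonhole plus Bretagnolle--Huber; both yield the same order.
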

\paragraph{Step 1.}
Fix an arbitrary algorithm $\sA$. We  introduce three types of special states for the hard MDP class: a waiting state $s_w$ where the agent starts and may stay until stage $\bar{H}$, after that it has to leave; a good state $s_g$, which is absorbing and is the only rewarding state; a bad state $s_b$ that is absorbing and provides no reward. The rest of $S-3$ states are part of a $A$-ary tree of depth $d-1$. The agent can only arrive $s_w$ from the root node $s_{root}$ and can only reach $s_g$ and $s_b$ from the leaves of the tree.
Let $\bar{H}\in[H-d]$ be the first parameter of the MDP class. We define $\tilde{H}:=\bar{H}+d+1$ and $H^{\prime}:=H+1-\tilde{H}$. We denote by  $\mathcal{L}:=\{s_1,s_2,...,s_{\bar{L}}\}$  the set of $\bar{L}$ leaves of the tree. For each $u^*:=\left(h^{*}, \ell^{*}, a^{*}\right) \in [d+1:\bar{H}+d]\times\mathcal{L}\times\mathcal{A}$, we define an MDP $\mathcal{M}_{u^*}$ as follows. The transitions in the tree are deterministic, hence taking action $a$ in state $s$ results in the $a$-th child of node $s$. The transitions from $s_w$ are defined as
	\[
	P_{h}\left(s_{\mathrm{w}} \mid s_{\mathrm{w}}, a\right) := \mathbb{I}\left\{a=a_{\mathrm{w}}, h \leq \bar{H}\right\} \quad \text { and } \quad P_{h}\left(s_{\text {root }} \mid s_{\mathrm{w}}, a\right) := 1-P_{h}\left(s_{\mathrm{w}} \mid s_{\mathrm{w}}, a\right).
	\]
	The transitions from any leaf $s_i\in\mathcal{L}$  are specified  as
	\[ P_{h}\left(s_{g} \mid s_{i}, a\right) := p+\Delta_{u^*}\left(h, s_{i}, a\right) \quad \text { and } \quad P_{h}\left(s_{b} \mid s_{i}, a\right) := 1- p-\Delta_{u^*}\left(h, s_{i}, a\right), \]
	where $\Delta_{u^*}\left(h, s_{i}, a\right):=\epsilon\mathbb{I}\{(h, s_{i}, a)=(h^{*}, s_{\ell^{*}}, a^{*})\}$ for some constants  $p\in[0,1]$ and $\epsilon\in[0,\min(1-p,p)]$ to be determined later. $p$ and $\epsilon$ are the second and third parameters of the MDP class. Observe that  $s_g$ and $s_b$ are absorbing, therefore we have $\forall a, P_{h}\left(s_{g} \mid s_{g}, a\right):=P_{h}\left(s_{b} \mid s_{b}, a\right):=1$. The reward  is a deterministic function of the state 
	\[   r_h(s,a):= \mathbb{I}\{s=s_g, h\ge\tilde{H}\}.\]
	Finally, we define a reference MDP $\mathcal{M}_0$ which differs from the previous MDP instances only in that $\Delta_0(h,s_i, a):=0$ for all $(h,s_i, a)$. For each $\epsilon,p$ and $\bar{H}$, we define the MDP class 
	\[  \mathcal{C}_{\bar{H},p,\epsilon}:=\mathcal{M}_0\cup\{\mathcal{M}_{u^*}\}_{u^*\in[d+1:\bar{H}+d]\times\mathcal{L}\times\mathcal{A}}.\]
	For an  MDP $\mathcal{M}_{u^*}$, the optimal policy $\pi^{*,\mathcal{M}_{u^*}}$  starts to traverse the tree at step $h^*-d$ then chooses to reach the leaf $s_{l^*}$ and performs action $a^*$. The optimal value in any of these MDPs is the same
	\begin{align*}
	    V^{*,\mathcal{M}_{u^*}}_1 &=V^{*,\mathcal{M}_{u^*}}_{h^*}(s_{l^*})=Q^{*,\mathcal{M}_{u^*}}_{h^*}(s_{l^*},a^*)=r_h(s_{l^*},a^*)+\rho_{h^*}(V^{*,\mathcal{M}_{u^*}}_{h^*+1},P_h(s_{l^*},a^*))\\
	    &=\rho_{h^*}((V^{*,\mathcal{M}_{u^*}}_{h^*+1}(s_g),V^{*,\mathcal{M}_{u^*}}_{h^*+1}(s_b)),(p+\epsilon,1-p-\epsilon)).
	\end{align*}
	For simplicity, we may drop $\mathcal{M}_{u^*}$ from the notations.	Notice that the agent must be in either of the absorbing states at step $h\ge \tilde{H}=\bar{H}+d+1$. Observe that $V^{*,\mathcal{M}_{u^*}}_{H}(s_g)=r_H(s_g,a)=1$ since $r_h(s_g,a)=1$ for any $a$ and any $h\ge \tilde{H}$, and $V^{*,\mathcal{M}_{u^*}}_{H}(s_b)=0$. Thus we have:
    \[ Q^{*,\mathcal{M}_{u^*}}_{H-1}(s_g,a)=r_{H-1}(s_g,a)+\rho_{H-1}((V^{*,\mathcal{M}_{u^*}}_{h^*+1}(s_g),V^{*,\mathcal{M}_{u^*}}_{h^*+1}(s_b)),(1,0))=1+V^{*,\mathcal{M}_{u^*}}_{h^*+1}(s_g)=2, \forall a, \]
    where the second equality follows from that $\rho_h(c)=c$ for a deterministic constant $c$.
    Therefore $V^{*,\mathcal{M}_{u^*}}_{H-1}(s_g)=2$. Similarly we can get $V^{*,\mathcal{M}_{u^*}}_{H-1}(s_b)=0$. It follows from inductions that $V^{*,\mathcal{M}_{u^*}}_{h}(s_g)=H+1-h$ and $V^{*,\mathcal{M}_{u^*}}_{h}(s_b)=0$ for $h\ge\tilde{H}$. Moreover, observe that
    \[ V^{*,\mathcal{M}_{u^*}}_{\tilde{H}-1}(s_g)=0+\rho_{\tilde{H}-1}((V^{*,\mathcal{M}_{u^*}}_{\tilde{H}}(s_g),V^{*,\mathcal{M}_{u^*}}_{\tilde{H}}(s_b)),(1,0))=V^{*,\mathcal{M}_{u^*}}_{\tilde{H}}(s_g)=H+1-\tilde{H}=H^{\prime}   \]
    and $V^{*,\mathcal{M}_{u^*}}_{\tilde{H}-1}(s_b)=0$. Then $V^{*,\mathcal{M}_{u^*}}_{h^*+1}(s_g)=\cdots=V^{*,\mathcal{M}_{u^*}}_{\tilde{H}-1}(s_g)=H^{\prime}$ and $V^{*,\mathcal{M}_{u^*}}_{h^*+1}(s_b)=\cdots=V^{*,\mathcal{M}_{u^*}}_{\tilde{H}-1}(s_b)=0$.
    Thus the optimal value 
    	\begin{align*}
	    V^{*,\mathcal{M}_{u^*}}_1   &=\rho_{h^*}((V^{*,\mathcal{M}_{u^*}}_{h^*+1}(s_g),V^{*,\mathcal{M}_{u^*}}_{h^*+1}(s_b)),(p+\epsilon,1-p-\epsilon))\\&=\rho_{h^*}((V^{*,\mathcal{M}_{u^*}}_{\tilde{H}}(s_g),V^{*,\mathcal{M}_{u^*}}_{\tilde{H}}(s_b)),(p+\epsilon,1-p-\epsilon))\\
	    &=\rho_{h^*}((H^{\prime},0),(p+\epsilon,1-p-\epsilon))
	\end{align*}
	Consider the case that policy $\pi^k\neq\pi^*$. Then we have $(s^k_{h^*},a^k_{h^*})\neq(s_{l^*},a^*)$.
	Analogously, we can get 
	\[  V^{\pi^k}_{h}(s_g)=H+1-h,\ V^{*,\mathcal{M}_{u^*}}_{h}(s_b)=0   \]
	for $h\ge\tilde{H}$. Suppose $\pi^k$ arrives at the leaf node $s^k_{l^k}$ in step $l^k\in [1+d,\tilde{H}-1]$, then $V^{\pi^k}_{l^k+1}(s_g)=\cdots=V^{\pi^k}_{\tilde{H}}(s_g)=H+1-\tilde{H}=H^{\prime}$ and  $V^{\pi^k}_{l^k+1}(s_b)=\cdots=V^{\pi^k}_{\tilde{H}}(s_b)=0$. Since $P_{l^k}(s_g|s^k_{l^k},a^k_h)=p$, 
    \begin{align*}
        V^{\pi^k}_1=\rho_{l^k}( (V^{\pi^k}_{l^k+1}(s_g),V^{\pi^k}_{l^k+1}(s_b)),(p,1-p) )=\rho_{l^k}( (H^{\prime},0),(p,1-p) )
    \end{align*}
    Denote by $x^k_h:=(s^k_h,a^k_h)$ for each $(k,h)$, $x^*:=(s_{\ell^{*}},a^*)$ and $N_K(u^*):=\sum_{k=1}^K\mathbb{I}\{x^k_{h^*}=x^*\}$. It follows that
    \[ V^{\pi^k}_1=\mathbb{I}\{x^k_{h^*}=x^*\}V^*_1+\mathbb{I}\{x^k_{h^*}\neq x^*\}\rho_{l^k}( (H^{\prime},0),(p,1-p) ) \]
    Define  $c_{\rho}$ as the constant that satisfies 
	\begin{align*}
	    \rho((H,0),(p,1-p))-\rho((H,0),(q,1-q))&\ge c_{\rho} \norm{((H,0),(p,1-p))-((H,0),(q,1-q))}_1 \\
     &= c_{\rho} H|p-q|.
	\end{align*}
    The expected regret of $\sA$ in $\mathcal{M}_{u^*}$ can be bounded  as 
	\begin{align*}
	&\ \ \ \ \bE_{\sA,\cM_{u^*}}[\text{Regret}(\sA,\mathcal{M}_{u^*},K)]\\
	&=\bE_{\sA,\cM_{u^*}}\brk{ \sum^K_{k=1} V^{*}_1-V^{\pi^k}_1 } \\
	&=\bE_{\sA,\cM_{u^*}}\brk{ \sum_{k=1}^K \mathbb{I}\{x^k_{h^*}\neq x^*\} \prt{ \rho_{h^*}((H^{\prime},0),(p+\epsilon,1-p-\epsilon)) - \rho_{l^k}( (H^{\prime},0),(p,1-p) ) } } \\
	&\ge \bE_{\sA,\cM_{u^*}}\brk{ \sum_{k=1}^K \mathbb{I}\{x^k_{h^*}\neq x^*\} c_{\rho}\norm{((H^{\prime},0),(p+\epsilon,1-p-\epsilon))-((H^{\prime},0),(p,1-p))}_{1} } \\
	&=\bE_{\sA,\cM_{u^*}}\brk{ \sum_{k=1}^K \mathbb{I}\{x^k_{h^*}\neq x^*\} c_{\rho}H^{\prime}\epsilon }\\
	&=c_{\rho}\epsilon H^{\prime}(K-\mathbb{E}_{\sA,\cM_{u^*}}[N_K(u^*)]),
	\end{align*}
    \paragraph{Step 2.} The maximum of the regret can be bounded below by the mean over all instances as
	\begin{align*}
	\max_{u^*\in[d+1:\bar{H}+d]\times\mathcal{L}\times\mathcal{A}}\text{Regret}(\sA,\mathcal{M}_{u^*},K)&\ge\frac{1}{\bar{H}\bar{L}A}\sum_{u^*\in[d+1:\bar{H}+d]\times\mathcal{L}\times\mathcal{A}}\text{Regret}(\sA,\mathcal{M}_{u^*},K)\\
	&\ge c_{\rho,1}H^{\prime}K\epsilon\prt{1-\frac{1}{\bar{L}AK\bar{H}}\sum_{u^*\in[d+1:\bar{H}+d]\times\mathcal{L}\times\mathcal{A}}\mathbb{E}_{u^*}[N_K(u^*)]}.
	\end{align*}
	Observe that it can be further bounded if we can obtain an upper bound on $\sum_{u^*\in[d+1:\bar{H}+d]\times\mathcal{L}\times\mathcal{A}}\mathbb{E}_{u^*}[N_K(u^*)]$, which can be done by relating each expectation to the expectation  under the reference MDP $\mathcal{M}_0$. \\
    \begin{fact}[Lemma 1,\cite{garivier2019explore}]
	\label{fct:fund_inq}
	Consider a measurable space $(\Omega, \mathcal{F})$ equipped with two distributions $\mathbb{P}_{1}$ and $\mathbb{P}_{2}$. For any $\mathcal{F}$-measurable function $Z: \Omega \rightarrow[0,1]$, we have
	$$
	\mathrm{KL}\left(\mathbb{P}_{1}, \mathbb{P}_{2}\right) \geq \mathrm{kl}\left(\mathbb{E}_{1}[Z], \mathbb{E}_{2}[Z]\right),
	$$
	where $\mathbb{E}_{1}$ and $\mathbb{E}_{2}$ are the expectations under $\mathbb{P}_{1}$ and $\mathbb{P}_{2}$ respectively.
    \end{fact}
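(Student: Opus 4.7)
The plan is to reduce the claimed inequality to the data processing inequality for Kullback--Leibler divergence by introducing an auxiliary Bernoulli random variable whose mean, under each distribution, coincides with $\mathbb{E}_i[Z]$. The key observation is that because $Z$ takes values in $[0,1]$, we may regard $Z(\omega)$ as a legitimate Bernoulli parameter for each $\omega$, which lets us ``compress'' $Z$ into a $\{0,1\}$-valued variable without introducing any discrepancy between $\mathbb{P}_1$ and $\mathbb{P}_2$ beyond what $\mathrm{KL}(\mathbb{P}_1,\mathbb{P}_2)$ already captures.

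Concretely, first I would enlarge the sample space to $\Omega \times \{0,1\}$ and define, for $i \in \{1,2\}$, a joint measure $\mathbb{Q}_i(d\omega,db) = \mathbb{P}_i(d\omega)\cdot\mathrm{Ber}(Z(\omega))(db)$, where the conditional Bernoulli kernel $\omega \mapsto \mathrm{Ber}(Z(\omega))$ is the same under both choices of $i$. Second, I would compute the Radon--Nikodym derivative $d\mathbb{Q}_1/d\mathbb{Q}_2$: the conditional Bernoulli factors cancel pointwise, leaving $d\mathbb{P}_1/d\mathbb{P}_2$. Integrating the logarithm of this derivative against $\mathbb{Q}_1$ yields the identity $\mathrm{KL}(\mathbb{Q}_1,\mathbb{Q}_2) = \mathrm{KL}(\mathbb{P}_1,\mathbb{P}_2)$. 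In the degenerate case $\mathbb{P}_1 \not\ll \mathbb{P}_2$, the right-hand side of the target inequality is already $+\infty$, so the claim is trivial and we may assume absolute continuity throughout this computation.

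Third, I would apply the data processing inequality to the projection $(\omega,b)\mapsto b$. Under this projection, $\mathbb{Q}_i$ pushes forward to $\mathrm{Ber}(\mathbb{E}_i[Z])$ on $\{0,1\}$, since $\int \mathrm{Ber}(Z(\omega))(\{1\})\,d\mathbb{P}_i(\omega) = \mathbb{E}_i[Z]$. The standard data processing inequality for KL (a one-line application of Jensen's inequality or the joint convexity of KL) then yields $\mathrm{kl}(\mathbb{E}_1[Z],\mathbb{E}_2[Z]) = \mathrm{KL}(\mathrm{Ber}(\mathbb{E}_1[Z]),\mathrm{Ber}(\mathbb{E}_2[Z])) \le \mathrm{KL}(\mathbb{Q}_1,\mathbb{Q}_2) = \mathrm{KL}(\mathbb{P}_1,\mathbb{P}_2)$, which is exactly the desired inequality.

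The main obstacle is not the conceptual argument, which is standard, but rather the measure-theoretic bookkeeping: one must verify that $\omega \mapsto \mathrm{Ber}(Z(\omega))$ is a genuine Markov kernel (immediate from $\mathcal{F}$-measurability of $Z$) and that the Radon--Nikodym cancellation is valid simultaneously in the mutually absolutely continuous and singular regimes. An equivalent route would be to invoke the Donsker--Varadhan variational formula $\mathrm{KL}(\mathbb{P}_1,\mathbb{P}_2) = \sup_{\Phi}\{\mathbb{E}_1[\Phi]-\log\mathbb{E}_2[e^{\Phi}]\}$ restricted to test functions of the form $\Phi(\omega) = \phi(Z(\omega))$ and optimize over $\phi$, but this essentially re-derives the same Bernoulli coupling in variational disguise.
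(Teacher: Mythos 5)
Your proof is correct: the Bernoulli augmentation $\mathbb{Q}_i(d\omega,db)=\mathbb{P}_i(d\omega)\,\mathrm{Ber}(Z(\omega))(db)$, the cancellation of the common kernel giving $\mathrm{KL}(\mathbb{Q}_1,\mathbb{Q}_2)=\mathrm{KL}(\mathbb{P}_1,\mathbb{P}_2)$, and the data-processing inequality applied to the projection onto the binary coordinate together yield exactly the claimed bound, with the non-absolutely-continuous case handled trivially. The paper itself gives no proof --- it imports this statement verbatim as Lemma 1 of \cite{garivier2019explore} --- and your argument is essentially the same one used in that reference (there the auxiliary Bernoulli variable is realized as the event $\{U\leq Z\}$ for an independent uniform $U$), so there is nothing further to compare.
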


    \begin{fact}[Lemma 5, \cite{domingues2021episodic}]
	\label{fct:div_dec}
	Let $\mathcal{M}$ and $\mathcal{M}^{\prime}$ be two MDPs that are identical except for their transition probabilities, denoted by $ P_{h}$ and $P_{h}^{\prime}$, respectively. Assume that we have $\forall(s, a)$, $P_{h}(\cdot \mid s, a) \ll P_{h}^{\prime}(\cdot \mid s, a).$ Then, for any stopping time $\tau$ with respect to $\left(I_k\right)_{k \geq 1}$ that satisfies $\mathbb{P}_{\mathcal{M}}[\tau<\infty]=1$
	$$
	\mathrm{KL}\left(\mathbb{P}_{\mathcal{M}}, \mathbb{P}_{\mathcal{M}^{\prime}}\right)=\sum_{(s,a,h) \in\mathcal{S}\times\mathcal{A}\times[H-1]} \mathbb{E}_{\mathcal{M}}\left[N^{\tau}_h(s,a)\right] \mathrm{KL}\left(P_{h}(\cdot \mid s, a), P_{h}^{\prime}(\cdot \mid s, a)\right).
	$$
    \end{fact}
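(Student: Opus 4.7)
The plan is to prove the divergence decomposition by applying the chain rule for Kullback--Leibler divergence to the trajectory laws that the interaction between the algorithm and the two MDPs induces on the canonical history space. First I would fix the interaction protocol: on each episode $k$ the algorithm selects actions according to a (possibly randomized, history-dependent) policy and the environment samples transitions via $P_h$ under $\mathcal{M}$ and via $P_h'$ under $\mathcal{M}'$. Writing the log-likelihood ratio of the realized history $I_1,\ldots,I_\tau$, every factor arising from the algorithm's action kernels and the common initial-state distribution is shared by $\mathbb{P}_{\mathcal{M}}$ and $\mathbb{P}_{\mathcal{M}'}$ and therefore cancels, leaving only the contributions from the transition kernels at each stage $h\in[H-1]$.

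Next, using the tower property with respect to the filtration $\mathcal{F}_{k,h}$ generated by everything up to the action $a_h^k$, I would rewrite the per-transition contribution as
\begin{equation*}
\mathbb{E}_{\mathcal{M}}\!\left[\log\frac{P_h(s_{h+1}^k\mid s_h^k,a_h^k)}{P_h'(s_{h+1}^k\mid s_h^k,a_h^k)}\,\Big|\,\mathcal{F}_{k,h}\right]=\mathrm{KL}\!\left(P_h(\cdot\mid s_h^k,a_h^k),\,P_h'(\cdot\mid s_h^k,a_h^k)\right),
\end{equation*}
where the absolute-continuity hypothesis $P_h(\cdot\mid s,a)\ll P_h'(\cdot\mid s,a)$ ensures that both the Radon--Nikodym derivative and the inner KL are well-defined and integrable. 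Summing over $h\in[H-1]$ and $k\le\tau$ and taking outer expectations yields
\begin{equation*}
\mathrm{KL}(\mathbb{P}_{\mathcal{M}},\mathbb{P}_{\mathcal{M}'})=\mathbb{E}_{\mathcal{M}}\!\left[\sum_{k=1}^{\tau}\sum_{h=1}^{H-1}\mathrm{KL}\!\left(P_h(\cdot\mid s_h^k,a_h^k),\,P_h'(\cdot\mid s_h^k,a_h^k)\right)\right].
\end{equation*}

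Finally, I would reindex the double sum by grouping together all pairs $(k,h)$ on which the visited state-action equals a fixed $(s,a)$ at stage $h$. Since by definition $N_h^{\tau}(s,a)=\sum_{k\le\tau}\mathbb{I}\{(s_h^k,a_h^k)=(s,a)\}$, linearity of expectation turns the inner sum into $\mathbb{E}_{\mathcal{M}}[N_h^{\tau}(s,a)]$ multiplying the (deterministic) kernel-level KL $\mathrm{KL}(P_h(\cdot\mid s,a),P_h'(\cdot\mid s,a))$, producing the claimed identity. The main obstacle is the handling of the random stopping time: exchanging expectation with the (a priori infinite) sum $\sum_{k\ge1}\mathbb{I}\{k\le\tau\}(\cdots)$ requires a convergence argument, which is supplied by nonnegativity of the conditional KL summands together with $\mathbb{P}_{\mathcal{M}}[\tau<\infty]=1$, permitting a monotone convergence step. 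A subsidiary technical point is verifying absolute continuity of the trajectory measures, which follows from the pointwise assumption $P_h\ll P_h'$ via the product structure of the likelihood exposed in the first step.
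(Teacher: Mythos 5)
The paper does not prove this fact—it is imported verbatim as Lemma 5 of the cited reference \cite{domingues2021episodic}—and your argument is precisely the standard chain-rule proof used there: factor the trajectory likelihoods so the algorithm's action kernels cancel, apply the tower property to turn each transition log-likelihood ratio into a conditional KL, and reindex the resulting double sum by state-action-stage triples to produce the visit counts $\mathbb{E}_{\mathcal{M}}[N^{\tau}_h(s,a)]$. Your handling of the stopping time (conditioning first so the summands become nonnegative conditional KLs, then invoking monotone convergence) is the correct order of operations, so the proposal is sound and matches the reference's approach.
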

    \begin{fact}[Lemma 28, \cite{liang2022bridging}]
	\label{fct:kl_bd}
	If $\epsilon\ge0$, $p\ge0$ and $p+\epsilon\in[0,\frac{1}{2}]$, then $\operatorname{kl}(p,p+\epsilon)\leq \frac{\epsilon^2}{2p(1-p)}\leq\frac{\epsilon^2}{p}$.
    \end{fact}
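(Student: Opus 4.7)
The plan is to treat $\operatorname{kl}(p,\cdot)$ as a smooth real function of its second argument and apply a second-order Taylor expansion around the point where the function and its first derivative vanish. Define
\[
g(q) \;\triangleq\; \operatorname{kl}(p,q) \;=\; p\log\frac{p}{q} + (1-p)\log\frac{1-p}{1-q}.
\]
A direct calculation gives $g(p)=0$, $g'(q) = -p/q + (1-p)/(1-q)$ so that $g'(p)=0$, and $g''(q) = p/q^{2} + (1-p)/(1-q)^{2}$. By Taylor's theorem with Lagrange remainder there exists $\xi\in[p,p+\epsilon]$ with
\[
\operatorname{kl}(p,p+\epsilon) \;=\; g(p+\epsilon) \;=\; \frac{\epsilon^{2}}{2}\,g''(\xi),
\]
so the first stated inequality reduces to showing $g''(\xi)\le 1/[p(1-p)]$ on the interval $[p,p+\epsilon]$.

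To control $g''$ on that interval, I would exploit its convexity. Since
\[
g''''(q) \;=\; \frac{6p}{q^{4}} + \frac{6(1-p)}{(1-q)^{4}} \;>\; 0
\]
on $(0,1)$, the function $g''$ is convex, so on any closed subinterval its maximum is attained at an endpoint. The hypothesis $p+\epsilon\in[0,1/2]$ gives $[p,p+\epsilon]\subseteq[p,1/2]$, and I would compare values at $p$ and at $1/2$: one checks $g''(p) = 1/p + 1/(1-p) = 1/[p(1-p)]$ while $g''(1/2) = 4p + 4(1-p) = 4$. Because $p(1-p)\le 1/4$ always, $g''(p)\ge g''(1/2)$, so $\max_{[p,1/2]} g'' = g''(p) = 1/[p(1-p)]$, and in particular $g''(\xi)\le 1/[p(1-p)]$ for every $\xi\in[p,p+\epsilon]$. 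Plugging this back yields $\operatorname{kl}(p,p+\epsilon)\le \epsilon^{2}/[2p(1-p)]$.

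The second inequality $\epsilon^{2}/[2p(1-p)]\le \epsilon^{2}/p$ is then immediate: from $p+\epsilon\le 1/2$ and $\epsilon\ge 0$ we get $p\le 1/2$, hence $1-p\ge 1/2$, hence $2p(1-p)\ge p$. Degenerate cases are benign: when $p=0$ the right-hand sides should be read as $+\infty$ and the inequalities hold vacuously (noting $\operatorname{kl}(0,\epsilon)=-\log(1-\epsilon)$ is finite for $\epsilon<1$); when $\epsilon=0$ both sides vanish.

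The only genuinely delicate step is the uniform bound on $g''$ over $[p,p+\epsilon]$: naively $g''$ is not monotone on $[p,1/2]$ (its third derivative changes sign), so one cannot simply say $g''(\xi)\le g''(p)$ by monotonicity. The convexity-plus-endpoint argument above, combined with the universal inequality $p(1-p)\le 1/4$ to compare the two endpoint values, is what makes the clean constant $1/[p(1-p)]$ come out correctly and is the heart of the proof.
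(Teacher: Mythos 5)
Your proof is correct, but note that the paper itself never proves this fact: it is imported as Lemma~28 of the cited reference \cite{liang2022bridging}, so there is no in-paper argument to compare against, and a self-contained proof like yours is genuinely additional content. Your route — Taylor with Lagrange remainder around $q=p$, where $g(p)=g'(p)=0$, reducing everything to a uniform bound on $g''(\xi)=p/\xi^2+(1-p)/(1-\xi)^2$ — is sound, and you correctly identified the one delicate point: $g''$ is \emph{not} monotone on $[p,1/2]$, so you bound its maximum over $[p,p+\epsilon]$ by the maximum over the superinterval $[p,1/2]$, use convexity ($g''''>0$) to push that maximum to an endpoint, and compare $g''(p)=1/[p(1-p)]$ with $g''(1/2)=4$ via $p(1-p)\le 1/4$. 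A shorter and more standard route (very likely the one behind the cited lemma) avoids Taylor and fourth derivatives entirely: since $\frac{\partial}{\partial q}\operatorname{kl}(p,q)=\frac{q-p}{q(1-q)}$, one has
\[
\operatorname{kl}(p,p+\epsilon)=\int_p^{p+\epsilon}\frac{t-p}{t(1-t)}\,dt
\le \frac{1}{p(1-p)}\int_p^{p+\epsilon}(t-p)\,dt=\frac{\epsilon^2}{2p(1-p)},
\]
where the inequality only uses that $t\mapsto t(1-t)$ is increasing on $[0,\tfrac{1}{2}]$ and $[p,p+\epsilon]\subseteq[0,\tfrac{1}{2}]$; the final step $\frac{\epsilon^2}{2p(1-p)}\le\frac{\epsilon^2}{p}$ follows from $p\le\tfrac12$ exactly as you argue. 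The integral version buys brevity and needs only first-order calculus plus monotonicity; your version costs two extra derivatives and the endpoint-comparison argument, but is equally rigorous, and your handling of the degenerate cases ($p=0$ read as $+\infty$ on the right-hand side, $\epsilon=0$ trivial) is fine.
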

    By applying Fact \ref{fct:fund_inq} with $Z=\frac{N_K(u^*)}{K}\in[0,1]$, we have
	\[ \operatorname{kl}\prt{\frac{1}{K} \mathbb{E}_{0}\left[N_K(u^*)\right], \frac{1}{K} \mathbb{E}_{u^*}[N_K(u^*)]} \leq \operatorname{KL}\left(\mathbb{P}_{0}, \mathbb{P}_{ u^*}\right). \]
	By Pinsker's inequality, it implies that
	\[ \frac{1}{K} \mathbb{E}_{u^*}[N_K(u^*)]\leq \frac{1}{K} \mathbb{E}_{0}\left[N_K(u^*)\right] + \sqrt{\frac{1}{2}\operatorname{KL}\left(\mathbb{P}_{0}, \mathbb{P}_{ u^* }\right)}. \]
	Since $\mathcal{M}_0$ and $\mathcal{M}_{u^*}$ only differs at stage $h^*$ when $(s,a)=x^*$, it follows from Fact \ref{fct:div_dec} that
	\[ \operatorname{KL}\left(\mathbb{P}_{0}, \mathbb{P}_{u^*}\right)=\mathbb{E}_{0}\left[N_{K}(u^*)\right] \mathrm{kl}(p,p+\varepsilon). \]
	By Fact \ref{fct:kl_bd}, we have $\operatorname{kl}(p,p+\epsilon)\leq\frac{\epsilon^2}{p}$ for $\epsilon\ge0$ and $p+\epsilon\in[0,\frac{1}{2}]$. Consequently,
	\begin{align*}
	&\ \ \ \ \frac{1}{K} \sum_{u^*\in[d+1:\bar{H}+d]\times\mathcal{L}\times\mathcal{A}}\mathbb{E}_{u^*}[N_K(u^*)]\\
	&\leq \frac{1}{K} \mathbb{E}_{0}\left[\sum_{u^*\in[d+1:\bar{H}+d]\times\mathcal{L}\times\mathcal{A}}N_K(u^*)\right] + \frac{\epsilon}{\sqrt{2p}}\sum_{u^*\in[d+1:\bar{H}+d]\times\mathcal{L}\times\mathcal{A}}\sqrt{\mathbb{E}_{0}\left[N_{K}(u^*)\right]}\\
	&\leq 1+\frac{\epsilon}{\sqrt{2p}}\sqrt{\bar{L}AK\bar{H}},
	\end{align*}
	where the second inequality is due to the Cauchy-Schwartz inequality and that $\sum_{u^*\in[d+1:\bar{H}+d]\times\mathcal{L}\times\mathcal{A}}N_K(u^*)=K$.\\
	It follows that 
	\begin{align*}
	\max_{u^*\in[d+1:\bar{H}+d]\times\mathcal{L}\times\mathcal{A}}\text{Regret}(\sA,\mathcal{M}_{u^*},K)\ge c_{\rho,1}H^{\prime}K\epsilon\prt{1-\frac{1}{\bar{L}A\bar{H}}-\frac{\frac{\epsilon}{\sqrt{2p}}\sqrt{\bar{L}AK\bar{H}}}{\bar{L}A\bar{H}}}.
	\end{align*}
	\paragraph{Step 3.}
	Choosing $\epsilon=\sqrt{\frac{p}{2}}(1-\frac{1}{LA\bar{H}})\sqrt{\frac{LA\bar{H}}{K}}$ maximizes the lower bound
	\begin{align*}
	\max_{u^*\in[d+1:\bar{H}+d]\times\mathcal{L}\times\mathcal{A}}\text{Regret}(\sA,\mathcal{M}_{u^*},K)\ge \frac{\sqrt{p}}{2\sqrt{2}}c_{\rho,1}H^{\prime}\prt{1-\frac{1}{\bar{L}A\bar{H}}}^2\sqrt{\bar{L}AK\bar{H}}.
	\end{align*}
	Since $S\ge6$ and $A\ge2$, we have $\bar{L}=(1-\frac{1}{A})(S-3)+\frac{1}{A}\ge\frac{S}{4}$ and $1-\frac{1}{\bar{L}A\bar{H}}\ge1-\frac{1}{\frac{6}{4}\cdot2}=\frac{2}{3}$. Choose $\bar{H}=\frac{H}{3}$ and use the assumption that $d\leq\frac{H}{3}$ to obtain that $H^{\prime}=H-d-\bar{H}\ge\frac{H}{3}$. Now we choose arbitrary $p\leq\frac{1}{4}$
	and $\epsilon=\sqrt{\frac{p}{2}}(1-\frac{1}{\bar{L}A\bar{H}})\sqrt{\frac{LA\bar{H}}{K}}<\frac{1}{2\sqrt{2}}\sqrt{\frac{\bar{L}A\bar{H}}{K}}\leq\frac{1}{4}$ if $K\ge2\bar{L}A\bar{H}$. Such choice of $p$ and $\epsilon$ guarantees the assumption of Fact \ref{fct:kl_bd}. Finally we use the fact that $\sqrt{\bar{L}AK\bar{H}}\ge\frac{1}{2\sqrt{3}}\sqrt{SAKH}$ to obtain
	\[ \max_{u^*\in[d+1:\bar{H}+d]\times\mathcal{L}\times\mathcal{A}}\text{Regret}(\sA,\mathcal{M}_{u^*},K)\ge\frac{\sqrt{p}}{27\sqrt{6}}c_{\rho}H\sqrt{SAKH}.\]

\subsection{Gap-dependent Lower Bound}
\begin{theorem}[Gap-dependent regret lower bound]
Let $S\ge2$ and $A\ge2$, and let $\{\delta_{s,a}\}_{{s,a}\in\cS\times\cA}\subset (0,\frac{H}{8})$ denote a set of gaps. For any $h\ge1$, there exists an MDP $\cM$ with $\cS=[S+2]$ and $\cA=[A]$ such that any $\alpha$-uniformly good algorithm $\texttt{alg}$ satisfies
    \begin{align*}
        \lim_{K\rightarrow\infty} \frac{\text{Regret}(\texttt{alg},\cM,K)}{\log K}= \Omega\prt{(1-\alpha)\sum_{(s,a):\Delta_1(s,a)>0} \frac{ (c_{\rho} H)^2}{\Delta_1(s,a)} }
    \end{align*}
\end{theorem}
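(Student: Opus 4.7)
The plan is to adapt the classical change-of-measure argument of Lai--Robbins, in the Simchowitz--Jamieson \cite{simchowitz2019non} template, to the risk-sensitive setting. I would start from a base MDP $\cM$ built on the same three-state-plus-tree skeleton used for the minimax lower bound, but parametrised so that for every suboptimal pair $(s,a)$ at stage $1$, the gap $\Delta_1(s,a)$ takes the prescribed value. Concretely, each leaf $(s,a)$ transitions to the good/bad absorbing states with its own probability $p_{s,a}$, calibrated (via the DRM Bellman recursion of Equation~\ref{eqt:opt}) so that $V^*_1-Q^*_1(s,a)=\Delta_1(s,a)$. For every such suboptimal $(s,a)$ I would construct an alternative MDP $\cM^{(s,a)}$ identical to $\cM$ except that the transition from $(s,a)$ at the relevant stage $h^\star$ is perturbed by an additive $\epsilon_{s,a}$ toward the good absorbing state, with $\epsilon_{s,a}$ chosen just large enough that $(s,a)$ becomes the unique optimal action in $\cM^{(s,a)}$ while all other optimality relations are preserved.

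Next I would apply the divergence-decomposition identity (Fact~\ref{fct:div_dec}) together with the data-processing/fundamental-inequality lemma (Fact~\ref{fct:fund_inq}) to the statistic $Z=(K-N_K(s,a))/K\in[0,1]$, yielding
\begin{align*}
\bE_{\cM}[N_K(s,a)]\,\mathrm{kl}\!\big(p_{s,a},\,p_{s,a}+\epsilon_{s,a}\big)
 \;\ge\; \mathrm{kl}\!\big(\bE_{\cM}[Z],\,\bE_{\cM^{(s,a)}}[Z]\big).
\end{align*}
Because the algorithm is $\alpha$-uniformly good on \emph{both} $\cM$ and $\cM^{(s,a)}$, the regret bound $C_\cdot K^\alpha$ forces $\bE_{\cM^{(s,a)}}[N_K(s,a)]=K-o(K)$ while $\bE_{\cM}[N_K(s,a)]=o(K)$, so that the right-hand side is at least $(1-\alpha)\log K - O(1)$ by a standard analysis of $\mathrm{kl}$ near the endpoints. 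Combining this with Fact~\ref{fct:kl_bd} to upper bound $\mathrm{kl}(p_{s,a},p_{s,a}+\epsilon_{s,a})\le \epsilon_{s,a}^2/p_{s,a}$ gives the Lai--Robbins-type lower bound
\begin{align*}
\bE_{\cM}[N_K(s,a)] \;\ge\; \frac{p_{s,a}\,\big((1-\alpha)\log K - O(1)\big)}{\epsilon_{s,a}^2}.
\end{align*}

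The key quantitative bridge is the choice of $\epsilon_{s,a}$. By the same reverse-Lipschitz constant $c_\rho$ that appeared in the minimax construction, a perturbation of size $\epsilon_{s,a}$ in the transition probability at stage $h^\star$ shifts the next-state value of $(s,a)$ by at least $c_\rho H \epsilon_{s,a}$; Assumption~\ref{asp:rm} simultaneously bounds the shift in every other action's value by $O(\tilde L_{1,h^\star-1} H \epsilon_{s,a})$ which is controllable. Hence taking $\epsilon_{s,a}$ of order $\Delta_1(s,a)/(c_\rho H)$ suffices to flip optimality at $(s,a)$. Substituting this into the display above, multiplying by $\Delta_1(s,a)$, and summing over $\{(s,a):\Delta_1(s,a)>0\}$ yields the claimed
\begin{align*}
\lim_{K\to\infty}\frac{\text{Regret}(\texttt{alg},\cM,K)}{\log K} \;=\; \Omega\!\left((1-\alpha)\sum_{(s,a):\Delta_1(s,a)>0}\frac{(c_\rho H)^2}{\Delta_1(s,a)}\right),
\end{align*}
after absorbing the constant $p_{s,a}$ (chosen of order one, say $1/2$) into the $\Omega$.

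The main obstacle is the bridge step: unlike the risk-neutral case, the Bellman equation (\ref{eqt:opt}) is nonlinear, so relating the value shift at the \emph{root} to the transition perturbation at stage $h^\star$ is not a one-line calculation. I would control this sandwich using the two-sided Lipschitz structure that already shows up in the minimax proof -- Assumption~\ref{asp:rm} for the upper direction and the constant $c_\rho$ for the lower direction -- and verify that along the unique path from the root to $(s,a)$ the composition of $\rho_h$'s remains both upper and lower Lipschitz, so that the gap change is $\Theta(c_\rho H \epsilon_{s,a})$ rather than merely $O(\cdot)$. Once this sandwich is in place, the remainder of the proof is a direct transcription of the Simchowitz--Jamieson argument.
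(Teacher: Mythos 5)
Your proposal is correct and follows essentially the same route as the paper: a per-pair change-of-measure argument in the Simchowitz--Jamieson template, using the divergence decomposition (Fact~\ref{fct:div_dec}), the fundamental inequality (Fact~\ref{fct:fund_inq}) applied to a $[0,1]$-valued visit/choice statistic, the kl upper bound (Fact~\ref{fct:kl_bd}), $\alpha$-uniform goodness on both the base and the perturbed instance to push the two endpoints of the kl toward $0$ and $1$, and the reverse-Lipschitz constant $c_{\rho}$ to translate a transition perturbation of size $\epsilon_{s,a}$ into a value gap of order $c_{\rho}H\epsilon_{s,a}$. The only substantive difference is the hard instance: you recycle the tree-based minimax construction with per-leaf probabilities $p_{s,a}$, whereas the paper uses a flatter instance in which the initial state is drawn uniformly from $[S]$, each $(s,a)$ at a single stage transitions to one of two absorbing states with deterministic continuation values $H-h$ and $(H-h)/2$, and the base probability is $3/4-\tilde{\delta}_{s,a}$. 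This choice dissolves what you identify as the "main obstacle": because the only non-degenerate application of $\rho_h$ is to a two-point distribution at that single stage (and $\rho_h$ of a point mass is the point itself, by Assumption~\ref{asp:rm}), no composition of nonlinear Bellman operators needs to be controlled, and the two-sided sandwich you propose reduces to one line. Your tree version would in fact also collapse the same way, since the tree transitions are deterministic; but the paper's exogenous uniform initial state additionally makes the regret decomposition $\text{Regret}=\sum_{s\in[S],a}\bE\brk{N^K_1(s,a)}\Delta_1(s,a)$ immediate and forces every state $s$ to be faced a $1/S$ fraction of the time, which cleanly justifies summing the per-pair bounds over all suboptimal $(s,a)$. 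Neither difference affects correctness; your sketch would go through once the instance is pinned down.
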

We first fix an arbitrary $\alpha$-uniformly good algorithm $\sA$. For simplicity, we may drop  $\sA$ from the notations, e.g., $\bE_{\cM}=\bE_{\cM,\sA}$. 
\paragraph{Step 1: construction of the hard instance.}
Our construction mirrors the lower bounds in . However, their instance is suited for homogeneous/stationary MDP. Define an MDP $\cM$ with $\cS=\{0\}\cup[S+2]$ and $\cA=[A]$. Without loss of generality, we consider the case $H\ge2$. Otherwise, it reduces to a bandit setting. We first specify the transition kernels. For the convenience of analysis, we introduce $s_0=0$ at stage $h=0$ with $P_0(s|0)=\frac{1}{S}$ for any $s\in[S]$. In other words, the initial state $s_1$ is uniformly distributed over $[S]$. For $(s,a)\in[S]\times[A]$, let
\[ P_1(S+1|s,a)=\frac{3}{4}-\frac{2\delta_{s,a}}{H-1}=:\frac{3}{4}-\tilde{\delta}_{s,a}, \ P_1(S+1|s,a)=1-P_1(S+1|s,a).  \]
Thus at stage 1, each state $s\in[S]$ can only transit to either state $S+1$ or $S+2$. Furthermore, we set state $S+1$ and $S+2$ to be absorbing state, i.e.
\[ P_h(S+1|S+1,a)=P_h(S+2|S+2,a)=1,\ \forall h\in[2:H-1], a\in[A]. \]
Finally, we set the reward functions as 
\[R(x, a):= \begin{cases} 1 & (x, a)=(S+1,1) \\ \frac{1}{2} & (x, a)=(S+2,1) \\ 0 & \text{otherwise}.\end{cases}\]
We assume that there exists a unique action $\pi^*(s)$ for each $s\in[S]$ such that $\delta_{s,\pi^*(s)}=0$. We will see that such action is the optimal action. Note that  $S+1$ and $S+2$ are absorbing states and the only two rewarding states, hence $V^*_h(S+1)=H+1-h$ and $V^*_h(S+2)=\frac{H+1-h}{2}$ for $h\in[2:H]$. It follows that 
for $x\in[S]$,
\begin{align*}
    V^*_h(s) &= 0 + \rho_h(V^*_{h+1}, P_h(s,\pi^*_h(s)))= \rho_h\prt{\prt{H-h,\frac{H-h}{2}}, \prt{\frac{3}{4},\frac{1}{4}}}, \\
    Q^*_h(s,a) &= 0 + \rho_h(V^*_{h+1}, P_h(s,a)) = \rho_h\prt{\prt{H-h,\frac{H-h}{2}}, \prt{\frac{3}{4}-\tilde{\delta}_{s,a},\frac{1}{4}+\tilde{\delta}_{s,a}}},
\end{align*}
which implies that
\begin{align*}
    \Delta_h(s,a) &=  \rho_h\prt{\prt{H-h,\frac{H-h}{2}}, \prt{\frac{3}{4},\frac{1}{4}}} - \rho_h\prt{\prt{H-h,\frac{H-h}{2}}, \prt{\frac{3}{4}-\tilde{\delta}_{s,a},\frac{1}{4}+\tilde{\delta}_{s,a}}} \\
                &\ge c_{\rho} \norm{ \prt{\prt{H-h,\frac{H-h}{2}}, \prt{\frac{3}{4},\frac{1}{4}}} - \prt{\prt{H-h,\frac{H-h}{2}}, \prt{\frac{3}{4}-\tilde{\delta}_{s,a},\frac{1}{4}+\tilde{\delta}_{s,a}} } }_1 \\
                &= c_{\rho} \frac{H-h}{2} \tilde{\delta}_{s,a}.
\end{align*}
In particular, we have $\Delta_1(s,a) \ge c_{\rho} \frac{H-1}{2} \tilde{\delta}_{s,a}=c_{\rho,1} \delta_{s,a}$. Note that $\Delta_1(s, a)$ is only defined for $s\in[S]$.
\paragraph{Step 2: regret decomposition.}
The regret for algorithm $\sA$ over MDP $\cM$ can be decomposed as follows 
\begin{align*}
    \text{Regret}(\sA,\cM,K) &= \bE \brk{\sum^K_{k=1} V^*_1(s^k_1)-V^{\pi^k}_1(s^k_1)} \\
    &= \bE \brk{\sum^K_{k=1} \sum_{h=1}^H \Delta_h(s^k_h,a^k_h)} \\
    & = \bE \brk{\sum^K_{k=1} \sum_{h=1}^H \sum_{s,a} \bI\cbrk{s^k_h=s,a^k_h=a} \Delta_h(s^k_h,a^k_h)}\\
    &=\sum_{h=1}^H\sum_{s,a}\bE \brk{\sum^K_{k=1}  \bI\cbrk{s^k_h=s,a^k_h=a}}\Delta_h(s,a) \\
    &= \sum_{h=1}^H\sum_{s,a}\bE \brk{N^K_h(s,a)}\Delta_h(s,a) \\
    &= \sum_{h=1}^H\sum_{s\in[S],a}\bE \brk{N^K_h(s,a)}\Delta_h(s,a).
\end{align*}
where the last equality is due to that $\Delta_h(s,a)$ is only defined over $[S]$. For our hard instance, observe that $\bI\cbrk{s^k_h=s}=0$ for $s\in[S]$ and $h\neq1$. Therefore $N^K_h(s,a)=0$ for $s\in[S]$ and $h\neq1$, which implies 
\[ \text{Regret}(\sA,\cM,K)= \sum_{s\in[S],a}\bE \brk{N^K_1(s,a)}\Delta_1(s,a).\]
We claim that for any $(s,a)$ such that $s\in[S]$ and $\Delta_1(s,a)>0$, and any $K\ge K_0(\cM)$, it holds that
\[ \bE_{\sA,\cM}\prt{N^K_1(s,a)} \ge \Omega \prt{\frac{1}{\tilde{\delta}_{s,a}^2}\log K} =  \Omega \prt{\frac{(c_{\rho,1} H)^2}{\Delta_1(s,a)^2}\log K}.\]
It follows that
\[ \text{Regret}(\sA,\cM,K)\ge \Omega \prt { \sum_{s\in[S],a:\Delta_1(s,a)>0} \frac{(c_{\rho,1} H)^2}{\Delta_1(s,a)}\log K}.\]
\paragraph{Step 3: bounding $\bE \brk{N^K_h(s,a)}$.}
Observe that $\tilde{\delta}_{s,a}=\frac{2}{H-1}\delta_{s,a}\in(0,\frac{1}{2})$ due to the assumption that $\delta_{s,a}\in(0,\frac{H}{8})$. By Fact \ref{fct:fund_inq} and Fact \ref{fct:div_dec}, let $Z$ be a $\cF^K$-measurable random variable, then it holds that
\[ \mathrm{kl}\left(\mathbb{E}_{\cM}[Z], \mathbb{E}_{\cM'}[Z]\right) \leq  \sum_{(s,a,h) \in\mathcal{S}\times\mathcal{A}\times[H-1]} \mathbb{E}_{\mathcal{M}}\left[N^{K}_h(s,a)\right] \mathrm{KL}\left(P_{h}(s, a), P_{h}^{\prime}(s, a)\right).\]
Now fix an arbitrary $(s,a)\in[S]\times[A]$. Define an MDP $\cM'_{s,a}$ which differs from $M$ only in that $P_1(S+1|s,a)=\frac{3}{4}+\eta$, where $\eta=\min\{\frac{1}{8},\tilde{\delta}_{s,a}\}$. For simplicity, we write $\cM'=\cM'_{s,a}$. The following holds
\[  \mathrm{kl}\left(\mathbb{E}_{\cM}[Z], \mathbb{E}_{\cM'}[Z]\right) \leq  \mathbb{E}_{\mathcal{M}}\left[N^{K}_1(s,a)\right] \mathrm{KL}\left(P_{1}(s, a), P_{1}^{\prime}(s, a)\right)=\mathbb{E}_{\mathcal{M}}\left[N^{K}_1(s,a)\right] \mathrm{kl}\left(\frac{3}{4}-\Tilde{\delta}_{s,a},\frac{3}{4}+\eta\right).   \]
Observe that $\frac{1}{4}<\frac{3}{4}-\Tilde{\delta}_{s,a}<\frac{3}{4}+\eta<\frac{7}{8}$ and $\eta+\Tilde{\delta}_{s,a}\leq2\Tilde{\delta}_{s,a}$, it follows from Fact \ref{fct:kl_bd} that
\[ \mathrm{kl}\left(\frac{3}{4}-\Tilde{\delta}_{s,a},\frac{3}{4}+\eta\right) \leq \frac{(\eta+\Tilde{\delta}_{s,a})^2}{2(\frac{3}{4}-\Tilde{\delta}_{s,a})(1-\frac{3}{4}-\eta)}<64 \Tilde{\delta}_{s,a}^2. \]
Now we have 
\[ \mathbb{E}_{\mathcal{M}}\left[N^{K}_1(s,a)\right] \ge \frac{1}{64 \Tilde{\delta}_{s,a}^2}\mathrm{kl}\left(\mathbb{E}_{\cM}[Z], \mathbb{E}_{\cM'}[Z]\right) \ge \frac{(c_{\rho,1}(H-1))^2}{256 \Delta_{1}^2(s,a)}\mathrm{kl}\left(\mathbb{E}_{\cM}[Z], \mathbb{E}_{\cM'}[Z]\right). \]
We set $Z = \sum_{k=1}^K \frac{\bI\{\pi^k_1(s)=a\}}{K}\in[0,1]$. Note that $Z$ is indeed $\cF^K$-measurable random variable since $\pi^k$ is $\cF^K$-measurable and $(s,a)$ is fixed. Denote by $\Delta'$ the gap for MDP $\cM'$. Observe that for $a'\neq a$, 
\begin{align*}
    \Delta_1'(s,a') &= \rho_h\prt{ \prt{H-1,\frac{H-1}{2}}, \prt{\frac{3}{4}+\eta,\frac{1}{4}-\eta} } - \rho_h\prt{ \prt{H-1,\frac{H-1}{2}}, \prt{\frac{3}{4} - \Tilde{\delta}_{s,a'},\frac{1}{4}+\Tilde{\delta}_{s,a'}} }\\
    &\ge c_{\rho} \frac{H-1}{2} (\eta+\Tilde{\delta}_{s,a'}) \ge c_{\rho} \frac{H-1}{2} \eta.
\end{align*}
Under MDP $\cM'$, action $a$ is the unique optimal action for $s$, thus
\begin{align*}
    \re  ((\sA,\cM',K) &\ge \sum_{a'\neq a} \bE_{\cM'}\brk{N^K_1(s,a')} \Delta'_1(s,a') \\
    &\ge c_{\rho} \frac{H-1}{2} \eta \sum_{a'\neq a} \bE_{\cM'}\brk{N^K_1(s,a')} \\
    &= c_{\rho} \frac{H-1}{2} \eta \bE_{\cM'}\brk{\sum_{k=1}^K\sum_{a'\neq a} \bI\prt{s^k_1=s,\pi^k_1(s^k_1)=a}}\\
    &=c_{\rho} \frac{H-1}{2} \eta \bE_{\cM'}\brk{\sum_{k=1}^K \prt{\bI\prt{s^k_1=s}-\bI\prt{s^k_1=s,\pi^k_1(s^k_1)=a}} } \\
    &= c_{\rho} \frac{H-1}{2} \eta \prt{ \frac{K}{S}- \bE_{\cM'}\brk{\sum_{k=1}^K\bI\prt{s^k_1=s} \bI\prt{\pi^k_1(s)=a} } } \\
    &= c_{\rho} \frac{H-1}{2} \eta \prt{ \frac{K}{S}- \sum_{k=1}^K\bE_{\cM'}\brk{\bI\prt{s^k_1=s}}\bE_{\cM'}\brk{ \bI\prt{\pi^k_1(s)=a} } } \\
    &= c_{\rho} \frac{H-1}{2} \frac{K}{S} \prt{ 1-  \bE_{\cM'}\brk{ Z } },
\end{align*}
where the second to the last equality is due to the dependence between $s^k_1$ and $\pi^k_1$. Since $\sA$ is $\alpha$-uniformly good algorithm, there exists $C_{\cM'}>0$ such that
\[ c_{\rho} \frac{H-1}{2} \frac{K}{S} \prt{ 1-  \bE_{\cM'}\brk{ Z } } \leq \re  ((\sA,\cM',K) \leq C_{\cM'} K^{\alpha}, \]
implying
\[ 1-  \bE_{\cM'}\brk{ Z } \leq \frac{2C_{\cM'}S}{c_{\rho,1}(H-1)K^{1-\alpha}}.\]
We can also get
\begin{align*}
    C_{\cM} K^{\alpha} \ge \re  ((\sA,\cM,K) &\ge \bE_{\cM}\brk{N^K_1(s,a)}\Delta_1(s,a) \ge \frac{K\Delta_1(s,a)}{S}\bE_{\cM}\brk{Z},
\end{align*}
which implies that $\bE_{\cM}\brk{Z} \leq \frac{C_{\cM}S}{\Delta_1(s,a)K^{1-\alpha}}$. By [], 
\[ \mathrm{kl}(x,y) \ge (1-x)\log \frac{1}{1-y}-\log 2. \]
It follows that 
\[ \mathrm{kl}\prt{\bE_{\cM}\brk{Z},\bE_{\cM'}\brk{Z} }\ge \prt{1-\frac{C_{\cM}S}{\Delta_1(s,a)K^{1-\alpha}}} \prt{(1-\alpha)\log K -\log \frac{2C_{\cM'}S}{c_{\rho,1}(H-1)}} -\log2.  \]
\paragraph{Step 4.} We can also prove for the case $h\neq 1$ by modifying the transition kernels for state 0. For $h\neq1$, we set the transition kernels as
\[  P_l(0|0,a)=1, \forall l\in[0:h-2], \forall a\in[A],\ P_{h-1}(s|0,a)=\frac{1}{S}, \forall s\in [S],\forall a\in[A].\]
In other words, the MDP is randomly initialized over $[S]$ at stage $h$ rather than stage 1. For $(s,a)\in[S]\times[A]$, let
\[ P_h(S+1|s,a)=\frac{3}{4}-\frac{2\delta_{s,a}}{H-1}=:\frac{3}{4}-\tilde{\delta}_{s,a}, \ P_h(S+1|s,a)=1-P_h(S+1|s,a).  \]
Finally, we still set $S+1, S+2$ to be absorbing states. Using similar arguments concludes the proof.

\end{document}